\newtheorem*{rep@theorem}{\rep@title}
\newcommand{\newreptheorem}[2]{%
\newenvironment{rep#1}[1]{%
 \def\rep@title{#2 \ref{##1}}%
 \begin{rep@theorem}}%
 {\end{rep@theorem}}}
\newtheorem{definition}{Definition}[section]
\newtheorem*{theorem*}{Theorem}
\newtheorem*{proposition*}{Proposition}
\newtheorem{lemma}[definition]{Lemma}
\newcommand{\RNum}[1]{\uppercase\expandafter{\romannumeral #1\relax}}
\newcommand{\mb}[1]{\mathbf{#1}}
\newcommand{\G}{\mathcal{G}}
\newcommand{\N}{\mathcal{N}}
\newcommand{\E}{\mathbb{E}}
\newcommand{\bx}{\mathbf{x}}
\newcommand{\bI}{\mathbf{I}}
\newcommand{\bmu}{\pmb{\mu}}
\newcommand{\btheta}{\pmb{\theta}}
\newcommand{\bSigma}{\mb{\Sigma}}
\newcommand{\vertiii}[1]{{\left\vert\kern-0.25ex\left\vert\kern-0.25ex\left\vert #1
    \right\vert\kern-0.25ex\right\vert\kern-0.25ex\right\vert}}
\newcommand{\vertiiii}[1]{{\vert\kern-0.25ex\vert\kern-0.25ex\vert #1
    \vert\kern-0.25ex\vert\kern-0.25ex\vert}}
\newcommand\norm[1]{\left\lVert#1\right\rVert} 
\DeclareMathOperator*{\argmin}{\arg\!\min}
\DeclareMathOperator*{\argmax}{\arg\!\max}
\DeclareMathOperator*{\locargmax}{local-\!\arg\!\max}
\newcommand{\xhdr}[1]{{\noindent\bfseries #1}.}
\newcommand{\cut}[1]{}
\newcommand{\removelatexerror}{\let\@latex@error\@gobble}
\def\eqref#1{Eq.~\ref{#1}}
\def\floor#1{\lfloor #1 \rfloor}
\def\1{\bm{1}}
\DeclareMathAlphabet{\mathsfit}{\encodingdefault}{\sfdefault}{m}{sl}
\SetMathAlphabet{\mathsfit}{bold}{\encodingdefault}{\sfdefault}{bx}{n}
\def\gD{{\mathcal{D}}}
\def\gG{{\mathcal{G}}}
\def\gH{{\mathcal{H}}}
\def\gJ{{\mathcal{J}}}
\def\gN{{\mathcal{N}}}
\def\gU{{\mathcal{U}}}
\def\sP{{\mathbb{P}}}
\def\sR{{\mathbb{R}}}
\newcommand{\pdata}{p_{\rm{data}}}
\newcommand{\indep}{\perp \!\!\! \perp}
\newcommand{\ie}{{\em i.e.,~}}
\newcommand{\eg}{{\em e.g.,~}}
\setlist{topsep=0pt, leftmargin=*}
\newcommand{\restatableeq}[3]{\label{#3}#2\gdef#1{#2\tag{\ref{#3}}}}
\newlist{lemmaenum}{enumerate}{1} 
\setlist[lemmaenum]{
  label=\emph{\roman*)},
  ref=\thedefinition~\emph{\roman*)}}
\newlist{thmenum}{enumerate}{1} 
\setlist[thmenum]{label=\emph{\roman*)}, ref=\thetheorem~\emph{\roman*)}}
\Crefname{lemmaenumi}{lemma}{Lemmas}
\Crefname{lemma}{Lemma}{Lemmas}
\Crefname{assumption}{Assumption}{Assumptions}
\def\pdata{p_{\mathrm{data}}}
\def\pdatahat{\hat{p}_{\mathrm{data}}}
\title{
On the Stability of Iterative Retraining of Generative Models on their own Data}
\author{Quentin Bertrand\thanks{Corresponding authors: \texttt{quentin.bertrand@mila.quebec}}  \\
 Université de Montréal and Mila \\
 \And Avishek (Joey) Bose \\
 McGill and Mila  \\
 \AND
 Alexandre Duplessis
 \\
 ENS, PSL  \\
 \And Marco Jiralerspong \\
 Université de Montréal and Mila \\
 \And
  Gauthier Gidel\thanks{Canada Cifar AI Chair}\\
  Université de Montréal and Mila\\
}
\begin{document}

\maketitle

\begin{abstract}
    \looseness-1
Deep generative models have made tremendous progress in modeling complex data, often exhibiting generation quality that surpasses a typical human's ability to discern the authenticity of samples. Undeniably, a key driver of this success is enabled by the massive amounts of web-scale data consumed by these models. Due to these models' striking performance and ease of availability, the web will inevitably be increasingly populated with synthetic content. Such a fact directly implies that future iterations of generative models will be trained on both clean and artificially generated data from past models. In this paper, we develop a framework to rigorously study the impact of training generative models on mixed datasets---from classical training on real data to self-consuming generative models trained on purely synthetic data. We first prove the stability of iterative training under the condition that the initial generative models approximate the data distribution well enough and the proportion of clean training data (w.r.t. synthetic data) is large enough. We empirically validate our theory on both synthetic and natural images by iteratively training normalizing flows and state-of-the-art diffusion models on CIFAR10 and FFHQ.

\end{abstract}

\def \figsize {0.17}
\def \numberfig {91} 
\vspace{-.3em}
\begin{figure}[H]
    \centering 
    \rotatebox[origin=l]{90}{\makebox[6em]{Fully synth.}}
    \begin{subfigure}{\figsize \textwidth}
        \includegraphics[width= \linewidth]{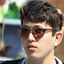}%
    \end{subfigure}
    \begin{subfigure}{\figsize \textwidth}
        \includegraphics[width= \linewidth]{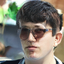}%
    \end{subfigure}
    \begin{subfigure}{\figsize \textwidth}
        \includegraphics[width= \linewidth]{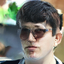}%
    \end{subfigure}
    \begin{subfigure}{\figsize \textwidth}
        \includegraphics[width= \linewidth]{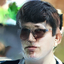}%
    \end{subfigure}
    \begin{subfigure}{\figsize \textwidth}
        \includegraphics[width= \linewidth]{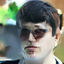}%
    \end{subfigure}

    \rotatebox[origin=l]{90}{\makebox[6em]{Half synth.}}
    \begin{subfigure}{\figsize \textwidth}
        \includegraphics[width= \linewidth]{figures/samples_ffhq_edm_fully_synthetic/0/0000\numberfig.png}%
    \end{subfigure}
    \begin{subfigure}{\figsize \textwidth}
        \includegraphics[width= \linewidth]{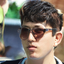}%
    \end{subfigure}
    \begin{subfigure}{\figsize \textwidth}
        \includegraphics[width= \linewidth]{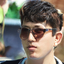}%
    \end{subfigure}
    \begin{subfigure}{\figsize \textwidth}
        \includegraphics[width= \linewidth]{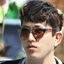}%
    \end{subfigure}
    \begin{subfigure}{\figsize \textwidth}
        \includegraphics[width= \linewidth]{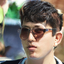}%
    \end{subfigure}

    \rotatebox[origin=l]{90}{\makebox[8em]{No synth.}}
    \begin{subfigure}{\figsize \textwidth}
        \includegraphics[width= \linewidth]{figures/samples_ffhq_edm_fully_synthetic/0/0000\numberfig.png}%
        \caption{$0$ retrain.}
    \end{subfigure}
    \begin{subfigure}{\figsize \textwidth}
        \includegraphics[width= \linewidth]{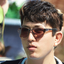}%
        \caption{$5$ retrain.}
    \end{subfigure}
    \begin{subfigure}{\figsize \textwidth}
        \includegraphics[width= \linewidth]{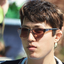}%
        \caption{$10$ retrain.}
    \end{subfigure}
    \begin{subfigure}{\figsize \textwidth}
        \includegraphics[width= \linewidth]{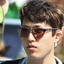}%
        \caption{$15$ retrain.}
    \end{subfigure}
    \begin{subfigure}{\figsize \textwidth}
        \includegraphics[width= \linewidth]{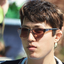}%
        \caption{$20$ retrain.}
    \end{subfigure}
    \vspace{-5pt}
    \caption{
        \looseness=-1
        \textbf{Samples generated from EDM trained on the FFHQ dataset.}
        As observed in \cite{shumailov2023curse,alemohammad2023selfconsuming}, iteratively retraining the model exclusively on its own generated data yields degradation of the image (top row).
        On the other hand, retraining on a mix of half real and half synthetic data (middle) yields a similar quality as retraining on real data (bottom).
    }
    \label{fig_intro_edm_samples}
\end{figure}
\section{Introduction}
\label{sec:introduction}
\vspace{-.2em}
\looseness=-1
One of the central promises of generative modeling is the ability to generate unlimited synthetic data indistinguishable from the original data distribution. Access to such high-fidelity synthetic data has been one of the key ingredients powering numerous machine learning use cases such as data augmentation in supervised learning~\citep{Antoniou2017}, de novo protein design~\citep{watson2023novo}, and text-generated responses to prompts that enable a suite of natural language processing applications~\citep{Dong2022survey_in_context}.
Indeed, the scale and power of these models---in particular large language models~\citep{brown2020language,OpenAI2023GPT4TR,chowdhery2022palm,anil2023palm}---have even led to significant progress, often exceeding expert expectations~\citep{AIForecast}, in challenging domains like mathematical reasoning,
or code assistance~\citep{bubeck2023sparks}.

\looseness=-1
In addition to the dramatic increase in computational resources, a key driver of progress has been the amount and the availability of high-quality training data~\citep{kaplan2020scaling}.
As a result, current LLMs and text-conditioned diffusion models like DALL-E~\citep{ramesh2021zero}, Stable Diffusion~\citep{StablediffXL}, Midjourney~\citep{MidJourney} are trained on web-scale datasets that already potentially exhaust all the available clean data on the internet. Furthermore, a growing proportion of synthetic data from existing deep generative models continues to populate the web---to the point that even existing web-scale datasets are known to contain generated content~\citep{schuhmann2022laion}. While detecting generated content in public datasets is a challenging problem in its own right~\citep{mitchell2023detectgpt,sadasivan2023can}, practitioners training future iterations of deep generative models must first contend with the reality that their training datasets \emph{already} contain synthetic data. This raises the following fundamental question:

\hangindent=0.4cm
\hspace*{0.1cm} \
\textit{Does training on mixed datasets of finite real data and self-generated data alter performance?}

\xhdr{Our Contributions}
\looseness=-1
We study the iterative retraining of deep generative models on mixed datasets composed of clean real data and synthetic data generated from the current generative model itself. In particular, we propose a theoretical framework for iterative retraining of generative models that builds upon the standard maximum likelihood objective, extending it to retraining on mixed datasets. It encompasses many generative models of interest such as VAEs~\citep{kingma2019introduction}, normalizing flows~\citep{Rezende2015}, and state-of-the-art diffusion models~\citep{Song2021}.

\looseness=-1
Using our theoretical framework, we show the stability of iterative retraining of deep generative models by proving the existence of a fixed point (\Cref{theorem_stability}) under the following conditions: 1.) The first iteration generative model is sufficiently ``well-trained" and 2.) Each retraining iteration keeps a high enough proportion of the original clean data. Moreover, we provide theoretical (\Cref{prop:model_collapse}) and empirical (\Cref{fig_intro_edm_samples}) evidence that failure to satisfy these conditions can lead to model collapse (\ie iterative retraining leads the generative model to collapse to outputting a single point).
We then prove in \Cref{theorem_partial_general} that, with high probability, iterative retraining remains within a \emph{neighborhood} of the optimal generative model in parameter space when working in the stable regime. Finally, we substantiate our theory on both synthetic datasets and high dimensional natural images on a broad category of models that include continuous normalizing flows ~\citep{chen2018neural} constructed using a conditional flow-matching objective~(OTCFM, \citealt{tong2023improving}), Denoising Diffusion Probabilistic Models (DDPM, \citealt{Ho2020}) and Elucidating Diffusion Models (EDM, \citealt{Karras2022}).

We summarize the main contributions of this paper below:
\begin{itemize}[noitemsep,topsep=0pt,parsep=2pt,partopsep=0pt,label={\large\textbullet},leftmargin=20pt]
    \item We provide a theoretical framework to study the stability of the iterative retraining of likelihood-based generative models on mixed datasets containing self-generated and real data.
    \item Under mild regularity condition on the density of the considered generative models, we prove the stability of iterative retraining of generative models under the condition that the initial generative model is close enough to the real data distribution and that the proportion of real data is sufficiently large (\Cref{theorem_stability,theorem_partial_general}) during each iterative retraining procedure.
    \item We empirically validate our theory through iterative retraining on CIFAR10 and FFHQ using powerful diffusion models in OTCFM, DDPM, and EDM.
\end{itemize}

\section{Iterative Retraining of Generative Models}
\label{sec:prelim}
%
\begin{algorithm}[tb]
    \caption{Iterative Retraining of Generative Models
    }
    \label{alg:iterative_training}
    \SetKwInOut{Input}{input}
    \SetKwInOut{Init}{init}
    \SetKwInOut{Parameter}{param}
    \Input{
        $\mathcal{D}_{\mathrm{real}}:= \{ \bx_i\}_{i=1}^n$, $\mathcal{A}$
        \tcp*[l]{True data, learning procedure (\eg~\eqref{eq:def_G})}
        }
    \Parameter{
        $T$, $\lambda$
        \tcp*[l]{Number of retraining iterations, proportion of gen. data}
        }

    $p_{\btheta_0} = \mathcal{A}(\mathcal{D}_{\mathrm{real}})$
    \tcp*[l]{Learn generative model on true data}

    \For{$t$ in $1, \dots, T$}
    {
        $\mathcal{D}_{\mathrm{synth}}
        =
        \{ \tilde{\bx}_i \}_{i=1}^{\floor{\lambda \cdot n}}$,
        with
        $\tilde{\bx}_i \sim p_{\btheta_{t-1}}$
        \tcp*[l]{Sample $\floor{\lambda \cdot n}$ synthetic data points}

        $p_{\btheta_t} = \mathcal{A}(\mathcal{D}_{\mathrm{real}} \cup \mathcal{D}_{\mathrm{synth}})$
        \tcp*[l]{Learn generative model on synthetic and true data}
    }
\Return{$p_{\btheta_T}$}
\end{algorithm}
\xhdr{Notation and convention}
\looseness=-1
Let $\pdata \in \sP(\mathbb{R}^d)$ be a probability distribution over $\mathbb{R}^d$.
An empirical distribution sampled from $p_{\text{data}}$ in the form of a training dataset, $\gD_{\text{real}} = \{ \bx_i\}_{i=1}^n$, is denoted $\pdatahat$. We write $p_{\btheta}$ to indicate a generative model with parameters $\btheta$ while $\Theta$ is the entire parameter space. A synthetic dataset sampled from a generative model is written as $\gD_{\text{synth}}$.
The optimal set of parameters and its associated distribution are denoted as $\btheta^\star$ and $p_{\btheta^\star}$.
We use subscripts, \eg $\btheta_t, \gD_t$, to denote the number of iterative retraining steps, and superscripts to indicate the number of samples used, \eg $\btheta^n_t$ uses $n$ training samples at iteration $t$. Moreover, we use the convention that $\btheta_0$ is the initial generative model $p_{\btheta_0}$ trained on $\pdatahat$. The asymptotic convergence notation $u_t = \tilde O (\rho^t)$ where $\rho>0$, refers to $\forall \epsilon>0,\, \exists C>0$ such that $u_t \leq C (\rho+\epsilon)^t, \,\forall t\geq 0$, \ie $u_t$ almost converges at a linear rate $\rho^t$.
The Euclidean norm of vectors and the spectral norm of matrices is denoted $\| \|_2$.
Let $\mathcal{S}_+$ be the set of symmetric positive definite matrices, for $S_1$ and $S_2 \in \mathcal{S}_{+}$, $S_1 \succeq S_2$  if $S_1 - S_2 \succeq 0$.
%
\subsection{Preliminaries}
\looseness=-1
The goal of generative modeling is to approximate $\pdata$ using a parametric model $p_{\btheta}$. For likelihood-based models, this corresponds to maximizing the likelihood of $\pdatahat$, which is an empirical sampling of $n$ data points from $\pdata$ and leads to the following optimization objective:
\begin{equation}
    \label{eq:base_obj}
    \Theta_0^n
    :=
    \argmax_{\btheta'\in \Theta}
    \mathbb{E}_{\bx \sim \pdatahat }
    [\log p_{\btheta'}(\bx)]
    \enspace.
\end{equation}
Note that the $\argmax$ operator defines a set of equally good solutions to \Cref{eq:base_obj}. 
We follow the convention of selecting $\btheta_0^n = \argmin_{\btheta' \in \Theta_0} \norm{\btheta'}$ as the canonical choice.
After obtaining the trained generative model $p_{\btheta_0}$, we are free to sample from it and create a synthetic dataset
$\mathcal{D}_{\rm synth} =\{ \tilde{\bx}^0_i \}_{i=1}^{\floor{\lambda \cdot n}}$,
where $\tilde{\bx}^0_i \sim p_{\btheta_0}$ and $\lambda>0$ is a hyperparameter that controls the relative size of $\mathcal{D}_{\rm synth}$ with respect to $\mathcal{D}_{\rm real}$. We can then update our initial generative model on any combination of real data $\pdatahat$ and synthetic data drawn from $p_{\btheta_0}$ to obtain $p_{\btheta_1}$ and repeat this process ad infinitum. We term this procedure \emph{iterative retraining} of a generative model (\Cref{alg:iterative_training}). 
In the context of maximum likelihood, the \emph{iterative retraining} update can be formally written as:
\begin{equation}
\label{eq:def_G}
    \Theta_{t+1}^n :=
    \locargmax_{\btheta'\in \Theta}
    \left[
         \mathbb{E}_{\tilde{\bx} \sim \pdatahat} [\log p_{\btheta'}(\tilde{\bx})]
        +
        \lambda \mathbb{E}_{\bx\sim \hat p_{\btheta_t}} [\log p_{\btheta'}(\bx)]
    \right].
\end{equation}
The notation $\locargmax$ corresponds to any local maximizer of the loss considered.\footnote{Given a function $f$ continuously twice differentiable, a sufficient condition for $\btheta$ to locally maximize $f$ is $\nabla f(\btheta) =0$ and $\nabla^2 f(\btheta) \prec 0$.}
The formulation~\Cref{eq:def_G} precisely corresponds to training a model on the empirical dataset $\gD_{\text{real}} \bigcup \gD_{\text{synth}}$ of size $\floor{n+\lambda\cdot  n}$.
Note that $\Theta_{t+1}^n$ may be a set; we thus need to provide a tie-breaking rule to pick $\btheta_{t+1}^n \in \Theta_{t+1}^n$. Since most training methods are \emph{local search method}, we will select the \emph{closest} solution to the previous iterate $\btheta_t^n$,\footnote{For simplicity of presentation, we assume that such a $\btheta^n_{t+1}$ is uniquely defined which will be the case under the assumptions of~\Cref{theorem_stability,theorem_partial_general}.}
\begin{align}
\label{eq:tie_break}
    \btheta^n_{t+1} = \gG_\lambda^n(\btheta_t^n) := \argmin_{\btheta' \in \Theta^n_{t+1}}
    \norm{\btheta' -\btheta^n_t}
    \enspace.
\end{align}
%
Setting $\lambda=0$ corresponds to only sampling from $\pdatahat$ and is thus equivalent to training $p_{\btheta_0}$ for more iterations.
The main focus of this work is studying the setting where $\lambda > 0$ and, more specifically, analyzing the discrete sequence of learned parameters $\btheta_0^n \to \btheta_1^n \to \dots \to \btheta_T^n$ and thus their induced distributions and characterize the behavior of the generative model in reference to $\pdata$. As a result, this means that iterative retraining at timestep $t+1$---\ie $\btheta^n_{t+1}$---resumes from the previous iterate $\btheta^n_{t}$. In practice, this amounts to \emph{finetuning} the model rather than retraining from scratch.
\subsection{Warm-up with Multivariate Gaussian}
\label{sub:gaussian_case}
As a warm-up, we consider a multivariate Gaussian with parameters
$\gN_{\pmb{\mu}, \mb{\Sigma}}$
obtained from being fit on $\pdatahat \sim \gN_{\pmb{\mu}_0, \mb{\Sigma}_0}$.
This example corresponds to a special case of
~\Cref{eq:def_G} where $\lambda \to \infty$ (\ie we do not reuse any training data).
We analyze the evolution of the model parameters when iteratively retraining on its own samples.
For each retraining, a multivariate Gaussian model is learned from finite samples.
In the case of \emph{a single} multivariate Gaussian, the update of the mean and covariance parameters has a closed-form formula.
For all $t \geq 0$:
\begin{align}\label{sampling_step}
    &\text{Sampling step:}
    \quad
    \begin{cases}
        \bx_t^j
        =
        \pmb{\mu}_{t} + \sqrt{\mb{\Sigma}_{t}} \mb{z}_t^{j}, \text{ with }
        \mb{z}_{t}^j
        \overset{\text{i.i.d.}}{\sim }
        \N_{0,\mb{I}},\, 1\leq j\leq n
        \enspace,
    \end{cases}
    \\
     & \label{learning_step}
     \text{Learning step:}
     \quad
     \begin{cases}
        \pmb{\mu}_{t+1}
        & =
        \frac{1}{n} \sum\limits_{j}{\bx_t^j}
        \enspace, \, \,
        \mb{\Sigma}_{t+1}
        =
        \frac{1}{n - 1}
        \sum\limits_{j}{\left( \bx_t^j - \pmb{\mu}_{t+1} \right)\left( \bx_t^j - \pmb{\mu}_{t+1} \right)^\top} .
    \end{cases}
\end{align}
\looseness=-1
\Cref{prop:model_collapse} states that retraining a multivariate Gaussian only on its generated data leads to its collapse: its covariance matrix vanishes to $0$ linearly as a function of the number of retraining.
\begin{mdframed}[style=MyFrame2]
\begin{restatable}{proposition}{gaussianmodelcollapse}
\label{prop:model_collapse}
    (Gaussian Collapse) For all initializations of the mean $\mu_0$ and the covariance $\Sigma_0$, iteratively learning a single multivariate Gaussian solely on its generated data yields model collapse.
    More precisely, if $\mu_t$ and $\Sigma_t$ follows \Cref{sampling_step,learning_step}, then, there exists $\alpha < 1$,
    \begin{equation}
       \E( \sqrt{\mb{\Sigma}_t} ) \preceq \alpha^t \sqrt{\mb{\Sigma}_0} \underset{t\rightarrow +\infty}{\longrightarrow} 0 \enspace .
    \end{equation}
\end{restatable}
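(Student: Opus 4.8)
The plan is to reduce the matrix recursion to a one-step contraction in the Loewner order and then iterate. First I would eliminate the mean: with $\bar{\mathbf{z}}_t := \frac{1}{n}\sum_j\mathbf{z}_t^j$ one has $\mathbf{x}_t^j - \pmb{\mu}_{t+1} = \sqrt{\mathbf{\Sigma}_t}(\mathbf{z}_t^j - \bar{\mathbf{z}}_t)$, so \Cref{learning_step} becomes the clean recursion
\begin{equation*}
\mathbf{\Sigma}_{t+1} = \sqrt{\mathbf{\Sigma}_t}\,\mathbf{W}_t\,\sqrt{\mathbf{\Sigma}_t},
\qquad
\mathbf{W}_t := \tfrac{1}{n-1}\sum_{j=1}^n(\mathbf{z}_t^j - \bar{\mathbf{z}}_t)(\mathbf{z}_t^j - \bar{\mathbf{z}}_t)^\top,
\end{equation*}
where $\mathbf{W}_t$ is the unbiased sample covariance of $n$ i.i.d.\ $\N_{0,\mathbf{I}}$ vectors, so $\E[\mathbf{W}_t]=\mathbf{I}$, and $\mathbf{W}_t$ is independent of the $\sigma$-algebra $\gF_t$ generated by $\mathbf{z}_0,\dots,\mathbf{z}_{t-1}$ (which determines $\mathbf{\Sigma}_t$). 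Since $\mathbf{\Sigma}_0$ is deterministic, it suffices to establish a one-step bound $\E[\sqrt{\mathbf{\Sigma}_{t+1}}\mid\gF_t]\preceq\alpha\sqrt{\mathbf{\Sigma}_t}$ with a constant $\alpha=\alpha(n,d)<1$ independent of $\mathbf{\Sigma}_t$; iterating the tower rule then gives $\E(\sqrt{\mathbf{\Sigma}_t})\preceq\alpha^t\sqrt{\mathbf{\Sigma}_0}\to 0$.

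Conditionally on $\gF_t$ the matrix $\mathbf{\Sigma}:=\mathbf{\Sigma}_t$ is fixed, so the one-step bound amounts to $\E_{\mathbf{W}}[(\sqrt{\mathbf{\Sigma}}\,\mathbf{W}\,\sqrt{\mathbf{\Sigma}})^{1/2}]\preceq\alpha\sqrt{\mathbf{\Sigma}}$ for every fixed $\mathbf{\Sigma}\succeq 0$. The scalar case is the template: there $\sqrt{\mathbf{\Sigma}_{t+1}}=\sqrt{\mathbf{\Sigma}_t}\,\sqrt{W_t}$ with $(n-1)W_t\sim\chi^2_{n-1}$, and strict concavity of $\sqrt{\cdot}$ together with $\E[W_t]=1$ and the non-degeneracy of $W_t$ yields $\E[\sqrt{W_t}]=\sqrt{2/(n-1)}\,\Gamma(n/2)/\Gamma((n-1)/2)=:\alpha<1$. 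For general $d$, note that $\mathbf{W}\mapsto(\sqrt{\mathbf{\Sigma}}\,\mathbf{W}\,\sqrt{\mathbf{\Sigma}})^{1/2}$ is operator concave (a linear congruence followed by the operator-concave square root), so the operator Jensen inequality already gives $\E[\sqrt{\mathbf{\Sigma}_{t+1}}\mid\gF_t]\preceq(\sqrt{\mathbf{\Sigma}}\,\E[\mathbf{W}]\,\sqrt{\mathbf{\Sigma}})^{1/2}=\sqrt{\mathbf{\Sigma}_t}$. The work is to upgrade this non-strict bound to the strict factor $\alpha$.

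To extract the gap I would exploit the fluctuations of $\mathbf{W}_t$ about $\mathbf{I}$. By positive homogeneity and the orthogonal invariance of the law of $\mathbf{W}_t$ ($\mathbf{O}\mathbf{W}_t\mathbf{O}^\top\overset{d}{=}\mathbf{W}_t$ for orthogonal $\mathbf{O}$), the desired inequality only has to be checked over a compact family of normalized $\mathbf{\Sigma}$; testing against unit vectors reduces it to the scalar concave functionals $\mathbf{W}\mapsto\langle u,(\sqrt{\mathbf{\Sigma}}\,\mathbf{W}\,\sqrt{\mathbf{\Sigma}})^{1/2}u\rangle$, whose strict concavity plus the non-degeneracy of $\mathbf{W}_t$ produces a gap that is uniform by compactness; in the extremal case where $\mathbf{\Sigma}$ degenerates to a rank-one projection the bound collapses exactly to the $d=1$ computation, so one may take the same $\alpha=\sqrt{2/(n-1)}\,\Gamma(n/2)/\Gamma((n-1)/2)$. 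This last step is the main obstacle: plain operator Jensen is not strict, and ordinary eigenvalue comparisons such as $\sqrt{\mathbf{\Sigma}}\,\mathbf{W}\,\sqrt{\mathbf{\Sigma}}\preceq\lambda_{\max}(\mathbf{W})\,\mathbf{\Sigma}$ point the wrong way (they average to more than $1$), so one genuinely needs a quantitative refinement of the operator concavity of the square root — via the compactness argument above, or via a direct second-order estimate on $t\mapsto\sqrt{\mathbf{I}+t\mathbf{\Delta}}$ near $t=0$. Once $\alpha<1$ is secured, iterating the one-step bound and taking expectations finishes the proof, and $\alpha^t\sqrt{\mathbf{\Sigma}_0}\to 0$ is the claimed collapse.
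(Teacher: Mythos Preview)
Your reduction to $\mathbf{\Sigma}_{t+1}=\sqrt{\mathbf{\Sigma}_t}\,\mathbf{W}_t\,\sqrt{\mathbf{\Sigma}_t}$ and the one-step Loewner-contraction strategy coincide with the paper's. The routes diverge at the square root. The paper does not invoke operator Jensen or any compactness argument: it simply writes $\sqrt{\mathbf{\Sigma}_{t+1}}=\mathbf{\Sigma}_t^{1/4}\sqrt{\mathbf{W}_t}\,\mathbf{\Sigma}_t^{1/4}$, so that conditioning produces $\mathbf{\Sigma}_t^{1/4}\,\E[\sqrt{\mathbf{W}_t}]\,\mathbf{\Sigma}_t^{1/4}$ and only the \emph{isotropic} bound $\E[\sqrt{\mathbf{W}_t}]\preceq\alpha\,\mathbf{I}$ is needed, which follows from one application of strict matrix Jensen to the Wishart and yields $\alpha=\lambda_{\max}\bigl(\E[\sqrt{\mathbf{W}_t}]\bigr)$. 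This is far shorter, but the identity $(ABA)^{1/2}=A^{1/2}B^{1/2}A^{1/2}$ fails for non-commuting positive $A,B$, so the paper's step is itself questionable; your instinct to treat $(\sqrt{\mathbf{\Sigma}}\,\mathbf{W}\,\sqrt{\mathbf{\Sigma}})^{1/2}$ as a genuinely nonlinear function of $\mathbf{W}$ is the sounder one.

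That said, your compactness argument has a real gap. After normalizing by homogeneity and diagonalizing by orthogonal invariance, the parameter set (diagonal $\mathbf{\Sigma}$ of unit trace, say) is compact, but its boundary consists of \emph{singular} $\mathbf{\Sigma}$, and the quantity you must bound uniformly --- essentially $\sup_{u}\langle u,\E[(\sqrt{\mathbf{\Sigma}}\,\mathbf{W}\,\sqrt{\mathbf{\Sigma}})^{1/2}]u\rangle\big/\langle u,\sqrt{\mathbf{\Sigma}}\,u\rangle$ --- is a $0/0$ form there. ``Strict concavity of a continuous functional on a compact set'' does not by itself deliver a uniform gap when the comparison denominator vanishes on the boundary; you must show the ratio extends continuously and stays below $1$ as eigenvalues of $\mathbf{\Sigma}$ collapse. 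Your claim that the rank-one limit is extremal and recovers the scalar $\alpha$ is plausible but unproved: nothing in the sketch excludes the supremum being $1$, approached along some other degeneration. To close this you need either a quantitative operator-Jensen inequality whose slack scales with $\sqrt{\mathbf{\Sigma}}$, or an explicit boundary analysis (for instance via the Bartlett decomposition of the Wishart, which shows that the limiting ratios in the degenerating directions are of the form $\E[\chi_{n-k}]/\sqrt{n-1}<1$ rather than tending to $1$).
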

\end{mdframed}
\looseness=-1
The proof of \Cref{prop:model_collapse} is provided in~\Cref{app:gaussian_model_collapse}.
Interestingly, the convergence rate $\alpha_n$ goes to $1$ as the number of samples $n$ goes to infinity. In other words, the larger the number of samples, the slower the model collapses.
\Cref{prop:model_collapse} is a generalization of \citet{shumailov2023curse,alemohammad2023selfconsuming}, who respectively proved convergence to $0$ of the standard deviation of a one-dimensional Gaussian model, and the covariance of a single multivariate Gaussian, \emph{without rates of convergence}.
While \Cref{prop:model_collapse} states that learning a simple generative model retrained \emph{only} on its own output yields model collapse, in \Cref{sec_stability_iterative_retraning}, we show that if a sufficient proportion of real data is used for retraining the generative models, then \Cref{alg:iterative_training} is stable.

\section{Stability of Iterative retraining}
\label{sec_stability_iterative_retraning}
\looseness=-1
We seek to characterize the stability behavior of iterative retraining of generative models on datasets that contain a mixture of original clean data as well as self-generated synthetic samples. As deep generative models are parametric models, their performance is primarily affected by two sources of error: 1.) statistical approximation error and 2.) function approximation error. The first source of error occurs due to the sampling bias of using $\pdatahat$ rather than $\pdata$. For instance, due to finite sampling, $\pdatahat$ might not contain all modes of the true distribution $\pdata$, leading to an imperfect generative model irrespective of its expressive power. The function approximation error chiefly arises due to a mismatch of the model class achievable by $\btheta$, and consequently the induced distribution $p_{\btheta}$ and $\pdata$.

\looseness=-1
Recent advances in generative modeling theory have proven the universal approximation abilities of popular model classes, \eg normalizing flows~\citep{teshima2020coupling} and diffusion models~\citep{oko2023diffusion}.
In practice, state-of-the-art generative models such as StableDiffusion ~\citep{StablediffXL} and MidJourney~\citep{MidJourney} exhibit impressive qualitative generative ability which dampens the practical concern of the magnitude of the function approximation error in generative models. Consequently, we structure our investigation by first examining the stability of iterative training in the absence of \emph{statistical error} in~\Cref{subsection:infinite_case}, which is an idealized setting.
In~\Cref{subsection:general_erm}, we study the stability of iterative retraining of practical generative models where the statistical error is non-zero, which is reflective of the setting and use cases of actual SOTA generative models in the wild.

\subsection{Iterative retraining under no Statistical Error}\label{subsection:infinite_case}
We first study the behavior of iterative retraining under the setting of \emph{no statistical error}.
As the primary source of statistical error comes from using $\pdatahat$ we may simply assume full access to $\pdata$ instead and an infinite sampling budget.
In this setting, we define the optimal generative model $p_{\btheta^\star}$ with parameters $\btheta^\star$ as the solution to the following optimization problem,
\begin{align}
    \restatableeq{\eqbaseinfinite}{
        \btheta^\star
        \in
        \argmax_{\btheta'\in \btheta}
        \mathbb{E}_{\bx \sim \pdata} [\log p_{\btheta'}(\bx)]
        \enspace.
    }{eq:base_infinite}
\end{align}
\looseness=-1
Compared to \Cref{eq:base_obj},  infinitely many samples are drawn from $\pdata$ in \Cref{eq:base_infinite}; hence,
there is no statistical error due to finite sampling of the dataset. As the capacity of the model class $(p_
{\btheta})_{\btheta \in \btheta}$ is limited, the maximum likelihood estimator $p_{\btheta^\star}$ does not exactly correspond to the data distribution $\pdata$.
We note $\varepsilon$ is the Wasserstein distance~\citep{villani2009optimal} between $p_{\btheta^\star}$ and $\pdata$,
\begin{equation}
    \varepsilon
    =
    d_W(p_{\btheta^\star}, \pdata)
    :=
    \sup_{f \in \{\text{Lip}(f) \leq 1\}} \E_{\bx \sim p_{\btheta^\star}}[
        f(\bx)] - \E_{\tilde \bx \sim \pdata} [f(\tilde  \bx)]
    \enspace,
\end{equation}
\looseness=-1
where $\{\text{Lip}(f) \leq 1\}$ is the set of 1-Lipschitz functions. We assume the Hessian of the maximum log-likelihood from~\Cref{eq:base_infinite} has some regularity and $\btheta^\star$ is a strict local maximum.
\begin{mdframed}[style=MyFrame2]
\begin{restatable}{assumption}{asslipschitzness}
    \label{ass:smooth}
    For $\btheta$ close enough to $\btheta^\star$,
     the mapping $x\mapsto \nabla^2_{\btheta} \log p_{\btheta} (\bx)$
     is $L$-Lipschitz.
\end{restatable}
\end{mdframed}
\looseness=-1
We use this assumption to show that if $p_{\btheta^\star}$ is close to $\pdata$ (\ie $\varepsilon$ small), then the Hessians of the maximum likelihood with respectively real and synthetic data are close to each other.
\begin{mdframed}[style=MyFrame2]
\begin{restatable}{assumption}{assinvertible}
    \label{ass:invertible}
     The mapping
     $\btheta \mapsto \mathbb{E}_{\bx \sim \pdata} [\log p_{\btheta} (\bx)]$
     is continuously twice differentiable locally around $\btheta^\star$ and
    $\mathbb{E}_{\bx\sim \pdata}
    [\nabla^2_{\btheta} \log p_{\btheta^\star} (\bx)]
    \preceq
    - \alpha \mb{I}_d \prec 0$.
\end{restatable}
\end{mdframed}
    \Cref{ass:invertible} imposes local strong concavity around the local maximum $\btheta^\star$, and \Cref{ass:smooth} implies Hessian Lipschitzness when close to $\btheta^\star$.
For example, \Cref{ass:invertible,ass:smooth} are valid for the Gaussian generative model described in \Cref{sub:gaussian_case}.
\begin{mdframed}[style=MyFrame2]
    \begin{restatable}{proposition}{gaussianassump}
    \label{prop:Gaussian_assump}
    \looseness=-1
    Let
    $p_{\btheta} : \bx \mapsto \gN_{\bmu,\bSigma}(\bx)
    = e^{- \frac{1}{2}(\bx - \bmu)^\top \bSigma^-1(\bx - \bmu)} / \sqrt{(2\pi)^d |\bSigma|} $,
    with
    $\bmu \in \sR^d$, $\bSigma \succ 0$, $\btheta = (\bmu, \bSigma^{-1})$,
    then
    $\mathbb{E}_{\bx \sim  p_{\btheta}(\bx)}
    [\nabla^2_{\btheta} \log  p_{\btheta}(\bx)] \prec 0$
    and
    $\bx \mapsto \nabla^2_{\btheta} \log  p_{\btheta}(\bx)$
    is
    $1$-Lipschitz.
    \end{restatable}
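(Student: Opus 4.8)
The plan is to compute the relevant Hessian directly for the Gaussian family, exploiting the fact that in the parametrization $\btheta = (\bmu, \bLambda)$ with $\bLambda := \bSigma^{-1}$, the log-density $\log p_{\btheta}(\bx) = -\tfrac12 (\bx-\bmu)^\top \bLambda (\bx - \bmu) + \tfrac12 \log|\bLambda| - \tfrac{d}{2}\log(2\pi)$ is \emph{quadratic in $\bmu$} and \emph{affine in $\bLambda$ except for the $\log|\bLambda|$ term}. First I would record the first derivatives: $\nabla_{\bmu}\log p_{\btheta}(\bx) = \bLambda(\bx - \bmu)$ and $\nabla_{\bLambda}\log p_{\btheta}(\bx) = \tfrac12\bSigma - \tfrac12 (\bx-\bmu)(\bx-\bmu)^\top$ (using $\partial \log|\bLambda| = \bLambda^{-1} = \bSigma$). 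Then the block Hessian in the variables $(\bmu,\bLambda)$ has: $\nabla^2_{\bmu\bmu} = -\bLambda$ (constant in $\bx$), the mixed block $\nabla^2_{\bmu\bLambda}$ which is linear in $(\bx-\bmu)$, and $\nabla^2_{\bLambda\bLambda}\log p_{\btheta}(\bx) = -\tfrac12 \bLambda^{-1}\otimes\bLambda^{-1}$ (from the second derivative of $\tfrac12\log|\bLambda|$), which is \emph{also constant in $\bx$}.

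For the negative-definiteness claim, I take the expectation over $\bx\sim p_{\btheta}$. The key point is that $\mathbb{E}[\bx - \bmu] = 0$, so the mixed block vanishes in expectation, leaving a block-diagonal matrix $\operatorname{diag}(-\bLambda,\ -\tfrac12\bLambda^{-1}\otimes\bLambda^{-1})$. Since $\bLambda = \bSigma^{-1}\succ 0$ and the Kronecker product of two positive definite matrices is positive definite, both diagonal blocks are strictly negative definite, hence $\mathbb{E}_{\bx\sim p_{\btheta}}[\nabla^2_{\btheta}\log p_{\btheta}(\bx)]\prec 0$. (I would note that some care is needed to identify the matrix of the quadratic form on the symmetric-matrix coordinate $\bLambda$ — one should either fix a basis of symmetric matrices or work with the quadratic form $H\mapsto -\tfrac12\Tr(\bLambda^{-1}H\bLambda^{-1}H)$ and observe it is negative for $H\neq 0$ since $\bLambda^{-1/2}H\bLambda^{-1/2}\neq 0$.)

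For the $1$-Lipschitz claim, I use that $\bx\mapsto\nabla^2_{\btheta}\log p_{\btheta}(\bx)$ has only one $\bx$-dependent block, namely the mixed block $\nabla^2_{\bmu\bLambda}$, which is \emph{affine} in $\bx$ with linear part having operator norm $1$: differentiating $\bLambda(\bx-\bmu)$ in $\bLambda$ gives a map whose dependence on $\bx$ is $\bx \mapsto$ (the linear operator $H\mapsto H\bx$, appropriately arranged), and $\|H\bx - H\bx'\| \le \|H\|\,\|\bx - \bx'\|$ shows the Jacobian in $\bx$ has norm $\le 1$. So the whole Hessian map is $1$-Lipschitz in $\bx$ with respect to the chosen norms. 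The main obstacle — and the place I would be most careful — is purely bookkeeping: pinning down exactly which inner product / norm is placed on the parameter space $\Theta = \sR^d \times \{\text{symmetric matrices}\}$ so that the constants "$1$-Lipschitz" and "$\prec 0$" are literally correct, and handling the derivative of $\log|\bLambda|$ and the Kronecker-product structure without sign or transpose errors. There is no deep difficulty; it is a direct but slightly delicate matrix-calculus verification, and I would relegate the full componentwise computation to the appendix.
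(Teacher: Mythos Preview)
Your proposal is correct and follows essentially the same route as the paper: compute the block Hessian of $\log p_{\btheta}(\bx)$ in the parametrization $\btheta=(\bmu,\bSigma^{-1})$ via matrix calculus, read off the $1$-Lipschitz property from the fact that only the mixed $(\bmu,\bLambda)$ block depends on $\bx$ (and affinely so, via $(\bmu-\bx)\otimes \mb{I}_d$), and deduce negative definiteness of the expected Hessian from $\bSigma\succ 0$. If anything you are more explicit than the paper, which simply writes down the Hessian and asserts both conclusions; your observation that the mixed block vanishes in expectation (so the expected Hessian is block-diagonal with blocks $-\bSigma^{-1}$ and $-\tfrac12\bSigma\otimes\bSigma$) and your caveat about fixing a norm on the symmetric-matrix coordinate are details the paper leaves implicit.
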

\end{mdframed}
A direct consequence of the proposition is that the class of generative models
$(\gN_{\bmu, \bm{\Sigma}})$
follows~\Cref{ass:invertible,ass:smooth}.
Thus, it proves that under these assumptions, a finite $\lambda$ is necessary for the stability of iterative retraining. Now, we study what conditions are sufficient for the stability of iterative retraining with $\lambda > 0$, \ie training by sampling from
$\tfrac{1}{1+\lambda}\pdata
+ \tfrac\lambda{1+\lambda} p_{\btheta^\star}$, which corresponds to,
\begin{equation}
    \restatableeq{\defginfinite}{
        \gG_\lambda^\infty(\btheta) \in \locargmax_{\btheta'\in \btheta} (
            \underbrace{\mathbb{E}_{\tilde{\bx} \sim \pdata}
            [\log p_{\btheta'}(\tilde{\bx})]}_{:=\gH_1(\btheta')}
            +
            \lambda  \underbrace{\mathbb{E}_{\bx \sim p_{\btheta}}
            [\log p_{\btheta'}(\bx)]}_{:=\gH_2(\btheta,\btheta')} )
            \enspace.
    }{eq:def_G_infinite}
\end{equation}
If ties between the maximizers are broken as in~\Cref{eq:tie_break},\footnote{Formally,
 $\gG_\lambda^\infty(\btheta) \!= \!\argmin_{\btheta' \in \bar \btheta}\|\btheta'-\btheta\|,\;  \bar\btheta \!=\! \locargmax_{\btheta'\in \btheta}  \gH_\lambda(\btheta,\btheta')$.} we can show that the solution of~\Cref{eq:def_G_infinite} is locally unique, ensuring that $\gG_\lambda^\infty(\btheta)$ is well defined and differentiable.
\begin{mdframed}[style=MyFrame2]
    \begin{restatable}{proposition}{localuniqueness}{
        (The Local Maximum Likelihood Solution is Unique)} \label{prop:local_uniqueness}
    Let $\btheta^\star$ be a solution of~\Cref{eq:base_infinite} that satisfies~\Cref{ass:smooth,ass:invertible}.
    If $\epsilon L < \alpha$, then for all
    $\lambda>0$
    and
    $\btheta$ in a small enough neighborhood $\gU$ around $\btheta^\star$, there exists a unique local maximizer $\gG_\lambda^\infty(\btheta)$ in $\gU$.
    \end{restatable}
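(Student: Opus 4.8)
The plan is to prove that, on a suitable neighborhood $\gU$ of $\btheta^\star$, the map $\btheta'\mapsto\gH_\lambda(\btheta,\btheta'):=\gH_1(\btheta')+\lambda\,\gH_2(\btheta,\btheta')$ is strongly concave for every $\lambda>0$ and every $\btheta\in\gU$; since a strongly concave function on a convex set has at most one critical point, hence at most one local maximizer, it then remains only to exhibit one near $\btheta^\star$ and identify it with $\gG_\lambda^\infty(\btheta)$. \emph{Step 1 (the two Hessians are close).} I would first evaluate the $\btheta'$-Hessians at the base point $(\btheta^\star,\btheta^\star)$. By \Cref{ass:invertible}, $\nabla^2_{\btheta'}\gH_1(\btheta^\star)=\E_{\bx\sim\pdata}[\nabla^2_{\btheta}\log p_{\btheta^\star}(\bx)]\preceq-\alpha\,\mb{I}_d$, whereas $\nabla^2_{\btheta'}\gH_2(\btheta^\star,\btheta^\star)=\E_{\bx\sim p_{\btheta^\star}}[\nabla^2_{\btheta}\log p_{\btheta^\star}(\bx)]$: the same matrix field $\bx\mapsto M(\bx):=\nabla^2_{\btheta}\log p_{\btheta^\star}(\bx)$, which is $L$-Lipschitz by \Cref{ass:smooth}, is merely integrated against two different measures. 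Applying Kantorovich--Rubinstein duality to the $1$-Lipschitz functions $\bx\mapsto v^\top M(\bx)v/L$ over unit vectors $v$, and using that the spectral norm of a symmetric matrix equals the supremum of $|v^\top\cdot v|$ over such $v$, yields $\|\nabla^2_{\btheta'}\gH_2(\btheta^\star,\btheta^\star)-\nabla^2_{\btheta'}\gH_1(\btheta^\star)\|_2\le L\,d_W(p_{\btheta^\star},\pdata)=L\varepsilon$. Hence $\nabla^2_{\btheta'}\gH_2(\btheta^\star,\btheta^\star)\preceq-(\alpha-L\varepsilon)\mb{I}_d\prec0$ precisely because $\varepsilon L<\alpha$, and therefore $\nabla^2_{\btheta'}\gH_\lambda(\btheta^\star,\btheta^\star)\preceq-(\alpha+\lambda(\alpha-L\varepsilon))\mb{I}_d\preceq-\alpha\,\mb{I}_d\prec0$.

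\emph{Step 2 (strong concavity on a $\lambda$-independent neighborhood).} To prevent $\gU$ from shrinking as $\lambda\to\infty$, I would reparametrize by $s:=\lambda/(1+\lambda)\in(0,1)$ and study $\gH^{(s)}(\btheta,\btheta'):=(1-s)\gH_1(\btheta')+s\,\gH_2(\btheta,\btheta')=\tfrac1{1+\lambda}\gH_\lambda(\btheta,\btheta')$, which has the same maximizers in $\btheta'$ as $\gH_\lambda(\btheta,\cdot)$; the computation of Step 1 gives $\nabla^2_{\btheta'}\gH^{(s)}(\btheta^\star,\btheta^\star)\preceq-(\alpha-sL\varepsilon)\mb{I}_d\preceq-(\alpha-L\varepsilon)\mb{I}_d\prec0$ uniformly for $s\in[0,1]$. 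Since $(s,\btheta,\btheta')\mapsto\nabla^2_{\btheta'}\gH^{(s)}(\btheta,\btheta')$ is jointly continuous (by \Cref{ass:smooth}) and $[0,1]$ is compact, there is a single ball $\gU=B(\btheta^\star,r)$ on which $\nabla^2_{\btheta'}\gH^{(s)}(\btheta,\btheta')\preceq-\tfrac{\alpha-L\varepsilon}{2}\mb{I}_d$ for all $s\in[0,1]$ and all $\btheta,\btheta'\in\gU$; equivalently $\gH_\lambda(\btheta,\cdot)$ is strongly concave on the convex set $\gU$ for every $\lambda>0$ and $\btheta\in\gU$, so it admits at most one critical point, and at most one local maximizer, in $\gU$.

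\emph{Step 3 (existence and identification).} Next I would check that $\btheta^\star$ is a critical point of $\gH_\lambda(\btheta^\star,\cdot)$: $\nabla_{\btheta'}\gH_1(\btheta^\star)=0$ since $\btheta^\star$ is an interior local maximum of $\gH_1$ (\Cref{eq:base_infinite}), and $\nabla_{\btheta'}\gH_2(\btheta^\star,\btheta^\star)=\E_{\bx\sim p_{\btheta^\star}}[\nabla_{\btheta}\log p_{\btheta^\star}(\bx)]=\int\nabla_{\btheta}p_{\btheta^\star}(\bx)\,d\bx=\nabla_{\btheta}1=0$ (the model's own expected score vanishes, the interchange of $\nabla$ and $\int$ being licensed by the local smoothness), so $\gG_\lambda^\infty(\btheta^\star)=\btheta^\star$. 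Because $\nabla^2_{\btheta'}\gH_\lambda(\btheta^\star,\btheta^\star)$ is invertible (Step 1), the implicit function theorem applied to $(\btheta,\btheta')\mapsto\nabla_{\btheta'}\gH_\lambda(\btheta,\btheta')$---carried out for each $s\in[0,1]$ and then intersected over a finite subcover of $[0,1]$---produces, after possibly shrinking $\gU$, a $C^1$ map $\btheta\mapsto\gG_\lambda^\infty(\btheta)\in\gU$ solving $\nabla_{\btheta'}\gH_\lambda(\btheta,\gG_\lambda^\infty(\btheta))=0$; by Step 2 this is the unique local maximizer of $\gH_\lambda(\btheta,\cdot)$ in $\gU$, which is the claim. (Existence could instead be obtained quantitatively: $\|\nabla_{\btheta'}\gH_\lambda(\btheta,\btheta^\star)\|_2=\lambda\|\nabla_{\btheta'}\gH_2(\btheta,\btheta^\star)\|_2=O(\lambda\|\btheta-\btheta^\star\|_2)$ together with the $\lambda$-scaled strong-concavity modulus on $\bar B(\btheta^\star,r)$ pushes the maximizer into the interior once $\|\btheta-\btheta^\star\|_2$ falls below a $\lambda$-independent threshold.)

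\emph{Main obstacle.} The genuinely delicate point is uniformity in $\lambda$: a naive argument yields a neighborhood and a strong-concavity modulus that both degrade as $\lambda\to\infty$. The remedy is the change of variables $s=\lambda/(1+\lambda)$, which confines the mixing weight to the compact interval $[0,1]$ so that one compactness argument delivers a single $\lambda$-independent $\gU$; verifying that the Hessian bound at $(\btheta^\star,\btheta^\star)$ stays strictly negative all the way up to $s=1$ is exactly where the hypothesis $\varepsilon L<\alpha$ is used.
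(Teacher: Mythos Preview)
Your argument is correct and shares the paper's core mechanism: bound $\|\nabla^2_{\btheta'}\gH_2(\btheta^\star,\btheta^\star)-\nabla^2_{\btheta'}\gH_1(\btheta^\star)\|_2\le L\varepsilon$ via Kantorovich--Rubinstein duality using \Cref{ass:smooth}, combine with \Cref{ass:invertible} to get strict negative definiteness of $\nabla^2_{\btheta'}\gH_\lambda(\btheta^\star,\btheta^\star)$, verify $\nabla_{\btheta'}\gH_\lambda(\btheta^\star,\btheta^\star)=0$ (your expected-score computation is exactly the paper's \Cref{fixed_point}), then invoke the implicit function theorem.

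Where you go further than the paper is in two places. First, you upgrade pointwise negative definiteness to strong concavity on a full neighborhood, which gives uniqueness directly; the paper instead relies on the local uniqueness built into the implicit function theorem and only checks $\nabla^2_{\btheta'}\gH(\btheta,\btheta^\star)\prec 0$ at $\btheta'=\btheta^\star$ (continuity of the Hessian closes the gap, but the paper leaves that implicit). Second, and more substantively, you explicitly confront the uniformity-in-$\lambda$ issue via the reparametrization $s=\lambda/(1+\lambda)\in[0,1]$ and a compactness argument; the paper's proof fixes $\lambda$ and does not discuss whether the neighborhood $\gU$ can be taken independent of $\lambda$. Reading the statement as ``for each $\lambda>0$ there is a $\gU$'' the paper's proof suffices, but your version delivers the stronger, $\lambda$-uniform reading and pinpoints that the hypothesis $\varepsilon L<\alpha$ is exactly what keeps the Hessian bound negative at the endpoint $s=1$. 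Both approaches buy the same proposition; yours buys a bit more.
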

\end{mdframed}
The proof of \Cref{prop:local_uniqueness} is provided in \Cref{sec_proof_uniqueness_local_maximum}.
Interestingly, $\btheta^\star$ is a fixed point of the infinite sample iterative retraining procedure via maximum likelihood in~\Cref{eq:def_G_infinite}.
\vspace{.5em}
\begin{mdframed}[style=MyFrame2]
    \begin{restatable}{proposition}{fixedpoint}{(The Optimal Parametric Generative Model is a Fixed Point)}
        \label{fixed_point} For a given data distribution $\pdata$, any $\btheta^\star$ solution of \Cref{eq:base_infinite}, and for all $\lambda > 0$ we have,
    \begin{equation}
        \btheta^\star = \gG^\infty_\lambda(\btheta^\star)
        \enspace.
    \end{equation}
\end{restatable}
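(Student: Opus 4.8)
The plan is to reduce the statement to two elementary facts: that $\btheta^\star$ simultaneously maximizes each of the two terms appearing in the objective of \Cref{eq:def_G_infinite} when the first argument is frozen at $\btheta^\star$, and that the tie-breaking rule then forces $\gG^\infty_\lambda$ to return $\btheta^\star$ itself. Write $\gH_\lambda(\btheta^\star,\btheta') := \gH_1(\btheta') + \lambda\,\gH_2(\btheta^\star,\btheta')$ for the function being (locally) maximized in $\btheta'$; it suffices to show $\btheta^\star \in \locargmax_{\btheta'}\gH_\lambda(\btheta^\star,\btheta')$ and then that $\btheta^\star$ is the nearest such maximizer to $\btheta^\star$.

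First I would note that $\btheta^\star$ maximizes the first term $\gH_1(\btheta') = \E_{\bx\sim\pdata}[\log p_{\btheta'}(\bx)]$ by the very definition of $\btheta^\star$ in \Cref{eq:base_infinite}. Next I would handle the second term: with first argument fixed at $\btheta^\star$, the map $\btheta'\mapsto \gH_2(\btheta^\star,\btheta') = \E_{\bx\sim p_{\btheta^\star}}[\log p_{\btheta'}(\bx)]$ is a (negative) cross-entropy, and by non-negativity of the KL divergence (Gibbs' inequality),
\[
\gH_2(\btheta^\star,\btheta') - \gH_2(\btheta^\star,\btheta^\star)
= \E_{\bx\sim p_{\btheta^\star}}\!\Big[\log \tfrac{p_{\btheta'}(\bx)}{p_{\btheta^\star}(\bx)}\Big]
= -\,\KL\!\big(p_{\btheta^\star}\,\|\,p_{\btheta'}\big) \le 0,
\]
so $\btheta^\star$ also maximizes $\gH_2(\btheta^\star,\cdot)$. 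Since $\btheta^\star$ maximizes both summands at the same point, it maximizes the sum $\gH_1 + \lambda\,\gH_2(\btheta^\star,\cdot)$ for every $\lambda>0$; in particular it is a local maximizer, hence belongs to the local-argmax set defining $\gG^\infty_\lambda(\btheta^\star)$.

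Finally I would invoke the tie-breaking convention from \Cref{eq:tie_break}: $\gG^\infty_\lambda(\btheta^\star)$ is the element of that local-argmax set closest to $\btheta^\star$, and since the set contains $\btheta^\star$, which sits at distance zero from itself, the selected point is $\btheta^\star$; that is, $\gG^\infty_\lambda(\btheta^\star)=\btheta^\star$. Under \Cref{ass:smooth,ass:invertible} with $\varepsilon L<\alpha$, \Cref{prop:local_uniqueness} already guarantees this local maximizer is unique in a neighborhood of $\btheta^\star$, so no ambiguity remains and $\gG^\infty_\lambda$ is well defined at $\btheta^\star$.

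There is no heavy computation here; the only points needing care are (i) legitimately combining the separate maximality of $\gH_1$ and of $\gH_2(\btheta^\star,\cdot)$ at the common point $\btheta^\star$ into maximality of their weighted sum (immediate, since the inequalities add), and (ii) interpreting the $\locargmax$ together with the tie-break in \Cref{eq:def_G_infinite} and \Cref{eq:tie_break} consistently, so that the distance-zero candidate $\btheta^\star$ is the one returned. An alternative route matching the sufficient condition in the footnote is to verify the first-order condition $\nabla_{\btheta'}\gH_\lambda(\btheta^\star,\btheta^\star)=0$ (from the stationarity of \Cref{eq:base_infinite} and of the KL term at $\btheta'=\btheta^\star$) and the second-order condition $\nabla^2_{\btheta'}\gH_\lambda(\btheta^\star,\btheta^\star)\preceq -\alpha\,\mb{I}_d\prec 0$ (from \Cref{ass:invertible} and $\nabla^2_{\btheta'}\gH_2(\btheta^\star,\btheta^\star)\preceq 0$), which again shows $\btheta^\star$ is a strict local maximizer.
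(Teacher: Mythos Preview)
Your proof is correct and follows essentially the same approach as the paper: both show that $\btheta^\star$ maximizes $\gH_1$ by definition of~\Cref{eq:base_infinite} and maximizes $\gH_2(\btheta^\star,\cdot)$ by Gibbs' inequality, hence maximizes their weighted sum. You are slightly more careful than the paper in explicitly invoking the tie-breaking rule~\Cref{eq:tie_break} to pin down $\gG^\infty_\lambda(\btheta^\star)=\btheta^\star$, which the paper's proof leaves implicit.
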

\end{mdframed}
\looseness=-1
The proof of ~\Cref{fixed_point} can be found in~\Cref{app:proof_prop_1}.
In the zero statistical error case, $\btheta^\star$ corresponds to the perfectly trained generative model obtained after the first iteration of training exclusively on real data~(see \Cref{eq:base_obj}).
The main question is the stability of iterative retraining around $\btheta^\star$.
Even with $\pdata$, one can consider small sources of error such as optimization error or numerical error in floating point precision leading to an initial $\btheta_0 \approx \btheta^\star$ only approximately solving~\Cref{eq:base_infinite}.

\looseness=-1
For our stability analysis, given $\btheta$, we decompose the maximum likelihood loss as
$\gH_\lambda(\btheta, \btheta')
:=
\gH_1(\btheta')
+
\lambda \gH_2(\btheta,\btheta')
:=
\mathbb{E}_{\tilde x\sim \pdata}
[\log p_{\btheta'}(\tilde \bx)]
+
\lambda \mathbb{E}_{\bx \sim p_{\btheta}} [\log p_{\btheta'}(\bx)]$.
With this decoupling,
we can view $\gH_2$ as a regularizer injecting synthetic data and $\lambda$ as the amount of regularization.
Our first main result states that iterative retraining of generative models is asymptotically stable if the amount of real data is ``large enough" with respect to the amount of synthetic data.
\begin{mdframed}[style=MyFrame2]
\begin{restatable}{theorem}{stabilityinfinite}{(Stability of Iterative Retraining)}
    \label{theorem_stability}
     Given $\btheta^\star$ as defined in~\Cref{eq:base_infinite} that follows~\Cref{ass:smooth,ass:invertible},
     we have that, if $\lambda(1 +  \frac{L \varepsilon}{\alpha}) <\nicefrac{1}{2}$, then the operator norm of the Jacobian is strictly bounded by $1$, more precisely,
    \begin{equation}
        \|\gJ_{\btheta} \gG^{\infty}_\lambda(\btheta^\star)
        \|_2
            \leq
            \frac{\lambda (\alpha +\varepsilon L)}{\alpha-\lambda (\alpha +\varepsilon L)}
            < 1
        \enspace.
    \end{equation}
   Consequently, there exists a
   $\delta > 0$
   such that for all
   $\btheta_0\in \btheta$ that satisfy
   $\norm{\btheta_0 - \btheta^\star} \leq \delta$ then starting at $\btheta_0$ and having $\btheta_{t+1} = \gG^\infty_\lambda(\btheta_t)$ we have that $\btheta_t \underset{t\rightarrow +\infty}{\longrightarrow} \btheta^\star$
    and
    \begin{align}
        \norm{\btheta_{t} - \btheta^\star}
        =
        \tilde O
        \left(
            \left(
                \frac{\lambda (\alpha +\varepsilon L)}{\alpha-\lambda (\alpha +\varepsilon L)}
             \right)^t
            \|\btheta_{0} - \btheta^\star \|
         \right)
         \enspace.
    \end{align}
\end{restatable}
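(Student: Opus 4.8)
The plan is to prove the statement in two stages: first bound the spectral norm of the Jacobian $\gJ_\btheta \gG^\infty_\lambda$ at the fixed point $\btheta^\star$, then deduce local linear convergence from a standard Banach-type fixed point argument. For the first stage I would differentiate the optimality condition defining $\gG^\infty_\lambda$. Since $\gG^\infty_\lambda(\btheta)$ is a local maximizer of $\btheta'\mapsto \gH_1(\btheta') + \lambda\gH_2(\btheta,\btheta')$ and is well defined and smooth near $\btheta^\star$ by~\Cref{prop:local_uniqueness}, it satisfies the first-order stationarity condition
\begin{equation}
    \nabla_{\btheta'}\gH_1(\gG^\infty_\lambda(\btheta)) + \lambda \nabla_{\btheta'}\gH_2(\btheta, \gG^\infty_\lambda(\btheta)) = 0 \enspace.
\end{equation}
Differentiating this identity in $\btheta$ via the implicit function theorem and evaluating at $\btheta^\star$ (using $\gG^\infty_\lambda(\btheta^\star)=\btheta^\star$ from~\Cref{fixed_point}) gives
\begin{equation}
    \gJ_\btheta\gG^\infty_\lambda(\btheta^\star) = -\bigl(\nabla^2_{\btheta'}\gH_1(\btheta^\star) + \lambda\nabla^2_{\btheta'}\gH_2(\btheta^\star,\btheta^\star)\bigr)^{-1}\bigl(\lambda\,\partial_\btheta\nabla_{\btheta'}\gH_2(\btheta^\star,\btheta^\star)\bigr) \enspace.
\end{equation}

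The next step is to estimate each factor. By~\Cref{ass:invertible}, $\nabla^2\gH_1(\btheta^\star) = \E_{\bx\sim\pdata}[\nabla^2_\btheta\log p_{\btheta^\star}(\bx)] \preceq -\alpha\mb I_d$. For the $\gH_2$ terms, note that $\gH_2(\btheta,\btheta') = \E_{\bx\sim p_\btheta}[\log p_{\btheta'}(\bx)]$, so $\nabla^2_{\btheta'}\gH_2(\btheta^\star,\btheta^\star) = \E_{\bx\sim p_{\btheta^\star}}[\nabla^2_\btheta\log p_{\btheta^\star}(\bx)]$ — the same Hessian but under $p_{\btheta^\star}$ rather than $\pdata$. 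The key use of~\Cref{ass:smooth} is to control the difference: since $\bx\mapsto\nabla^2_\btheta\log p_{\btheta^\star}(\bx)$ is $L$-Lipschitz and $d_W(p_{\btheta^\star},\pdata)=\varepsilon$, the Kantorovich duality gives $\|\E_{\bx\sim p_{\btheta^\star}}[\nabla^2_\btheta\log p_{\btheta^\star}(\bx)] - \E_{\bx\sim\pdata}[\nabla^2_\btheta\log p_{\btheta^\star}(\bx)]\|_2 \leq L\varepsilon$. Hence the denominator $-(\nabla^2\gH_1 + \lambda\nabla^2_{\btheta'}\gH_2)$ at $(\btheta^\star,\btheta^\star)$ is $\succeq (\alpha - \lambda(\alpha+\varepsilon L))\mb I_d$ (using that $\nabla^2_{\btheta'}\gH_2 \preceq (-\alpha + L\varepsilon)\mb I_d$ contributes at worst $-\lambda(\alpha+\varepsilon L)$ to the bound in the worst direction, and its norm is at most $\alpha+\varepsilon L$), which is positive definite precisely when $\lambda(1+\tfrac{L\varepsilon}{\alpha})<1$; so its inverse has norm at most $1/(\alpha-\lambda(\alpha+\varepsilon L))$. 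For the numerator I would use the identity (valid when $\log p$ is smooth, essentially a score-function/covariance computation) that relates the cross-derivative $\partial_\btheta\nabla_{\btheta'}\gH_2(\btheta^\star,\btheta^\star)$ to the Hessian of $\gH_2$ in its second argument — indeed $\partial_\btheta\E_{\bx\sim p_\btheta}[\nabla_{\btheta'}\log p_{\btheta'}(\bx)]\big|_{\btheta=\btheta'=\btheta^\star}$ can be shown to equal $-\nabla^2_{\btheta'}\gH_2(\btheta^\star,\btheta^\star)$ (this follows because at $\btheta'=\btheta^\star$, $\E_{\bx\sim p_{\btheta^\star}}[\nabla_{\btheta'}\log p_{\btheta^\star}(\bx)]=0$, so differentiating the stationarity structure couples the two second-order terms). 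Thus the numerator has norm at most $\lambda(\alpha+\varepsilon L)$, and combining yields $\|\gJ_\btheta\gG^\infty_\lambda(\btheta^\star)\|_2 \leq \frac{\lambda(\alpha+\varepsilon L)}{\alpha-\lambda(\alpha+\varepsilon L)}$. Under the hypothesis $\lambda(1+\tfrac{L\varepsilon}{\alpha})<\tfrac12$, i.e. $\lambda(\alpha+\varepsilon L)<\tfrac{\alpha}{2}$, this bound is $<1$.

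For the second stage, set $\rho := \frac{\lambda(\alpha+\varepsilon L)}{\alpha-\lambda(\alpha+\varepsilon L)} < 1$. Since $\gG^\infty_\lambda$ is continuously differentiable near $\btheta^\star$ (by~\Cref{prop:local_uniqueness}) with $\|\gJ_\btheta\gG^\infty_\lambda(\btheta^\star)\|_2\leq\rho$, for any $\epsilon>0$ with $\rho+\epsilon<1$ there is a radius $\delta>0$ such that $\|\gJ_\btheta\gG^\infty_\lambda(\btheta)\|_2\leq\rho+\epsilon$ for all $\btheta$ in the ball $B(\btheta^\star,\delta)$, and we may also shrink $\delta$ so this ball lies in the neighborhood $\gU$ of~\Cref{prop:local_uniqueness}. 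Then the mean value inequality gives $\|\gG^\infty_\lambda(\btheta)-\btheta^\star\| = \|\gG^\infty_\lambda(\btheta)-\gG^\infty_\lambda(\btheta^\star)\| \leq (\rho+\epsilon)\|\btheta-\btheta^\star\|$ on $B(\btheta^\star,\delta)$, so this ball is invariant and the iteration $\btheta_{t+1}=\gG^\infty_\lambda(\btheta_t)$ starting from any $\btheta_0$ with $\|\btheta_0-\btheta^\star\|\leq\delta$ satisfies $\|\btheta_t-\btheta^\star\|\leq(\rho+\epsilon)^t\|\btheta_0-\btheta^\star\|$. Since $\epsilon$ was arbitrary, $\|\btheta_t-\btheta^\star\| = \tilde O(\rho^t\|\btheta_0-\btheta^\star\|)$ in the sense defined in the notation paragraph, and in particular $\btheta_t\to\btheta^\star$.

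The main obstacle is the precise identification and bounding of the cross-derivative term $\partial_\btheta\nabla_{\btheta'}\gH_2(\btheta^\star,\btheta^\star)$: this requires differentiating an expectation with respect to the sampling distribution $p_\btheta$, which is most cleanly handled by recognizing that $\nabla_\btheta\E_{\bx\sim p_\btheta}[g(\bx)] = \E_{\bx\sim p_\btheta}[g(\bx)\nabla_\btheta\log p_\btheta(\bx)]$ and exploiting the vanishing of the score at the fixed point to collapse it onto $-\nabla^2_{\btheta'}\gH_2$. Keeping track of which derivatives act on which argument, and ensuring the worst-case spectral bound on the denominator (a sum of a negative-definite matrix and a $\lambda$-scaled matrix of controlled norm) is tight enough to match the claimed constant, is the delicate bookkeeping; everything else is a routine application of the implicit function theorem and the contraction principle.
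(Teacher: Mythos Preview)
Your overall architecture is right and matches the paper: apply the implicit function theorem to the first-order condition, use the identity $\partial_\btheta\nabla_{\btheta'}\gH_2(\btheta^\star,\btheta^\star)=-\nabla^2_{\btheta'}\gH_2(\btheta^\star,\btheta^\star)$, control the Hessian discrepancy by Kantorovich--Rubinstein duality with~\Cref{ass:smooth}, and finish with a standard local contraction argument. The second stage (local Lipschitz plus mean-value inequality giving the $\tilde O(\rho^t)$ rate) is fine as written.

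The gap is in how you bound the Jacobian. You claim that the ``numerator'' $\lambda\,\nabla^2_{\btheta'}\gH_2(\btheta^\star,\btheta^\star)$ has spectral norm at most $\lambda(\alpha+\varepsilon L)$, i.e.\ that $\|\nabla^2_{\btheta'}\gH_2(\btheta^\star,\btheta^\star)\|_2\le \alpha+\varepsilon L$. This does \emph{not} follow from the assumptions: \Cref{ass:invertible} only says $\nabla^2\gH_1(\btheta^\star)\preceq -\alpha\mb I_d$, which gives a \emph{lower} bound $\alpha$ on its spectral norm, never an upper bound. The eigenvalues of $\nabla^2\gH_1(\btheta^\star)$ can be arbitrarily negative, so $\|\nabla^2\gH_1(\btheta^\star)\|_2$ (and hence $\|\nabla^2_{\btheta'}\gH_2(\btheta^\star,\btheta^\star)\|_2$, which differs from it by at most $L\varepsilon$) is uncontrolled. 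Bounding numerator and denominator separately therefore cannot deliver the stated constant without an extra smoothness hypothesis you do not have.

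The paper sidesteps this by never bounding $\|\nabla^2\gH_1\|$ or $\|\nabla^2_{\btheta'}\gH_2\|$ in isolation. Writing $\mb A=\nabla^2_{\btheta'}\gH_1(\btheta^\star)$ and $\mb B=\nabla^2_{\btheta,\btheta'}\gH_2(\btheta^\star,\btheta^\star)$, it factors
\[
\gJ\gG^\infty_\lambda(\btheta^\star)=\bigl(\mb I_d+\lambda\mb A^{-1}\mb B\bigr)^{-1}\,\lambda\mb A^{-1}\mb B,
\]
and bounds only the \emph{product} $\mb A^{-1}\mb B$ via $\mb A^{-1}\mb B=\mb I_d+\mb A^{-1}(\mb B-\mb A)$, so that $\|\mb A^{-1}\mb B\|_2\le 1+\|\mb A^{-1}\|_2\|\mb B-\mb A\|_2\le 1+L\varepsilon/\alpha$. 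Here $\|\mb A^{-1}\|_2\le 1/\alpha$ uses only the one-sided bound from~\Cref{ass:invertible}, and $\|\mb B-\mb A\|_2\le L\varepsilon$ is exactly your Wasserstein estimate. A Neumann series then gives $\|\gJ\gG^\infty_\lambda(\btheta^\star)\|_2\le \frac{\|\lambda\mb A^{-1}\mb B\|_2}{1-\|\lambda\mb A^{-1}\mb B\|_2}\le \frac{\lambda(\alpha+\varepsilon L)}{\alpha-\lambda(\alpha+\varepsilon L)}$. The factorization through $\mb A^{-1}$ is the missing idea: it lets the potentially large norm of $\mb A$ cancel against itself rather than appearing as a standalone factor.
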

\end{mdframed}
Proof of \Cref{theorem_stability} can be found in \Cref{app:theorem_iterative_stability}.
\Cref{theorem_stability} establishes the stability property of generative models' iterative retraining and quantifies the convergence rate to the fixed point $\btheta^\star$. Interestingly, setting $\lambda$ small enough is always sufficient to get $\|\gJ_{\btheta} \gG^{\infty}_\lambda(\btheta^\star)\|<1$. However, it is important to notice that, to prove that the local maximizer of $\btheta' \mapsto \gH(\btheta,\btheta')$ is unique, we needed to assume that $\varepsilon L < \alpha$. Similarly, we speculate that with additional assumptions on the regularity of $\gH$, the neighborhood size $\delta$ for the local convergence result of~\Cref{theorem_stability} could be controlled with $\varepsilon$.
\cut{We do not provide any formal result explicitly controlling the size of $\delta$ as we believe it would significantly complicate the proof and the theorem statements while proving similar results.}

\looseness=-1
The limitation of~\Cref{theorem_stability} is that it is based on using infinite samples from $\pdata$ and only provides a \emph{sufficient} condition for stability. Informally, we show that if $d_W(p_{\btheta^\star}, \pdata)$ and $\lambda$ are small enough--\ie our class of generative model can approximate well enough the data distribution and we re-use enough real data--, then, the procedure of iterative training is stable. On the one hand, It remains an open question to show that similar conditions as the ones of~\Cref{theorem_stability} are \emph{necessary} for stability.
On the other hand,
\Cref{prop:Gaussian_assump} implies that in the general case under~\Cref{ass:invertible,ass:smooth}, a finite value for $\lambda$ is \emph{necessary} since for a multivariate Gaussian, regardless of the value of $d_W(p_{\btheta^\star}, \pdata)$, if $\lambda$ is infinite (\ie we only use generated data) then we do not have stability and collapse to a generative model with a vanishing variance.


\looseness=-1
Our theoretical results point to a different conclusion than the one experimentally obtained by~\citet[Fig.3]{alemohammad2023selfconsuming}. In particular, stability may be ensured without injecting more ``fresh data''. Instead, a sufficient condition for stability is to have a good enough initial model ($\varepsilon$ small) and a sufficient fraction of ``fixed real data'' ($\lambda$ small enough) for each iterative training step.

\subsection{Iterative Retraining under Statistical Approximation Error}
\label{subsection:general_erm}


We now turn our attention to iterative retraining for generative models under finite sampling.
Motivated by generalization bounds in deep generative models~\citep{generalizationerror, yang2021generalization, ji2021understanding, jakubovitz2019generalization}, we
make the following additional assumption on the approximation capability of generative models as a function of dataset size:
we suppose we have a generalization bound for our class of models with a vanishing term as the sample size increases.
\begin{mdframed}[style=MyFrame2]
\begin{restatable}{assumption}{assgenerror}
    \label{ass_gen_error}
    There exists $a,b, \varepsilon_{\mathrm{OPT}}\geq 0$ and a neighborhood $U$ of $\btheta^\star$ such that with probability $1-\delta$ over the samplings, we have\footnote{We could have kept a more general bound as $C(n, \delta)$, and this choice comes without loss of generality.}
    \begin{equation}
        \forall \btheta \in U , \forall n\in \mathbb{N},
        \|\gG^{n}_{\lambda}(\btheta) - \gG^{\infty}_\lambda(\btheta) \|
        \leq
        \varepsilon_{\mathrm{OPT}} + \frac{a}{\sqrt{n}} \sqrt{\log \frac{b}{\delta}}
        \enspace.
    \end{equation}
\end{restatable}
\end{mdframed}
The constant $\varepsilon_{\mathrm{OPT}}$ is usually negligible and mainly depends on the (controllable) optimization error.
As for the term $a/ \sqrt{n} \cdot \sqrt{\log b/\delta}$, it vanishes to $0$ when increasing $n$.
Intuitively, this assumption means that, in the neighborhood of $\btheta^\star$, the practical iterate $\gG^{n}_\lambda(\btheta)$ can get as close as desired to the ideal parameter $\gG^\infty_\lambda(\btheta)$ by increasing the size of the sample from the dataset $\gD_t$ used for retraining at iteration $t$ and thus decreasing the optimization error.
On the one hand, it is relatively strong to assume that one can approximate $\gG^\infty_\lambda(\btheta)$ in the parameter space (usually such bounds regard the expected loss). On the other hand, the fact that this assumption is local makes it slightly weaker.

\looseness=-1
We now state our main result with finite sample size, with high probability, that iterative retraining remains within a neighborhood of the fixed point~$\btheta^\star\!.$
\begin{mdframed}[style=MyFrame2]
\begin{restatable}{theorem}{neighborhoodstability}
(Approximate Stability)\label{theorem_partial_general}
    Under the same assumptions of~\Cref{theorem_stability} and \Cref{ass_gen_error}, we have that there exists $0<\rho<1$ and $\delta>0$ such that if $\|\btheta_0^n-\btheta^\star\|\leq \delta$ then, with probability $1-\delta$,
    \begin{equation}
        \|\btheta_t^n - \btheta^\star \|
        \leq
        \left(
            \varepsilon_{\mathrm{OPT}} + \frac{a}{\sqrt{n}} \sqrt{\log \frac{bt}{\delta}}\right)
        \sum\limits_{l=0}^t \rho^l
        + \rho^t \| \btheta_0 - \btheta^\star \|\,,\quad \forall t > 0
        \enspace.
    \end{equation}
\end{restatable}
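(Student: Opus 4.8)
The plan is to bootstrap the finite-sample result from the idealized contraction of \Cref{theorem_stability}, treating the statistical fluctuation of \Cref{ass_gen_error} as an additive perturbation, and then unrolling a scalar recursion. First I would extract from (the proof of) \Cref{theorem_stability} a genuine local contraction: since $\|\gJ_\btheta \gG^\infty_\lambda(\btheta^\star)\| \le \tfrac{\lambda(\alpha+\varepsilon L)}{\alpha - \lambda(\alpha+\varepsilon L)}<1$ and $\btheta\mapsto \gJ_\btheta\gG^\infty_\lambda(\btheta)$ is continuous near $\btheta^\star$ (the map $\gG^\infty_\lambda$ being $C^1$ by the implicit-function construction behind \Cref{prop:local_uniqueness}), there is a radius $r>0$, a neighborhood $\gU:=\{\btheta:\|\btheta-\btheta^\star\|\le r\}$ contained both in the neighborhood $U$ of \Cref{ass_gen_error} and in the domain where $\gG^\infty_\lambda$ is well defined, and a constant $\rho\in(0,1)$ with $\gG^\infty_\lambda(\gU)\subseteq\gU$ and, using $\gG^\infty_\lambda(\btheta^\star)=\btheta^\star$ from \Cref{fixed_point},
\[
\|\gG^\infty_\lambda(\btheta)-\btheta^\star\|\ \le\ \rho\,\|\btheta-\btheta^\star\|\qquad\text{for all }\btheta\in\gU.
\]

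Next I would apply \Cref{ass_gen_error} at each retraining step $t$ with a summable failure budget $\delta_t$, $\sum_{t\ge1}\delta_t\le\delta$ (e.g.\ $\delta_t\propto\delta/t^2$), and take a union bound: on an event $\mathcal E$ of probability at least $1-\delta$ one has, for every $t\ge1$ and every $\btheta\in U$, $\|\gG^n_\lambda(\btheta)-\gG^\infty_\lambda(\btheta)\|\le c_t$ with $c_t:=\varepsilon_{\mathrm{OPT}}+\tfrac{a}{\sqrt n}\sqrt{\log\tfrac{bt}{\delta}}$ (the $\log t$ and absolute constants produced by the union bound being absorbed into $a$ and $b$). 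Working on $\mathcal E$, and using $\btheta^n_{t+1}=\gG^n_\lambda(\btheta^n_t)$ from the tie-breaking rule \Cref{eq:tie_break}, the triangle inequality gives, whenever $\btheta^n_t\in\gU$,
\[
\|\btheta^n_{t+1}-\btheta^\star\|\ \le\ \|\gG^n_\lambda(\btheta^n_t)-\gG^\infty_\lambda(\btheta^n_t)\|+\|\gG^\infty_\lambda(\btheta^n_t)-\btheta^\star\|\ \le\ c_{t+1}+\rho\,\|\btheta^n_t-\btheta^\star\|.
\]

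\textbf{Induction and unrolling.} Then I would run an induction on $t$ that simultaneously keeps $\btheta^n_t\in\gU$ and proves the claimed bound. Choosing $\delta\le r/2$ and $n$ large enough that $\tfrac{1}{1-\rho}\,c_t+\delta\le r$ on the horizon of interest guarantees, via the one-step bound, that $\btheta^n_{t+1}$ again lies in $\gU$; granted the containment, unrolling the scalar recursion $u_{t+1}\le c_{t+1}+\rho\,u_t$ from $u_0=\|\btheta^n_0-\btheta^\star\|$ and using $c_s\le c_t$ for $s\le t$ gives $u_t\le c_t\sum_{l=0}^{t-1}\rho^l+\rho^t u_0\le \big(\varepsilon_{\mathrm{OPT}}+\tfrac{a}{\sqrt n}\sqrt{\log\tfrac{bt}{\delta}}\big)\sum_{l=0}^{t}\rho^l+\rho^t\|\btheta_0-\btheta^\star\|$, which is exactly the statement.

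\textbf{Main obstacle.} The delicate part is the neighborhood-containment step: both the contraction and the generalization bound are only available inside $\gU$, so one must certify a priori that the iterates never escape it, which forces a quantitative comparison between the radius $r$ and the accumulated error $\tfrac{c_t}{1-\rho}$; since $\sqrt{\log(bt/\delta)}$ grows with $t$, the ``for all $t$'' conclusion genuinely needs $\varepsilon_{\mathrm{OPT}}$ and $a/\sqrt n$ small (i.e.\ $n$ large), so the cleanest honest version carries ``$n$ sufficiently large'' as a hypothesis. A secondary bookkeeping issue is that each iteration redraws $\gD_{\mathrm{synth}}$, so the fluctuations at distinct steps come from independent samples and the failure probability must be allocated summably over the a priori unbounded number of steps --- this is the origin of the extra $\log t$ inside the square root.
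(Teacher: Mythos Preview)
Your proposal is correct and follows the same route as the paper: split $\|\btheta^n_{t+1}-\btheta^\star\|$ via the triangle inequality into a statistical fluctuation term (bounded by \Cref{ass_gen_error}) plus a contraction term (from \Cref{theorem_stability}), unroll the resulting scalar recursion, and handle the accumulation of failure probabilities across the $t$ steps to produce the $\log(bt/\delta)$ factor.

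The only differences are minor bookkeeping. For the union bound, the paper applies \Cref{ass_gen_error} at each step with failure $\delta$, obtains probability $(1-\delta)^t$, and then substitutes $\delta\mapsto\delta/t$ together with $(1-\delta/t)^t\ge 1-\delta$; you instead allocate a summable budget $\delta_t\propto\delta/t^2$. For the contraction, the paper simply asserts the one-step bound $\|\gG^\infty_\lambda(\btheta)-\btheta^\star\|\le\rho\,\|\btheta-\btheta^\star\|$ by invoking \Cref{theorem_stability}, whereas you derive it explicitly from continuity of the Jacobian near $\btheta^\star$. Your discussion of the neighborhood-containment step---and the attendant need for $n$ large so that $c_t/(1-\rho)$ does not exceed the radius $r$---is in fact more careful than the paper's own proof, which does not address why the iterates remain in the region where both the contraction and \Cref{ass_gen_error} are valid.
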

\end{mdframed}

\looseness=-1
The main takeaway on \Cref{theorem_partial_general} is that the error between the iteratively retrained parameters $\btheta_t^n$ and $\btheta^\star$ can be decomposed in three main terms: the optimization error $\varepsilon_{\mathrm{OPT}} \cdot \sum_{l=0}^t \rho^l$, the statistical error
$a/ \sqrt{n} \cdot \sqrt{\log b/\delta} \cdot \sum_{l=0}^t \rho^l$, and the iterative retraining error
$\rho^t \| \btheta_0 - \btheta^\star \|$.
Interestingly,
under the assumption that the proportion of real data is large enough, the error between $\btheta_t^n$ and $\btheta^\star$ does not necessarily diverge as stated in \cite{shumailov2023curse,alemohammad2023selfconsuming}.

\section{Related Work}
\xhdr{Generative Models trained on Synthetic Data}
The study of generative models within a feedback loop has been a focus of recent contemporary works by ~\citet{alemohammad2023selfconsuming,shumailov2023curse}. Both these works exhibit the model collapse phenomenon in settings where the generative model is iteratively retrained from scratch~\citep{alemohammad2023selfconsuming} and finetuned~\citep{shumailov2023curse}. Similar in spirit to our work, \citet{alemohammad2023selfconsuming} advocates for the injection of real data---and even to add more fresh data---to reduce model collapse but falls short of providing theoretical stability guarantees which we provide. The effect of generative models trained on web-scale datasets contaminated with synthetic samples has also been investigated; corroborating the degradation of generated sample quality~\citep{martinez2023combining,martinez2023towards,Hataya2023}.

\xhdr{Performative Predicition}
\looseness=-1
The nature of the stability results in \Cref{theorem_stability} and \Cref{theorem_partial_general} bear connections with the literature on ``performative prediction" in supervised classification~\citep{perdomo2020performative}. In both cases, the outputs of the models affect the sampling distribution, which influences the objectives themselves. As such, we seek to characterize the existence and nature of fixed points. However, we note that performative prediction focuses on \emph{supervised learning}, and the required assumption in iterative stability for generative models is weaker than those to ensure global convergence in the performative prediction literature. For instance, \citet{perdomo2021performative} require strong convexity with respect to $\btheta$, while \citet{mofakhami2023performative} need stronger continuity properties of the mapping $\btheta \mapsto p_{\btheta}$. In contrast, we only require assumptions on $p_{\btheta}$ around an optimal point $\btheta^\star$.

\section{Experiments}\label{section:experiments}
\label{sec_experiments}
%
We now investigate the iterative retraining of deep generative models in practical settings\footnote{Code can be found at \url{https:github.com/QB3/gen_models_dont_go_mad}.}. As our real-world models ingest datasets sampled from empirical distributions, they necessarily operate under statistical error arising from finite sampling. Other sources of error that inhibit learning the perfect generative model---irrespective of the expressiveness of the architecture---include optimization errors from the learning process and other numerical errors from lack of numerical precision. Our goal with this empirical investigation is to characterize the impact on the performance of these models as a function of $\lambda$ and iterative retraining iterations while holding the size of the real dataset constant.

\xhdr{Datasets and Models}
We perform experiments on synthetic toy data as found in~\citet{grathwohl2018ffjord}, CIFAR-10 ~\citep{Krizhevsky2009}, and FlickrFacesHQ $64 \times 64$ (FFHQ-$64$) datasets~\citep{Karras2019}. For deep generative models, we conduct experiments with continuous normalizing flows~\citep{chen2018neural} constructed using a conditional flow-matching loss (CFM~\citep{lipman_flow_2022,tong2023improving}) and two powerful diffusion models in Denoising Diffusion Probabilistic Models (DDPM, \citealt{Ho2020}) and Elucidating Diffusion Models (EDM, \citealt{Karras2022}) where we relied on the original codebases \texttt{torch-cfm} (\url{https://github.com/atong01/conditional-flow-matching}), \texttt{ddpm-torch} (\url{https://github.com/tqch/ddpm-torch}) and \texttt{edm} (\url{https://github.com/NVlabs/edm}) for faithful implementations.
Finally, we commit to the full release of our code upon publication.

\xhdr{Results on Synthetic Data}
In \Cref{fig_8gaussians_diffusion,fig_two_moons} we illustrate the learned densities of DDPM (\Cref{fig_8gaussians_diffusion}) on the  \emph{8 Gaussians} dataset and conditional normalizing flows (\Cref{fig_two_moons}) on the \emph{two moon} datasets.
On the one hand, we observe model collapse if the generative models are fully retrained on their own synthetic data (top rows).
On the other hand, if the model is retrained on a mix of real and generated data (bottom rows), then the iterative process does not diverge. In \Cref{fig_8gaussians_diffusion,fig_two_moons}, the model is iteratively retrained on as much generated data as real data.
For \Cref{fig_8gaussians_diffusion} we iteratively retrain a DDPM diffusion model for $200$ epochs on $10^4$ samples.
For \Cref{fig_two_moons} we iteratively retrain a conditional flow matching model for $150$ epochs on $10^3$ samples.
%

\xhdr{Experimental Setup}
\looseness=-1
For EDM we use the code, pre-trained models, and optimizers from the official implementation (specifically the \texttt{ddpmpp} architecture with unconditional sampling) and a batch size of $128$ for FFHQ-$64$ and $256$ for CIFAR-$10$. For OTCFM, we used the \texttt{torchCFM} implementation \url{https://github.com/atong01/conditional-flow-matching}.
All other hyperparameters are those provided in the official implementations.
After each training iteration, we generate a set of images of the same size as the training set, $\{\tilde{\bx}_i \}_{i=1}^n$, $\tilde{\bx}_i \sim p_{\btheta}$.
This corresponds to $5 \cdot 10^4$ images for CIFAR-$10$ and $7 \cdot 10^4$ images for FFHQ-$64$.
At each iteration, we compute the FID \citep{Heusel2017} as well as precision and recall (\citealt{kynkaanniemi2019improved}, which are correlated with fidelity and diversity) between the real data and the full set of generated data.
All the metrics are computed using the FLS package (\url{https://github.com/marcojira/fls}, \citealt{jiralerspong2023feature}). See \Cref{app_expe_details} for further details.
Then, following \Cref{alg:iterative_training} we create a dataset which is a mix of all the real data $\gD_{\mathrm{real}}$, and a fraction of the generated data $\gD_{\mathrm{synth}} = \{\tilde{\bx}_i \}_{i=1}^{\floor{\lambda \cdot n}}$.
After each retraining iteration, the network and the optimizer are saved, and the next retraining step resumes from this checkpoint.
CIFAR-$10$ and FFHQ-$64$ models are finetuned for $10^6$ and $1.4 \cdot 10^6$ images, which is equivalent to $20$ epochs on the real dataset.
Note that since the created datasets $\gD_{\mathrm{real}} \cup \{\tilde{\bx}_i \}_{i=1}^{\floor{\lambda \cdot n}}$ do not have the same size (\ie their size depends on $\lambda$:
$|\gD_{\mathrm{real}} \cup \{\tilde{\bx}_i \}_{i=1}^{\floor{\lambda \cdot n}} | = n + \floor{\lambda \cdot n}$), we train on a \emph{constant number of images} and not epochs which is necessary for a fair comparison between multiple fractions $\lambda$ of the incorporated generated data.

\begin{figure}[t]
    \vspace{-1cm}
    \begin{center}
        \includegraphics[width=1.\linewidth]{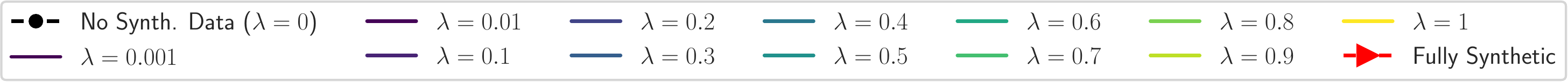}
        \includegraphics[width=1.\linewidth]{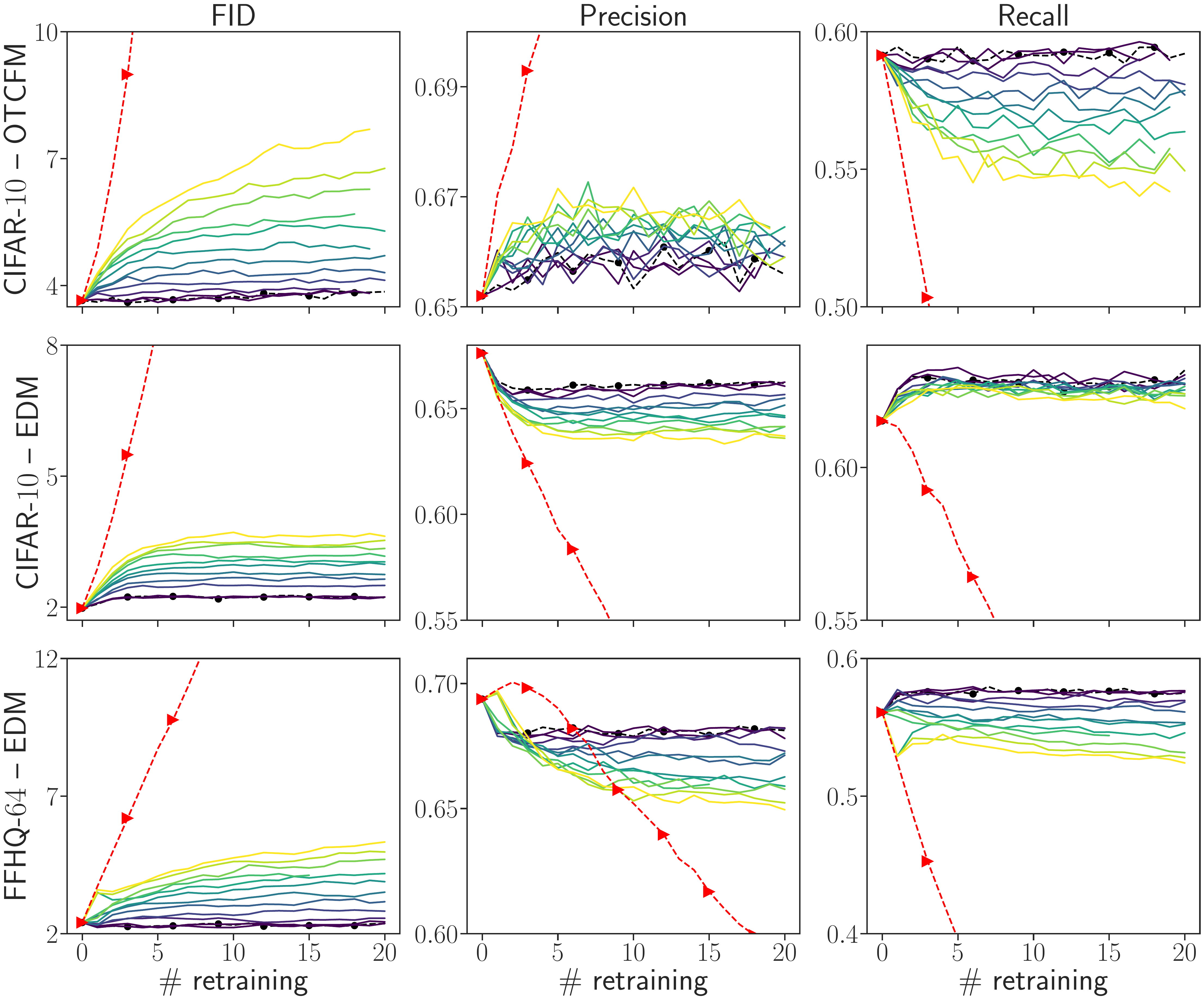}
    \end{center}
    \vspace{-4mm}
    \caption{\small \textbf{FID, precision, and recall of the generative models as a function of the number of retraining for multiple fractions $\lambda$ of generated data,
    $\mathcal{D} = \mathcal{D}_{\mathrm{real}} \cup \{ \tilde{\bx}_i \}_{i=1}^{\floor{\lambda \cdot n}}$, $\tilde{\bx}_i \sim p_{\btheta_t}$.}
    For all models and datasets, only training on synthetic data (dashed red line with triangles) yields divergence.
    For the EDM models on CIFAR-$10$ (middle row), the iterative retraining is stable for all the proportions of generated data from $\lambda=0$ to $\lambda=1$.
    For the EDM on FFHQ-$64$ (bottom row), the iterative retraining is stable if the proportion of used generated data is small enough ($\lambda < 0.5$).
    }
    \label{fig_cifar_ffhq}
\end{figure}

\xhdr{Analysis of Results}
We plot our empirical findings of iterative retraining of OTCFM and EDM in \Cref{fig_cifar_ffhq}. In particular, we report FID, precision, and recall as a function of the number of retraining iterations for various values of $\lambda$.
Note that the dashed black line with circles corresponds to the baseline of retraining without synthetic data $\lambda=0$, while the solid yellow line $\lambda=1$ is retraining with equal amounts of synthetic and original real data.
The dashed red line with triangles corresponds to retraining generative models only on their own data.
Every other line is an interpolation between the settings of the black and yellow lines. As predicted by \Cref{theorem_partial_general}, if the proportion of real data is large enough, \ie the value of $\lambda$ is small enough, then the retraining procedure is stable, and models do not diverge
(see for instance, the CIFAR-$10$ and FFHQ-$64$---EDM rows of \Cref{fig_cifar_ffhq}.
If the fraction of used generated data $\lambda$ is too large, \eg $\lambda =1$, we observe that FID might diverge.
This provides empirical evidence in support of the model collapse phenomenon in both~\citet{alemohammad2023selfconsuming,shumailov2023curse} where the generative models diverge and lead to poor sample quality.
Interestingly, for all fractions $\lambda$ used to create the iterative training dataset $\gD_t$, the precision of EDM models decreases with the number of retraining while their recall increases.
It is worth noting we observe the inverse trend between precision for OTCFM on CIFAR-$10$.

\cut{
\Cref{fig_cifar_ffhq} displays multiple metrics as a function of the number of retraining, for multiple proportions $\lambda$ of the generated data reused at each retraining (\Cref{alg:iterative_training}), and for multiple datasets and generative models.
$0$ retraining corresponds to the pretrained models, \emph{i.e.}, only trained once on the true data.
Then, each model is finetuned on a mixture of the true data and a fraction $\lambda$ of the generated data.
The dashed black line corresponds to retraining without generated data, \emph{i.e.}, $\lambda=0$, simply doing more epochs on the original models.
The solid yellow line, \emph{i.e.}, $\lambda=1$, corresponds to retraining with as much synthetic data as generated data.
The other curves interpolate between the extreme cases $\lambda=0$ and $\lambda=1$.
As predicted by \Cref{theorem_partial_general}, if the proportion of real data is large enough, \emph{i.e}, the value of $\lambda$ is small enough, then the retraining procedure is stable, and models do not diverge (see for instance the CIFAR-$10$ and FFHQ-$64$ -- EDM rows of \Cref{fig_cifar_ffhq} for $\lambda=0.001$, $\lambda=0.01$, $\lambda=0.1$).
If the fraction of used generated data $\lambda$ is too large, $\lambda \geq 0.2$, then the FID seems to diverge linearly for all models and datasets.
Interestingly, for all the fractions $\lambda$ of used generated data, the precision of EDM models decreases with the number of retraining, while their precision increases. In other words, diversity improves at the cost of fidelity.
It is worth noting that this trend is the opposite for the DDPM model on CIFAR-$10$.
}

\section{Conclusion and Open Questions}
\looseness=-1
We investigate the iterative retraining of generative models on their own synthetic data.
Our main contribution is showing that if the generative model initially trained on real data is good enough, and the iterative retraining is made on a mixture of synthetic and real data, then the retraining procedure \Cref{alg:iterative_training} is stable (\Cref{theorem_stability,theorem_partial_general}). Additionally, we validate our theoretical findings (\Cref{theorem_stability,theorem_partial_general}) empirically on natural image datasets (CIFAR-$10$ and FFHQ-$64$) with various powerful generative models (OTCFM, DDPM, and EDM).
One of the limitations of \Cref{theorem_stability,theorem_partial_general} is that the proposed sufficient condition to ensure stability may not be necessary. Precisely, under~\Cref{ass:invertible,ass:smooth}, there is a gap to close between the necessity of $\lambda<+\infty$ and the sufficiency $\lambda(1+\frac{L\epsilon}{\alpha})<1$ for local stability.
Another avenue for future work is to understand better the impact of the finite sampling aspect (described in \Cref{subsection:general_erm}) on iterative retaining stability.
\cut{
    For instance, the collapse in the Gaussian case only happens in the finite sample case.\footnote{With infinitely many samples any parameter $(\pmb{\mu},\mb{\Sigma})$ is a fixed point of iterative training, so the dynamic is neither stable nor unstable toward $(\pmb{\mu}^\star,\mb{\Sigma}^\star)$.}
    Weakening~\Cref{ass_gen_error} and showing stronger instability results within the framework described in \Cref{sec_stability_iterative_retraning} are two rich research directions.
}

\newpage
\section*{Acknowledgements}
\looseness=-1
QB would like to thank Kilian Fatras for providing guidance on the conditional flow matching experiments and Samsung Electronics Co., Ldt. for funding this research.
The authors would like to thank Nvidia for providing computing resources for this research.

\clearpage
\bibliography{iclr2023_conference}

\begin{thebibliography}{50}
\providecommand{\natexlab}[1]{#1}
\providecommand{\url}[1]{\texttt{#1}}
\expandafter\ifx\csname urlstyle\endcsname\relax
  \providecommand{\doi}[1]{doi: #1}\else
  \providecommand{\doi}{doi: \begingroup \urlstyle{rm}\Url}\fi

\bibitem[Alemohammad et~al.(2023)Alemohammad, Casco-Rodriguez, Luzi, Humayun, Babaei, LeJeune, Siahkoohi, and Baraniuk]{alemohammad2023selfconsuming}
S.~Alemohammad, J.~Casco-Rodriguez, L.~Luzi, A.~I. Humayun, H.~Babaei, D.~LeJeune, A.~Siahkoohi, and R.~G. Baraniuk.
\newblock Self-consuming generative models go mad, 2023.

\bibitem[Anil et~al.(2023)Anil, Dai, Firat, Johnson, Lepikhin, Passos, Shakeri, Taropa, Bailey, Chen, et~al.]{anil2023palm}
R.~Anil, A.~M. Dai, O.~Firat, M.~Johnson, D.~Lepikhin, A.~Passos, S.~Shakeri, E.~Taropa, P.~Bailey, Z.~Chen, et~al.
\newblock Palm 2 technical report.
\newblock \emph{arXiv preprint arXiv:2305.10403}, 2023.

\bibitem[Antreas et~al.(2018)Antreas, Amos, and Harrison]{Antoniou2017}
A.~Antreas, S.~Amos, and E.~Harrison.
\newblock Data augmentation generative adversarial networks.
\newblock 2018.

\bibitem[Barfoot(2020)]{barfoot2020multivariate}
T.~D. Barfoot.
\newblock Multivariate gaussian variational inference by natural gradient descent.
\newblock \emph{arXiv preprint arXiv:2001.10025}, 2020.

\bibitem[Boyd and Vandenberghe(2004)]{Boyd2004}
S.~Boyd and L.~Vandenberghe.
\newblock \emph{Convex optimization}.
\newblock Cambridge university press, 2004.

\bibitem[Brown et~al.(2020)Brown, Mann, Ryder, Subbiah, Kaplan, Dhariwal, Neelakantan, Shyam, Sastry, Askell, et~al.]{brown2020language}
T.~Brown, B.~Mann, N.~Ryder, M.~Subbiah, J.~D. Kaplan, P.~Dhariwal, A.~Neelakantan, P.~Shyam, G.~Sastry, A.~Askell, et~al.
\newblock Language models are few-shot learners.
\newblock \emph{Advances in neural information processing systems}, 2020.

\bibitem[Bubeck et~al.(2023)Bubeck, Chandrasekaran, Eldan, Gehrke, Horvitz, Kamar, Lee, Lee, Li, Lundberg, et~al.]{bubeck2023sparks}
S.~Bubeck, V.~Chandrasekaran, R.~Eldan, J.~Gehrke, E.~Horvitz, E.~Kamar, P.~Lee, Y.~T. Lee, Y.~Li, S.~Lundberg, et~al.
\newblock Sparks of artificial general intelligence: Early experiments with gpt-4.
\newblock \emph{arXiv preprint arXiv:2303.12712}, 2023.

\bibitem[Chen et~al.(2018)Chen, Rubanova, Bettencourt, and Duvenaud]{chen2018neural}
R.~T. Chen, Y.~Rubanova, J.~Bettencourt, and D.~K. Duvenaud.
\newblock Neural ordinary differential equations.
\newblock \emph{Advances in neural information processing systems}, 31, 2018.

\bibitem[Chowdhery et~al.(2022)Chowdhery, Narang, Devlin, Bosma, Mishra, Roberts, Barham, Chung, Sutton, Gehrmann, et~al.]{chowdhery2022palm}
A.~Chowdhery, S.~Narang, J.~Devlin, M.~Bosma, G.~Mishra, A.~Roberts, P.~Barham, H.~W. Chung, C.~Sutton, S.~Gehrmann, et~al.
\newblock Palm: Scaling language modeling with pathways.
\newblock \emph{arXiv preprint arXiv:2204.02311}, 2022.

\bibitem[Dong et~al.(2022)Dong, L.~Li, Dai, Zheng, Wu, Chang, Sun, Xu, and Sui]{Dong2022survey_in_context}
Q.~Dong, L.~L.~Li, D.~Dai, C.~Zheng, Z.~Wu, B.~Chang, X.~Sun, J.~Xu, and Z.~Sui.
\newblock A survey for in-context learning.
\newblock \emph{arXiv preprint arXiv:2301.00234}, 2022.

\bibitem[Grathwohl et~al.(2018)Grathwohl, Chen, Bettencourt, Sutskever, and Duvenaud]{grathwohl2018ffjord}
W.~Grathwohl, R.~T. Chen, J.~Bettencourt, I.~Sutskever, and D.~Duvenaud.
\newblock Ffjord: Free-form continuous dynamics for scalable reversible generative models.
\newblock \emph{arXiv preprint arXiv:1810.01367}, 2018.

\bibitem[Hataya et~al.(2023)Hataya, Bao, and Arai]{Hataya2023}
R.~Hataya, H.~Bao, and H.~Arai.
\newblock Will large-scale generative models corrupt future datasets?
\newblock In \emph{Proceedings of the IEEE/CVF International Conference on Computer Vision}, pages 20555--20565, 2023.

\bibitem[Heusel et~al.(2017)Heusel, Ramsauer, Unterthiner, Nessler, and Hochreiter]{Heusel2017}
M.~Heusel, H.~Ramsauer, T.~Unterthiner, B.~Nessler, and S.~Hochreiter.
\newblock Gans trained by a two time-scale update rule converge to a local nash equilibrium.
\newblock \emph{Advances in neural information processing systems}, 30, 2017.

\bibitem[Ho et~al.(2020)Ho, Jain, and Abbeel]{Ho2020}
J.~Ho, A.~Jain, and P.~Abbeel.
\newblock Denoising diffusion probabilistic models.
\newblock \emph{Advances in neural information processing systems}, 33:\penalty0 6840--6851, 2020.

\bibitem[Jakubovitz et~al.(2019)Jakubovitz, Giryes, and Rodrigues]{jakubovitz2019generalization}
D.~Jakubovitz, R.~Giryes, and M.~R. Rodrigues.
\newblock Generalization error in deep learning.
\newblock In \emph{Compressed Sensing and Its Applications: Third International MATHEON Conference 2017}, pages 153--193. Springer, 2019.

\bibitem[Ji et~al.(2021)Ji, Zhou, and Liang]{ji2021understanding}
K.~Ji, Y.~Zhou, and Y.~Liang.
\newblock Understanding estimation and generalization error of generative adversarial networks.
\newblock \emph{IEEE Transactions on Information Theory}, 67\penalty0 (5):\penalty0 3114--3129, 2021.

\bibitem[Jiralerspong et~al.(2023)Jiralerspong, Bose, Gemp, Qin, Bachrach, and Gidel]{jiralerspong2023feature}
M.~Jiralerspong, A.~J. Bose, I.~Gemp, C.~Qin, Y.~Bachrach, and G.~Gidel.
\newblock Feature likelihood score: Evaluating generalization of generative models using samples, 2023.

\bibitem[Kaplan et~al.(2020)Kaplan, McCandlish, Henighan, Brown, Chess, Child, Gray, Radford, Wu, and Amodei]{kaplan2020scaling}
J.~Kaplan, S.~McCandlish, T.~Henighan, T.~B. Brown, B.~Chess, R.~Child, S.~Gray, A.~Radford, J.~Wu, and D.~Amodei.
\newblock Scaling laws for neural language models.
\newblock \emph{arXiv preprint arXiv:2001.08361}, 2020.

\bibitem[Karras et~al.(2019)Karras, Laine, and Aila]{Karras2019}
T.~Karras, S.~Laine, and T.~Aila.
\newblock A style-based generator architecture for generative adversarial networks.
\newblock In \emph{Proceedings of the IEEE/CVF conference on computer vision and pattern recognition}, pages 4401--4410, 2019.

\bibitem[Karras et~al.(2022)Karras, Aittala, Aila, and Laine]{Karras2022}
T.~Karras, M.~Aittala, T.~Aila, and S.~Laine.
\newblock Elucidating the design space of diffusion-based generative models.
\newblock \emph{Advances in Neural Information Processing Systems}, 35:\penalty0 26565--26577, 2022.

\bibitem[Kingma and Welling(2019)]{kingma2019introduction}
D.~P. Kingma and M.~Welling.
\newblock An introduction to variational autoencoders.
\newblock \emph{Foundations and Trends{\textregistered} in Machine Learning}, 12\penalty0 (4):\penalty0 307--392, 2019.

\bibitem[Krizhevsky and Hinton(2009)]{Krizhevsky2009}
A.~Krizhevsky and G.~Hinton.
\newblock Learning multiple layers of features from tiny images.
\newblock 2009.

\bibitem[Kynk{\"a}{\"a}nniemi et~al.(2019)Kynk{\"a}{\"a}nniemi, Karras, Laine, Lehtinen, and Aila]{kynkaanniemi2019improved}
T.~Kynk{\"a}{\"a}nniemi, T.~Karras, S.~Laine, J.~Lehtinen, and T.~Aila.
\newblock Improved precision and recall metric for assessing generative models.
\newblock \emph{Advances in Neural Information Processing Systems}, 32, 2019.

\bibitem[Lang(1999)]{lang2012fundamentals}
S.~Lang.
\newblock \emph{Fundamentals of differential geometry}.
\newblock Springer Science \& Business Media, 1999.

\bibitem[Lipman et~al.(2022)Lipman, Chen, {Ben-Hamu}, Nickel, and Le]{lipman_flow_2022}
Y.~Lipman, R.~T.~Q. Chen, H.~{Ben-Hamu}, M.~Nickel, and M.~Le.
\newblock Flow matching for generative modeling, Oct. 2022.

\bibitem[Marshall et~al.(1979)Marshall, Olkin, and Arnold]{Marshall1979}
A.~W. Marshall, I.~Olkin, and B.~C. Arnold.
\newblock Inequalities: theory of majorization and its applications.
\newblock 1979.

\bibitem[Mart{\'\i}nez et~al.(2023{\natexlab{a}})Mart{\'\i}nez, Watson, Reviriego, Hern{\'a}ndez, Juarez, and Sarkar]{martinez2023combining}
G.~Mart{\'\i}nez, L.~Watson, P.~Reviriego, J.~A. Hern{\'a}ndez, M.~Juarez, and R.~Sarkar.
\newblock Combining generative artificial intelligence (ai) and the internet: Heading towards evolution or degradation?
\newblock \emph{arXiv preprint arXiv:2303.01255}, 2023{\natexlab{a}}.

\bibitem[Mart{\'\i}nez et~al.(2023{\natexlab{b}})Mart{\'\i}nez, Watson, Reviriego, Hern{\'a}ndez, Juarez, and Sarkar]{martinez2023towards}
G.~Mart{\'\i}nez, L.~Watson, P.~Reviriego, J.~A. Hern{\'a}ndez, M.~Juarez, and R.~Sarkar.
\newblock Towards understanding the interplay of generative artificial intelligence and the internet.
\newblock \emph{arXiv preprint arXiv:2306.06130}, 2023{\natexlab{b}}.

\bibitem[Midjourney(2023)]{MidJourney}
Midjourney.
\newblock \url{https://www.midjourney.com/home/}, 2023.
\newblock Accessed: 2023-09-09.

\bibitem[Mitchell et~al.(2023)Mitchell, Lee, Khazatsky, Manning, and Finn]{mitchell2023detectgpt}
E.~Mitchell, Y.~Lee, A.~Khazatsky, C.~D. Manning, and C.~Finn.
\newblock Detectgpt: Zero-shot machine-generated text detection using probability curvature.
\newblock \emph{arXiv preprint arXiv:2301.11305}, 2023.

\bibitem[Mofakhami et~al.(2023)Mofakhami, Mitliagkas, and Gidel]{mofakhami2023performative}
M.~Mofakhami, I.~Mitliagkas, and G.~Gidel.
\newblock Performative prediction with neural networks, 2023.

\bibitem[Oko et~al.(2023)Oko, Akiyama, and Suzuki]{oko2023diffusion}
K.~Oko, S.~Akiyama, and T.~Suzuki.
\newblock Diffusion models are minimax optimal distribution estimators.
\newblock \emph{arXiv preprint arXiv:2303.01861}, 2023.

\bibitem[OpenAI(2023)]{OpenAI2023GPT4TR}
OpenAI.
\newblock Gpt-4 technical report.
\newblock \emph{ArXiv}, abs/2303.08774, 2023.

\bibitem[Perdomo et~al.(2020)Perdomo, Zrnic, Mendler-D{\"u}nner, and Hardt]{perdomo2020performative}
J.~Perdomo, T.~Zrnic, C.~Mendler-D{\"u}nner, and M.~Hardt.
\newblock Performative prediction.
\newblock In \emph{International Conference on Machine Learning}, pages 7599--7609. PMLR, 2020.

\bibitem[Perdomo et~al.(2021)Perdomo, Zrnic, Mendler-Dünner, and Hardt]{perdomo2021performative}
J.~C. Perdomo, T.~Zrnic, C.~Mendler-Dünner, and M.~Hardt.
\newblock Performative prediction, 2021.

\bibitem[Ramesh et~al.(2021)Ramesh, Pavlov, Goh, Gray, Voss, Radford, Chen, and Sutskever]{ramesh2021zero}
A.~Ramesh, M.~Pavlov, G.~Goh, S.~Gray, C.~Voss, A.~Radford, M.~Chen, and I.~Sutskever.
\newblock Zero-shot text-to-image generation.
\newblock In \emph{International Conference on Machine Learning}. PMLR, 2021.

\bibitem[Rezende and Mohamed(2015)]{Rezende2015}
D.~Rezende and S.~Mohamed.
\newblock Variational inference with normalizing flows.
\newblock In \emph{International conference on machine learning}, pages 1530--1538. PMLR, 2015.

\bibitem[Sadasivan et~al.(2023)Sadasivan, Kumar, Balasubramanian, Wang, and Feizi]{sadasivan2023can}
V.~S. Sadasivan, A.~Kumar, S.~Balasubramanian, W.~Wang, and S.~Feizi.
\newblock Can ai-generated text be reliably detected?
\newblock \emph{arXiv preprint arXiv:2303.11156}, 2023.

\bibitem[Sajjadi et~al.(2018)Sajjadi, Bachem, Lucic, Bousquet, and Gelly]{Sajjadi2018}
M.~S.-M. Sajjadi, O.~Bachem, M.~Lucic, O.~Bousquet, and S.~Gelly.
\newblock Assessing generative models via precision and recall.
\newblock \emph{Advances in neural information processing systems}, 31, 2018.

\bibitem[Schuhmann et~al.(2022)Schuhmann, Beaumont, Vencu, Gordon, Wightman, Cherti, Coombes, Katta, Mullis, Wortsman, et~al.]{schuhmann2022laion}
C.~Schuhmann, R.~Beaumont, R.~Vencu, C.~Gordon, R.~Wightman, M.~Cherti, T.~Coombes, A.~Katta, C.~Mullis, M.~Wortsman, et~al.
\newblock {LAION-5B}: An open large-scale dataset for training next generation image-text models.
\newblock \emph{Advances in Neural Information Processing Systems}, 2022.

\bibitem[Shumailov et~al.(2023)Shumailov, Shumaylov, Zhao, Gal, Papernot, and Anderson]{shumailov2023curse}
I.~Shumailov, Z.~Shumaylov, Y.~Zhao, Y.~Gal, N.~Papernot, and R.~Anderson.
\newblock The curse of recursion: Training on generated data makes models forget, 2023.

\bibitem[Song et~al.(2021)Song, Sohl-Dickstein, Kingma, Kumar, Ermon, and Poole]{Song2021}
Y.~Song, J.~Sohl-Dickstein, D.~P. Kingma, A.~Kumar, S.~Ermon, and B.~Poole.
\newblock Score-based generative modeling through stochastic differential equations.
\newblock \emph{ICLR}, 2021.

\bibitem[{Stability~AI}(2023)]{StablediffXL}
{Stability~AI}.
\newblock \url{https://stability.ai/stablediffusion}, 2023.
\newblock Accessed: 2023-09-09.

\bibitem[Steinhardt(2022)]{AIForecast}
J.~Steinhardt.
\newblock {AI Forecasting: One Year In }.
\newblock \url{https://bounded-regret.ghost.io/ai-forecasting-one-year-in/}, 2022.
\newblock Accessed: 2023-09-09.

\bibitem[Teshima et~al.(2020)Teshima, Ishikawa, Tojo, Oono, Ikeda, and Sugiyama]{teshima2020coupling}
T.~Teshima, I.~Ishikawa, K.~Tojo, K.~Oono, M.~Ikeda, and M.~Sugiyama.
\newblock Coupling-based invertible neural networks are universal diffeomorphism approximators.
\newblock \emph{Advances in Neural Information Processing Systems}, 33:\penalty0 3362--3373, 2020.

\bibitem[Tong et~al.(2023)Tong, Malkin, Huguet, Zhang, {Rector-Brooks}, Fatras, Wolf, and Bengio]{tong2023improving}
A.~Tong, N.~Malkin, G.~Huguet, Y.~Zhang, J.~{Rector-Brooks}, K.~Fatras, G.~Wolf, and Y.~Bengio.
\newblock Improving and generalizing flow-based generative models with minibatch optimal transport.
\newblock \emph{arXiv preprint 2302.00482}, 2023.

\bibitem[Villani et~al.(2009)]{villani2009optimal}
C.~Villani et~al.
\newblock \emph{Optimal transport: old and new}, volume 338.
\newblock Springer, 2009.

\bibitem[Watson et~al.(2023)Watson, Juergens, Bennett, Trippe, Yim, Eisenach, Ahern, Borst, Ragotte, Milles, et~al.]{watson2023novo}
J.~L. Watson, D.~Juergens, N.~R. Bennett, B.~L. Trippe, J.~Yim, H.~E. Eisenach, W.~Ahern, A.~J. Borst, R.~J. Ragotte, L.~F. Milles, et~al.
\newblock De novo design of protein structure and function with rfdiffusion.
\newblock \emph{Nature}, pages 1--3, 2023.

\bibitem[Yang and E(2021{\natexlab{a}})]{generalizationerror}
H.~Yang and W.~E.
\newblock Generalization error of gan from the discriminator's perspective, 2021{\natexlab{a}}.

\bibitem[Yang and E(2021{\natexlab{b}})]{yang2021generalization}
H.~Yang and W.~E.
\newblock Generalization and memorization: The bias potential model, 2021{\natexlab{b}}.

\end{thebibliography}
\bibliographystyle{abbrvnat}
\newpage
\appendix
\section{Proof of \Cref{prop:model_collapse}}
\label{app:gaussian_model_collapse}
\begin{align}\tag{\ref{sampling_step}}
    &\text{Sampling step:}
    \quad
    \begin{cases}
        \bx_t^j = \pmb{\mu}_{t} + \sqrt{\Sigma_{t}} \mb{z}_t^{j}, \text{ with }
        \mb{Z}^{t}_j \overset{\text{i.i.d.}}{\sim } \N_{0,\bI},\, 1\leq j\leq n \enspace,
    \end{cases}
    \\
     &
     \tag{\ref{learning_step}}
     \text{Learning step:}
     \quad
     \begin{cases}
    \pmb{\mu}_{t+1}
    & =
     \frac{1}{n} \sum\limits_{j}{\mb{x}_t^j}
     \enspace, \, \,
    \Sigma_{t+1}
    =
    \frac{1}{n - 1}
    \sum\limits_{j}{\left( \mb{x}_t^j - \pmb{\mu}_{t+1} \right)\left( \mb{x}_t^j - \pmb{\mu}_{t+1} \right)^T} .
    \end{cases}
\end{align}

\gaussianmodelcollapse*

For exposition purposes, we first prove \Cref{prop:model_collapse} for a single one-dimensional Gaussian (\Cref{app_sub_oneD_Gaussian}).
Then prove  \Cref{prop:model_collapse} for a single multivariate Gaussian in
\Cref{app_sub_multidim_Gausian}.
\subsection{1D-Gaussian case.}
\label{app_sub_oneD_Gaussian}
In the one-dimensional case, the sampling and the learning steps become
\begin{align}
    &\text{Sampling step:}
    \quad
    \begin{cases}
        x^{t}_j = \mu_{t} + \sigma_{t} z_t^j, \text{ with }
        z_t^j \overset{\text{i.i.d.}}{\sim } \N_{0,\bI},\, 1\leq j\leq n \enspace,
    \end{cases}
    \label{eq_sampling_step_1D}
    \\
     &
     \text{Learning step:}
     \quad
     \begin{cases}
    \mu_{t+1}
    & =
     \frac{1}{n} \sum\limits_{j}{x_t^j}
     \enspace, \, \,
    \sigma_{t+1}^2
    =
    \frac{1}{n - 1}
    \sum\limits_{j}{\left( x_t^j - \mu_{t+1} \right)^2}
    \enspace.
    \end{cases}
    \label{eq_learning_step_1D}
\end{align}

     \Cref{prop:model_collapse} can be proved using the following lemmas.
First, one can obtain a recursive formula for the variances $\sigma_t^2$ (\Cref{lemma_sigma_square}).
Then \emph{a key component of the proof is to take the expectation of the standard deviation} $\sigma_t$ (not the variance $\sigma_t^2$, which stays constant because we used an unbiased estimator of the variance), this is done in \Cref{lemma_Cochran}.
Finally, \Cref{lemma_E_sigma_t} shows how to upper-bound the expectation of the square root of a $\chi^2$ variable.
\begin{lemma}\label{all_lemmas_gaussian1D}
    \begin{lemmaenum}
        \item \label{lemma_sigma_square}
        If $(\sigma_t)$ and $(\mu_t)$ follow \Cref{eq_sampling_step_1D,eq_learning_step_1D}, then, for all $\mu_0, \sigma_0$,
        \begin{align}
            \sigma^2_{t+1}
            =
            \frac{1}{n - 1} \sum\limits_{j} \left(z_t^j
            -
            \frac{1}{n}\sum\limits_{j'} Z^t_{j'} \right)^2
            \sigma_t^2
            \enspace.
        \end{align}
        \item \label{lemma_Cochran}
            \begin{equation}
                \forall t \geq 1,
                \sigma_{t+1} =
                \frac{\sqrt{S_t}}{\sqrt{n-1}}
                \sigma_t
                \enspace,
                \sqrt{S_t}\sim \chi_{n-1}
                \enspace.
            \end{equation}
        \item \label{lemma_E_sigma_t}
            \begin{align}
                \forall t \leq 1, \E(\sigma_t)
                = \alpha^t \sigma_0
                \enspace,
            \end{align}
            with $\enspace \alpha = \alpha(n) = \dfrac{\E \left (\sqrt{\chi_{n-1}^2} \right ) }{\sqrt{n-1}} < 1$ .
    \end{lemmaenum}
\end{lemma}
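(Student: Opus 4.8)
The plan is to establish the three items of \Cref{all_lemmas_gaussian1D} in order, each one feeding into the next, and then combine them to read off \Cref{prop:model_collapse}.

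For \Cref{lemma_sigma_square} I would simply substitute the sampling step into the learning step. Writing $\bar z_t := \frac1n\sum_{j'} z_t^{j'}$, the empirical mean is $\mu_{t+1} = \mu_t + \sigma_t\bar z_t$, so that $x_t^j - \mu_{t+1} = \sigma_t(z_t^j - \bar z_t)$; plugging this into the unbiased variance estimator $\sigma_{t+1}^2 = \frac{1}{n-1}\sum_j (x_t^j - \mu_{t+1})^2$ gives $\sigma_{t+1}^2 = \frac{\sigma_t^2}{n-1}\sum_j(z_t^j - \bar z_t)^2$. This step is pure algebra. For \Cref{lemma_Cochran} the point is that $S_t := \sum_j (z_t^j - \bar z_t)^2$ is the (unnormalized) sample variance of $n$ i.i.d.\ standard normals, so by Cochran's theorem $S_t \sim \chi^2_{n-1}$; taking positive square roots — legitimate because $\sigma_t > 0$ for all $t$, which follows by an immediate induction from $\sigma_0 > 0$ and the recursion — yields $\sigma_{t+1} = \frac{\sqrt{S_t}}{\sqrt{n-1}}\,\sigma_t$ with $\sqrt{S_t} \sim \chi_{n-1}$. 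I would also record here the observation that is essential for the last item: $S_t$ is a measurable function of the fresh noise $(z_t^j)_j$ alone, hence independent of $\sigma_t$, which is a function of $(z_s^j)_{s<t,\,j}$.

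For \Cref{lemma_E_sigma_t} I would iterate the recursion of \Cref{lemma_Cochran} and take expectations. Using $S_t \perp \sigma_t$, the expectation factorizes: $\E(\sigma_{t+1}) = \frac{\E(\sqrt{S_t})}{\sqrt{n-1}}\,\E(\sigma_t) = \alpha\,\E(\sigma_t)$ with $\alpha = \alpha(n) = \E(\sqrt{\chi^2_{n-1}})/\sqrt{n-1}$, and then a one-line induction, using that $\sigma_0$ is deterministic, gives $\E(\sigma_t) = \alpha^t\sigma_0$. Finally $\alpha < 1$ follows from a strict Jensen inequality: $\sqrt{\cdot}$ is strictly concave and $\chi^2_{n-1}$ is non-degenerate, so $\E(\sqrt{\chi^2_{n-1}}) < \sqrt{\E(\chi^2_{n-1})} = \sqrt{n-1}$. (To pass from here to \Cref{prop:model_collapse} in the multivariate case one diagonalizes and reduces to this scalar estimate, but that is the job of \Cref{app_sub_multidim_Gausian}.)

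The main conceptual point — and essentially the only place that requires care — is the decision to track $\E(\sigma_t)$ rather than $\E(\sigma_t^2)$: the latter is constant in $t$ precisely because the variance estimator is unbiased, so it is blind to the collapse. The collapse is driven exactly by the Jensen gap between $\E(\sqrt{S_t/(n-1)})$ and $\sqrt{\E(S_t/(n-1))} = 1$, and this sub-unit multiplicative factor compounds geometrically across retraining iterations. The only genuine technical subtlety is the factorization of the expectation, which is why I would isolate the independence of the step-$t$ noise from the history when proving \Cref{lemma_Cochran}; everything else is substitution, Cochran's theorem, and induction.
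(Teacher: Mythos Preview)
Your proposal is correct and follows essentially the same route as the paper: substitute the sampling step into the learning step to get the recursion on $\sigma_t^2$, invoke Cochran's theorem to identify $S_t \sim \chi^2_{n-1}$, use the independence of $S_t$ from $\sigma_t$ to factor the expectation, and conclude $\alpha<1$ via the strict Jensen inequality for the square root. You are, if anything, slightly more careful than the paper (you explicitly justify taking positive square roots and isolate the independence argument), but the structure and all key ideas coincide.
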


\begin{proof}{\Cref{lemma_sigma_square}.}
    Let $t \geq 1$.
    \begin{align}
        \mu_{t+1} &= \frac{1}{n} \sum\limits_{j} X^t_j = \frac{1}{n} \sum\limits_{j} (\mu_{t} + z_t^j \sigma_{t})\\
        &= \mu_t + \frac{\sigma_t}{n}\sum\limits_{j} z^t_j
        \enspace,
    \end{align}
    hence
    \begin{align}
        \sigma^2_{t+1} &= \frac{1}{n - 1} \sum\limits_{j} \left( X^t_j - \mu_{t+1} \right)^2\\
        &=
        \frac{1}{n - 1} \sum\limits_{j} \left(\mu_{t}
        + z_t^j \sigma_{t}
        - \mu_t - \frac{\sigma_t}{n}\sum\limits_{j'} z^t_{j'} \right)^2
        \\
        &=
        \frac{1}{n - 1} \sum\limits_{j} \left(z_t^j
        -
        \frac{1}{n}\sum\limits_{j'} z^t_{j'} \right)^2
        \sigma_t^2
        \enspace.
        \label{eq:rec}
    \end{align}
\end{proof}

\begin{proof}{\Cref{lemma_Cochran}}
    Cochran theorem states that $S_t := \displaystyle{\sum\limits_{j}} \left(z_t^j - \frac{1}{n}\sum\limits_{j'} z^t_{j'} \right)^2\sim \chi^2_{n-1}$.
    Combined with \Cref{lemma_sigma_square}, this yields
    \begin{equation}
        \forall t \geq 1,
        \sigma_{t+1} =
        \frac{\sqrt{S_t}}{\sqrt{n-1}}
        \sigma_t
        \enspace,
    \end{equation}
    where $\sqrt{S_t}\sim \chi_{n-1}$.
\end{proof}

\begin{proof}{\Cref{lemma_E_sigma_t}.}
        The independence of the $z_t^j$ implies independence of the $S_t$, which yields that for all $t\geq 1$
    \begin{align}
        \E(\sigma_{t+1} | \sigma_t )
        &=
        \E \left (\frac{\sqrt{S_t}}{\sqrt{n-1}} | \sigma_t \right)
        \sigma_t
        \\
        &=
        \E \left (\frac{\sqrt{S_t}}{\sqrt{n-1}} \right)
        \sigma_t
        \enspace \text{because $S_t \indep \sigma_t$}
        \enspace,
        \\
        \E(\sigma_{t+1})
        &=
        \underbrace{\E \left (\frac{\sqrt{S_t}}{\sqrt{n-1}} \right ) }_{:= \alpha}
        \E ( \sigma_t )
        \enspace
        \text{because $\E (\E(\sigma_{t+1} | \sigma_t )) = \E(\sigma_{t+1}) $}
        \enspace.
    \end{align}
    The independence of the $Z^t$ yields independence of the $S_t$.
    Combined with the strict Jensen inequality applied to the square root, it yields
    \begin{align}
        \alpha &= \E \left (\frac{\sqrt{S_t}}{\sqrt{n-1}} \right )
        \\
        &< \sqrt{\E \left (\frac{S_t}{n-1} \right )}
        \\
        & =1
        \enspace
        \text{because $\E (S_t) = n-1$ since $S_t \sim \chi_{n-1}^2$}
        \enspace.
    \end{align}
    The strict Jensen inequality comes from the fact that the square root function is strictly convex, and $S_t \sim \chi_{n-1}^2$ is not a constant random variable.
    Note that $\alpha$ is strictly smaller than $1$ and is independent of the number of retraining. However $\alpha$ depends on the number of samples $n$, and that $\alpha(n) \rightarrow 1$, when $n \rightarrow \infty$.

\end{proof}

\cut{
\begin{proof}{\Cref{lemma_alpha_monotone}.}
        \begin{enumerate}
            \item For $n \geq 1$,
            \begin{equation}
                \frac{\alpha(n+1)}{\alpha_n} = \sqrt{\frac{n-1}{n}}\cdot \frac{\Gamma((n+1)/2)}{\Gamma((n-1)/2)}
                \enspace.
            \end{equation}
            \begin{itemize} \setlength{\itemindent}{0em}
                \item If $n$ is odd then
                \begin{equation}
                    \frac{\alpha(n+1)}{\alpha(n)}
                    =
                    \sqrt{\frac{n-1}{n}}
                    \cdot
                    \frac{\frac{n+1}{2}-1}{\frac{n-1}{2}-1}
                    =
                    \sqrt{\frac{n-1}{n}}\cdot \frac{n-1}{2} > 1
                    \enspace.
                \end{equation}
                \item If $n$ is even then
                \begin{flalign}
                    \frac{\alpha(n+1)}{\alpha(n)}
                    &=
                    \sqrt{\frac{n-1}{n}}
                    \cdot
                    \frac{\sqrt{\pi}\frac{n!}{2^{n}(\frac{n}{2})!}}{\sqrt{\pi}\frac{(n-2)!}{2^{n-2}(\frac{n-2}{2})!}}\\
                    &= \sqrt{\frac{n-1}{n}}\cdot n(n-1)\cdot \frac{1}{4}
                    \cdot
                    \frac{2}{n} > 1
                    \enspace.
                \end{flalign}
            \end{itemize}
            \item Using $\Gamma(n) \sim \sqrt{2\pi} n^{n-1/2}e^{-n}$, we get
            \begin{flalign}
                \alpha(n)
                &=
                \sqrt{\frac{2}{n-1}} \frac{\Gamma(n/2)}{\Gamma((n-1)/2)}
                \\
                &\sim \sqrt{\frac{2}{n-1}} \frac{\sqrt{2\pi} (n/2)^{n/2-1/2} e^{-n/2}}{\sqrt{2\pi} ((n-1)/2)^{(n-1)/2-1/2}e^{-(n-1)/2}}\\
                &\sim \sqrt{\frac{2}{n-1}} \cdot \left(\frac{n}{n-1}\right)^\frac{n-1}{2} \cdot \left(\frac{n}{2}\right)^{1/2} \cdot e^{-1/2}\\
                &\sim e^{-1/2} e^{\frac{n-1}{2} \log\left(1 + \frac{1}{n-1}\right)}\underset{+\infty}{\rightarrow} 1
                \enspace.
            \end{flalign}
        \end{enumerate}

        \end{proof}
        \Cref{prop:model_collapse} easily follows from \Cref{lemma_E_sigma_t}.
}
\subsection{Multi-dimensional case.}
\label{app_sub_multidim_Gausian}
\begin{align}\tag{\ref{sampling_step}}
    &\text{Sampling step:}
    \quad
    \begin{cases}
        \bx_t^j = \pmb{\mu}_{t} + \sqrt{\Sigma_{t}} \mb{z}_t^{j}, \text{ with }
        \mb{z}_t^j \overset{\text{i.i.d.}}{\sim } \N (0, \mb{I}),\, 1\leq j\leq n \enspace,
    \end{cases}
    \\
     &
     \tag{\ref{learning_step}}
     \text{Learning step:}
     \quad
     \begin{cases}
    \pmb{\mu}_{t+1}
    & =
     \frac{1}{n} \sum\limits_{j}{\mb{x}_t^j}
     \enspace, \, \,
    \bf{\Sigma}_{t+1}
    =
    \frac{1}{n - 1}
    \sum\limits_{j}{\left( \mb{x}_t^j - \pmb{\mu}_{t+1} \right)\left( \mb{x}_t^j - \pmb{\mu}_{t+1} \right)^T} .
    \end{cases}
\end{align}

    The generalization to the multi-dimensional follows the same scheme as the unidimensional case: the key point is to upper the expectation of the \emph{square root of the covariance matrix}
\begin{lemma}\label{all_lemmas_gaussian_multidim}
    \begin{lemmaenum}
        \item \label{lemma_sigma_square_multi}
        If $(\mb{\Sigma}_t)$ and $(\pmb{\mu}_t)$ follow \Cref{sampling_step,learning_step}, then, for all $\pmb{\mu}_0, \mb{\Sigma}_0$,
        \begin{align}
            \mb{\Sigma}_{t+1}
            &=
            \frac{1}{n - 1}
            \sqrt{\mb{\Sigma}_t}
            \underbrace{
                \left[
                    \sum\limits_{j}
                    \left(
                        \mb{z}_t^{j} - \frac{1}{n}\sum\limits_{j'} \mb{Z}^t_{j'}
                    \right)
                    \left(
                        \mb{z}_t^{j} - \frac{1}{n}\sum\limits_{j'} \mb{Z}^t_{j'}
                    \right)^\top
                    \right]
            }_{:= \mb{S} }
            \sqrt{\mb{\Sigma}_t}
            \enspace.
        \end{align}
        \item \label{lemma_expectation_empirical_cov}
        Let
        \begin{align}
            \mb{S} = \sum\limits_{j}
            \left(
                \mb{z}_t^{j} - \frac{1}{n}\sum\limits_{j'} \mb{Z}^t_{j'}
            \right)
            \left(
                \mb{z}_t^{j} - \frac{1}{n}\sum\limits_{j'} \mb{Z}^t_{j'}
            \right)^\top
            \enspace,
        \end{align}
        then
        \begin{align}
             \sqrt{\E(\mb{S})} = \sqrt{n-1} \mb{I}_d
            \enspace,
        \end{align}
        \item \label{lemma_stric_jensen} Then
        \begin{align}
            \E(\sqrt{\mb{S}}) \preceq \alpha \sqrt{n-1} \mb{I}_d = \sqrt{\E(\mb{S})}
            \enspace,
        \end{align}
        with $0 \leq \alpha = \lambda_{\max}\left ( \frac{\E(\sqrt{\mb{S}})}{\sqrt{n-1}} \right) < 1$.
        \item \label{lemma_E_sigma_t_multi}
        Finally
        \begin{align}
                \E \left (\sqrt{\mb{\Sigma}_{t+1}} \right )
                & \preceq
                \alpha
                \E \left ( \sqrt{\mb{\Sigma}_t} \right )
                \enspace,
            \end{align}
            with $0 \leq \alpha < 1$.
    \end{lemmaenum}
\end{lemma}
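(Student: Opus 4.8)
I would prove the four items in the order stated, each feeding the next, and then conclude \Cref{prop:model_collapse} by iterating \emph{(iv)}: $\E[\sqrt{\mathbf{\Sigma}_{t+1}}]\preceq \alpha\,\E[\sqrt{\mathbf{\Sigma}_t}]$ gives $\E[\sqrt{\mathbf{\Sigma}_t}]\preceq\alpha^t\sqrt{\mathbf{\Sigma}_0}$ by induction (the right-hand side is deterministic). Items \emph{(i)} and \emph{(ii)} are routine. For \emph{(i)}, substitute the sampling identity $\mathbf{x}_t^j-\pmb{\mu}_{t+1}=\sqrt{\mathbf{\Sigma}_t}\,\mathbf{z}_t^j-\tfrac1n\sum_{j'}\sqrt{\mathbf{\Sigma}_t}\,\mathbf{z}_t^{j'}=\sqrt{\mathbf{\Sigma}_t}\bigl(\mathbf{z}_t^j-\tfrac1n\sum_{j'}\mathbf{z}_t^{j'}\bigr)$ into the learning-step formula and pull $\sqrt{\mathbf{\Sigma}_t}$ out of the sum on both sides; the bracket is exactly $\mathbf S$ (this mirrors \Cref{lemma_sigma_square}). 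For \emph{(ii)}, $\tfrac1{n-1}\mathbf S$ is the unbiased sample-covariance estimator of $\N(0,\mathbf I_d)$, so $\E[\mathbf S]=(n-1)\mathbf I_d$, hence $\sqrt{\E[\mathbf S]}=\sqrt{n-1}\,\mathbf I_d$.

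\textbf{Item (iii).} The workhorse is an elementary scalar fact: for any unit vector $\mathbf u$ and PSD $M$, writing the spectral decomposition of $M$ and applying concavity of $\sqrt{\cdot}$ yields $\mathbf u^\top M^{1/2}\mathbf u\le\sqrt{\mathbf u^\top M\mathbf u}$. Taking $M=\mathbf S$, taking expectations, and using the scalar Jensen inequality gives $\mathbf u^\top\E[\sqrt{\mathbf S}]\,\mathbf u=\E[\mathbf u^\top\sqrt{\mathbf S}\,\mathbf u]\le\E[\sqrt{\mathbf u^\top\mathbf S\mathbf u}]\le\sqrt{\E[\mathbf u^\top\mathbf S\mathbf u]}=\sqrt{n-1}$, i.e.\ $\E[\sqrt{\mathbf S}]\preceq\sqrt{n-1}\,\mathbf I_d=\sqrt{\E[\mathbf S]}$. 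For strictness, note $\mathbf u^\top\mathbf S\mathbf u=\sum_j(\mathbf u^\top(\mathbf z_t^j-\bar{\mathbf z}_t))^2$ is a non-degenerate (scaled $\chi^2_{n-1}$) random variable, so the scalar Jensen step is strict: $\E[\mathbf u^\top\sqrt{\mathbf S}\,\mathbf u]<\sqrt{n-1}$ for every unit $\mathbf u$; maximizing the continuous map $\mathbf u\mapsto\E[\mathbf u^\top\sqrt{\mathbf S}\,\mathbf u]$ over the compact unit sphere gives $\alpha:=\lambda_{\max}\!\bigl(\E[\sqrt{\mathbf S}]/\sqrt{n-1}\bigr)<1$, and $\alpha\ge0$ since $\sqrt{\mathbf S}\succeq0$. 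This is the matrix analogue of the strict-Jensen argument in \Cref{lemma_E_sigma_t}.

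\textbf{Item (iv).} Condition on $\mathbf{\Sigma}_t$ and use \emph{(i)} together with $\mathbf S\indep\mathbf{\Sigma}_t$: $\E[\sqrt{\mathbf{\Sigma}_{t+1}}\mid\mathbf{\Sigma}_t]=\tfrac1{\sqrt{n-1}}\,\E_{\mathbf S}\bigl[(\sqrt{\mathbf{\Sigma}_t}\,\mathbf S\,\sqrt{\mathbf{\Sigma}_t})^{1/2}\bigr]$. Write $\sqrt{\mathbf{\Sigma}_t}=O^\top\Lambda O$ with $O$ orthogonal and $\Lambda$ diagonal; by the rotational invariance of $\mathbf S$ (for orthogonal $O$, $O\mathbf S O^\top\overset{d}{=}\mathbf S$) this equals $O^\top\,\E[(\Lambda\mathbf S\Lambda)^{1/2}]\,O$. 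By the sign-flip invariance of $\mathbf S$ (for $D=\mathrm{diag}(\pm1)$, $D\mathbf S D\overset{d}{=}\mathbf S$, and $D$ commutes with $\Lambda$), the matrix $\E[(\Lambda\mathbf S\Lambda)^{1/2}]$ is diagonal, with $i$-th entry $\E[\mathbf e_i^\top(\Lambda\mathbf S\Lambda)^{1/2}\mathbf e_i]\le\E[\sqrt{\mathbf e_i^\top\Lambda\mathbf S\Lambda\mathbf e_i}]=\Lambda_{ii}\,\E[\sqrt{\mathbf S_{ii}}]=\Lambda_{ii}\,\E[\chi_{n-1}]$, using again $\mathbf e_i^\top M^{1/2}\mathbf e_i\le\sqrt{\mathbf e_i^\top M\mathbf e_i}$ and $\mathbf S_{ii}\sim\chi^2_{n-1}$. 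Hence $\E[(\Lambda\mathbf S\Lambda)^{1/2}]\preceq\E[\chi_{n-1}]\,\Lambda$, so $\E[\sqrt{\mathbf{\Sigma}_{t+1}}\mid\mathbf{\Sigma}_t]\preceq\tfrac{\E[\chi_{n-1}]}{\sqrt{n-1}}\sqrt{\mathbf{\Sigma}_t}$, with $\tfrac{\E[\chi_{n-1}]}{\sqrt{n-1}}<1$ by the strict scalar Jensen inequality applied to $\chi^2_{n-1}$. Taking total expectations yields $\E[\sqrt{\mathbf{\Sigma}_{t+1}}]\preceq\alpha\,\E[\sqrt{\mathbf{\Sigma}_t}]$ with a constant $\alpha\in[0,1)$ (one may take $\alpha=\E[\chi_{n-1}]/\sqrt{n-1}$), and induction gives \Cref{prop:model_collapse}.

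\textbf{Main obstacle.} The delicate point is the contraction factor in \emph{(iv)}. The ``obvious'' route---operator concavity of $\mathbf S\mapsto(\sqrt{\mathbf{\Sigma}_t}\,\mathbf S\,\sqrt{\mathbf{\Sigma}_t})^{1/2}$ plus operator Jensen---yields only the non-strict, non-contracting bound $\E[\sqrt{\mathbf{\Sigma}_{t+1}}\mid\mathbf{\Sigma}_t]\preceq\sqrt{\mathbf{\Sigma}_t}$, and one cannot simply conjugate the bound of \emph{(iii)} through the congruence $\mathbf S\mapsto\sqrt{\mathbf{\Sigma}_t}\,\mathbf S\,\sqrt{\mathbf{\Sigma}_t}$ because $(\sqrt{\mathbf{\Sigma}_t}\,\mathbf S\,\sqrt{\mathbf{\Sigma}_t})^{1/2}$ does not factor as $\sqrt{\mathbf{\Sigma}_t}^{\,1/2}\sqrt{\mathbf S}\,\sqrt{\mathbf{\Sigma}_t}^{\,1/2}$. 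Getting a genuine contraction therefore requires exploiting the symmetries of $\mathbf S$ (rotational and sign-flip invariance) to diagonalize $\mathbf{\Sigma}_t$ and reduce the Loewner-order statement to the $d$ one-dimensional strict Jensen inequalities, exactly as in the scalar case; this reduction is the crux of the whole argument.
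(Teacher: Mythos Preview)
Your proposal is correct and, for item \emph{(iv)}, more careful than the paper's own argument. Parts \emph{(i)} and \emph{(ii)} are handled identically in both. For \emph{(iii)}, the paper invokes operator concavity of the matrix square root together with a matrix Jensen inequality (citing Marshall et al.), whereas you reduce everything to the scalar Jensen inequality via the elementary pointwise bound $\mathbf u^\top M^{1/2}\mathbf u\le\sqrt{\mathbf u^\top M\mathbf u}$; both routes reach the same conclusion, yours being more self-contained.

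The substantive divergence is in \emph{(iv)}. The paper asserts the factorization
\[
\sqrt{\mb{\Sigma}_{t+1}}=\tfrac{1}{\sqrt{n-1}}\,\mb{\Sigma}_t^{1/4}\sqrt{\mb S}\,\mb{\Sigma}_t^{1/4}
\]
and then conjugates the bound of \emph{(iii)} through $\mb{\Sigma}_t^{1/4}$. As you rightly flag in your ``main obstacle'' paragraph, this factorization is false in general: squaring the right-hand side gives $\mb{\Sigma}_t^{1/4}\sqrt{\mb S}\,\mb{\Sigma}_t^{1/2}\sqrt{\mb S}\,\mb{\Sigma}_t^{1/4}$, which equals $\mb{\Sigma}_t^{1/2}\mb S\,\mb{\Sigma}_t^{1/2}$ only when $\mb{\Sigma}_t$ and $\mb S$ commute. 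Your route---using orthogonal invariance of the Wishart law to reduce to diagonal $\mb{\Sigma}_t$, then sign-flip invariance to force $\E[(\Lambda\mb S\Lambda)^{1/2}]$ to be diagonal, and finally the scalar bound $\mathbf e_i^\top M^{1/2}\mathbf e_i\le\sqrt{M_{ii}}$ on each coordinate---avoids this pitfall entirely and yields a genuine contraction with constant $\E[\chi_{n-1}]/\sqrt{n-1}<1$. The only mismatch with the lemma as stated is that your contraction constant in \emph{(iv)} is $\E[\chi_{n-1}]/\sqrt{n-1}$, which may exceed the $\alpha=\lambda_{\max}\bigl(\E[\sqrt{\mb S}]\bigr)/\sqrt{n-1}$ defined in \emph{(iii)}; this is inconsequential for \Cref{prop:model_collapse}, where any constant strictly below~$1$ suffices.
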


\begin{proof}{(\Cref{lemma_sigma_square_multi})}
        Rearranging the learning and the sampling steps (\Cref{learning_step,sampling_step}) yields
        \begin{align}
            \bx_t^j &= \pmb{\mu}_{t} + \sqrt{\mb{\Sigma}_{t}} \mb{z}_t^{j}
            \enspace,
            \\
            \pmb{\mu}_{t+1}
            & =
             \frac{1}{n} \sum\limits_{j}{\mb{x}_t^j}
            \enspace,
            \\
            \pmb{\mu}_{t+1}
            &=
            \pmb{\mu}_{t}
            +
            \frac{1}{n} \sqrt{\mb{\Sigma}_{t}}  \sum\limits_{j} \mb{z}_t^{j}
            \enspace,
            \text{ hence}
            \\
            \bx_t^j - \pmb{\mu}_{t+1}
            &=
            \sqrt{\mb{\Sigma}_{t}} \left (
                \mb{z}_t^{j}
                -
                \frac{1}{n} \sum\limits_{j} \mb{z}_t^{j} \right )
                \label{app_eq_multi_x_t_minus_mu}
                \enspace.
        \end{align}
    Plugging \Cref{app_eq_multi_x_t_minus_mu} in the learning step (\Cref{learning_step}) yields
    \begin{align}\label{eq_covariance_multivariate}
            \mb{\Sigma}_{t+1}
            &=
            \frac{1}{n - 1} \sqrt{\mb{\Sigma}_t}
            \left[\sum\limits_{j} \left(\mb{z}_t^{j} - \frac{1}{n}\sum\limits_{j'} \mb{Z}^t_{j'} \right)\left(\mb{z}_t^{j} - \frac{1}{n}\sum\limits_{j'} \mb{Z}^t_{j'} \right)^\top\right]
            \sqrt{\mb{\Sigma}_t}
            \enspace.
    \end{align}
\end{proof}

\begin{proof}{(\Cref{lemma_expectation_empirical_cov})}
    Since $\mb{Z}_j^t \overset{\text{iid} }{\sim} \mathcal N(0, \mb{I})$, then
    \begin{align}
        \E( \mb{S} )
        &=
        \E \left ( \sum_j \left(
            \mb{z}_t^{j}
            -
            \frac{1}{n} \sum\limits_{j'} \mb{Z}^t_{j'}
            \right )
            \left (
                \mb{z}_t^{j} - \frac{1}{n}\sum\limits_{j'} \mb{Z}^t_{j'}
            \right)^\top \right )
        \enspace,
        \\
        & = \sum_j \E\left(
            \mb{z}_t^{j}
            -
            \frac{1}{n} \sum\limits_{j'} \mb{Z}^t_{j'}
            \right )
            \left (
                \mb{z}_t^{j} - \frac{1}{n}\sum\limits_{j'} \mb{Z}^t_{j'}
            \right)^\top
        \enspace,
        \\
        & = \sum_j \E \left ( \mb{z}_t^{j}
        - \frac{1}{n}\sum\limits_{j'} \mb{Z}^t_{j'} \right )^2 \cdot \; \; \mb{I}
        \enspace,
        \\
        &=
        \underbrace{\E \sum_j \left ( \mb{z}_t^{j}
        - \frac{1}{n}\sum\limits_{j'} \mb{Z}^t_{j'} \right )^2}_{\text{expectation of a unidimensional $\chi_{n-1}^2$ variable}} \cdot \; \; \mb{I}
        \enspace,
        \\
        &=
        (n-1) \cdot \mb{I}
        \enspace.
    \end{align}
\end{proof}

\begin{proof}{(\Cref{lemma_stric_jensen})}
    The matrix square root operator is strictly concave \citep[Chapter 16, Example E.7.d]{Marshall1979}, in addition, $S$ is not a Dirac, hence the strict Jensen inequality yields
    \begin{align}
        \E(\sqrt{\mb{S}})
        & \prec
        \sqrt{\E(\mb{S})}
        \enspace,
        \\
        & \prec \sqrt{n-1} \cdot \mb{I}_d \text{ (by \Cref{lemma_expectation_empirical_cov}) }
        \enspace,
        \\
        & \preceq
            \alpha \sqrt{n-1} \cdot \mb{I}_d
            \enspace,
    \end{align}
     with = $\alpha = \lambda_{\max} \left (
             \frac{\E(\sqrt{\mb{S}}) }{\sqrt{n-1} } \right )  < 1$.
\end{proof}

\begin{proof}{(\Cref{lemma_E_sigma_t_multi})}

    Taking the matrix square root of \Cref{eq_covariance_multivariate} yields
    \begin{align}\label{eq_sqrt_covariance_multivariate}
        \sqrt{\mb{\Sigma}_{t+1}}
        &=
        \frac{1}{n - 1} \mb{\Sigma}_t^{\frac{1}{4}}
        \underbrace{
            \sqrt{
                \left[\sum\limits_{j} \left(\mb{z}_t^{j} - \frac{1}{n}\sum\limits_{j'} \mb{Z}^t_{j'} \right)\left(\mb{z}_t^{j} - \frac{1}{n}\sum\limits_{j'} \mb{Z}^t_{j'} \right)^\top\right]
            }
        }_{= \sqrt{\mb{S}}}
        \mb{\Sigma}_t^{\frac{1}{4}}
        \enspace,
        \\
        \sqrt{\mb{\Sigma}_{t+1}}
        &=
        \frac{1}{n - 1} \mb{\Sigma}_t^{\frac{1}{4}}
        \sqrt{\mb{S}}
        \mb{\Sigma}_t^{\frac{1}{4}}
        \enspace.
        \label{eq_recursion_sqrt_sigma}
    \end{align}

    Taking the expectation of \Cref{eq_recursion_sqrt_sigma} conditioned on $\mb{\Sigma}_t$ yields

    \begin{align}
        \E \left (\sqrt{\mb{\Sigma}_{t+1}} | \mb{\Sigma}_{t} \right )
        &=
        \frac{1}{\sqrt{n - 1}} \mb{\Sigma}_t^{\frac{1}{4}}
        \E \left (\sqrt{\mb{S}} | \mb{\Sigma}_{t} \right )
        \mb{\Sigma}_t^{\frac{1}{4}}
        \enspace,
        \\
        &=
        \frac{1}{\sqrt{n - 1}} \mb{\Sigma}_t^{\frac{1}{4}}
        \E \left (\sqrt{\mb{S}} \right )
        \mb{\Sigma}_t^{\frac{1}{4}}  \text{ (because $\mb{G} \indep \mb{\Sigma}_t
        $)}
        \enspace,
        \\
        &
        \preceq
        \frac{1}{\sqrt{n - 1}} \mb{\Sigma}_t^{\frac{1}{4}}
        \alpha \sqrt{\E \left (\mb{S} \right )}
        \mb{\Sigma}_t^{\frac{1}{4}} \text{ (by \Cref{lemma_stric_jensen})}
        \enspace,
        \\
        &
        =\alpha
        \frac{1}{\sqrt{n - 1}} \mb{\Sigma}_t^{\frac{1}{4}}
        \sqrt{(n - 1) \mb{I}}
        \mb{\Sigma}_t^{\frac{1}{4}}
        \enspace,
        \\
        &
        =
        \alpha
        \sqrt{\mb{\Sigma}_{+1}}
        \enspace,
        \\
        \E \left (\sqrt{\mb{\Sigma}_{t+1}} \right )
        & \preceq
        \alpha
        \E \left ( \sqrt{\mb{\Sigma}_t} \right ) \text{ (because $
        \E \left ( \E \left (
            \sqrt{\mb{\Sigma}_{t+1}} | \mb{\Sigma}_{t} \right ) \right )
            = \E \left (\sqrt{\mb{\Sigma}_{t+1}} \right )$)} \enspace.
    \end{align}

\end{proof}

\section{Proof of~\Cref{prop:Gaussian_assump}}
\gaussianassump*

\begin{proof}[(Proof of \Cref{prop:Gaussian_assump}.)]
    Following the computation of \citet{barfoot2020multivariate}, and using the fact that $\nabla_{\Sigma^{-1}} \log \det(\Sigma) = \Sigma$ \citep[Sec A.4.1]{Boyd2004} one has
    \begin{align}
        -\log p_{\btheta}(\bx)
        &= - \frac{1}{2} \log \det (\bSigma^{-1})
        + \frac{1}{2} (\bx - \bmu)^\top \bSigma^{-1} (\bx - \bmu) + \frac{d}{2} \log (2 \pi)
        \enspace, \text{ hence}
        \\
        -\nabla_{(\bmu, \bSigma)}
        \log p_{\btheta}(\bx)
        &= \begin{pmatrix}
            \bSigma^{-1}(\bmu - \bx) \\
            \frac{1}{2}(\bx - \bmu) (\bx - \bmu)^\top - \bSigma
        \end{pmatrix}
        \enspace, \text{ then}
        \\
        -\nabla^2_{(\bmu, \bSigma)} \log p_{\btheta}(\bx)
        &= \begin{pmatrix}
            \bSigma^{-1} & (\bmu - \bx)
            \otimes
            \mb{I}_d \\
            \mb{I}_d \otimes (\bmu - \bx)
            &
            \frac{1}{2} (
                \bSigma \otimes \bSigma)
        \end{pmatrix}
        \enspace. \label{eq_hessian_gaussian}
    \end{align}

    \Cref{eq_hessian_gaussian} shows that
    $\bx \mapsto \nabla^2_{\btheta} \log p_{\btheta}(\bx)$
    is $1$-Lipschitz.
    In addition, since $\bSigma \succ 0$,
    then $\E[- \nabla^2_{(\bmu, \bSigma^{-1})} \log p_{\btheta}(\bx)]\succ 0$.
\end{proof}
\section{Proof of~\cref{prop:local_uniqueness}}
\label{sec_proof_uniqueness_local_maximum}
\localuniqueness*

\begin{proof}[(Proof of \Cref{prop:local_uniqueness})]
   Let $\btheta^\star$ be a solution of~\Cref{eq:base_infinite}.
   Assume that $\btheta^\star$
   follows~\Cref{ass:smooth,ass:invertible}.
   We recall that
   \begin{align}
       \gH(\btheta, \btheta')
       &:= \overbrace{\mathbb{E}_{\tilde \bx \sim \pdata} [\log p_{\btheta'}(\tilde \bx)]}^{:=\gH_1(\btheta')}
       +
       \lambda \overbrace{\mathbb{E}_{\bx \sim p_{\btheta}} [\log p_{\btheta'}(\bx)]}^{:= \gH_2(\btheta,\btheta')}
       \enspace, \text{ and}
       \\
       \gG_{\lambda}^{\infty}(\btheta)
       &:= \argmax_{\btheta '} \gH(\btheta, \btheta')
       \enspace.
   \end{align}
        Using \Cref{fixed_point}, since $\btheta^\star$ is a solution of \Cref{eq:base_infinite}, then $\nabla_{\btheta'} \gH(\btheta^\star, \btheta^\star) =0$.
        In addition, \Cref{ass:invertible} ensures that
        $
        \btheta^\star
        \in
        \locargmax_{\btheta' \in \Theta} \gH(\btheta^\star,\btheta')$
        and that
        $\nabla_{\btheta'}^2 \gH(\btheta^\star,\btheta^\star)$ is invertible.
        Hence, by the implicit function theorem~\citep[Theorem 5.9]{lang2012fundamentals},
        for $\btheta$ in a neighborhood of $\btheta^\star$ we have that there exists a continuous function
        $g$ such that
        $g(\btheta^\star)=\btheta^\star$
        and
        $\nabla_{\btheta'} \gH(\btheta,g(\btheta)) =0$.
        Finally, in order to show that $g(\btheta)$ is a local maximizer of
        $\btheta' \mapsto \gH(\btheta,\btheta')$
        we just need to show that
        $\nabla^2_{\btheta'} \gH(\btheta, \btheta^\star) \prec 0$.
   \begin{align}
        \nabla^2_{\btheta'} \gH(\btheta, \btheta^\star)
        &=
        \nabla^2_{\btheta'}\gH_1(\btheta^\star)
        + \lambda \nabla^2_{\btheta'}\gH_2(\btheta,\btheta^\star)
        \\
        & =
        (1+\lambda)
        \underbrace{\nabla^2_{\btheta'}\gH_1(\btheta^\star)}_{\preceq - \alpha \bI \enspace \text{(using \Cref{ass:invertible})}}
        +
        \lambda
            (\nabla^2_{\btheta'}\gH_2(\btheta,\btheta^\star) - \nabla^2_{\btheta'}\gH_1(\btheta^\star))
        \enspace,
        \\
        &
        \preceq
        - (1 + \lambda) \alpha \bI
        +
        \lambda
        \underbrace{
            (\mathbb{E}_{\bx \sim p_{\btheta}} [\nabla^2\log p_{\btheta'}(\bx)] - \mathbb{E}_{\bx\sim \pdata} [\nabla^2\log p_{\btheta'}(\bx)])
        }_{\preceq \varepsilon L \enspace \text{(using \Cref{ass:smooth})}}
        \enspace,
        \\
        &
        \preceq -\alpha (1+\lambda) \mb{I}_d
        +
        \lambda \varepsilon L \bI_d
        \enspace,
        \\
        &
        \prec
        0 \enspace \text{if $\alpha \geq \varepsilon L$}
        \enspace.
    \end{align}

\end{proof}

\section{Proof of \Cref{fixed_point}}
\label{app:proof_prop_1}
\fixedpoint*

\begin{proof}
The definition of $\gG_\lambda^{\infty}$ (\Cref{eq:def_G_infinite}) yields
\begin{equation}
    \gG_\lambda^{\infty}(\btheta^\star)
    :=
    \argmax_{\btheta'\in \Theta} \gH(\btheta, \btheta')
    := \overbrace{
        \mathbb{E}_{\tilde \bx \sim \pdata} [\log p_{\btheta'}(\tilde \bx)]}^{:=\gH_1(\btheta')}
    +
    \lambda \overbrace{\mathbb{E}_{\bx \sim p_{\btheta^\star}} [\log p_{\btheta'}(\bx)]}^{
        := \gH_2(\btheta^\star,\btheta')}
\end{equation}
Since $\btheta^\star$ is a solution of \Cref{eq:base_infinite}
\begin{align}
    \forall \btheta'\in \Theta,
    \gH_1(\btheta')
    &
    =
    \mathbb{E}_{\tilde \bx \sim \pdata} [\log p_{\btheta'}(\tilde \bx)]
    \\
    & \leq
    \mathbb{E}_{\tilde \bx \sim \pdata} [\log p_{\btheta^\star}(\tilde \bx)]
    \\
    & \leq \gH_1(\btheta^\star)
    \enspace.
\end{align}
Gibbs inequality yields
\begin{align}
    \forall \btheta'\in \Theta, \enspace
    \gH_2(\btheta^\star, \btheta')
    &= \mathbb{E}_{\bx\sim p_{\btheta^\star}}
    [\log p_{\btheta'}(\bx)]
    \\
    & \leq
    \mathbb{E}_{\bx \sim p_{\btheta^\star}}
    [\log p_{\btheta^\star}(\bx)]
    \\
    & \leq
    \gH_2(\btheta^\star, \btheta^\star)
    \enspace.
\end{align}
$\gH_1$ and $\gH_2(\btheta^\star, \cdot)$ are both maximized in $\btheta^\star$ hence
\begin{equation}
    G_\lambda(\btheta^\star)
    =
    \argmax_{\btheta'}
    \gH_1(\btheta') + \lambda \gH_2(\btheta^\star, \btheta') = \btheta^\star
    \enspace.
\end{equation}
\end{proof}

\newpage
\section{Proof of \Cref{theorem_stability}}
\label{app:theorem_iterative_stability}
\begin{align}
    \eqbaseinfinite
\end{align}
\begin{align}
    \defginfinite
\end{align}

\asslipschitzness*

\assinvertible*

\vspace{1em}

\stabilityinfinite*


The main goal of \Cref{theorem_stability} is to show that the operator norm of the Jacobian of the fixed-point operator $\gG$ is strictly bounded by one.
We prove \Cref{theorem_stability} with the following steps: using the Implicit Function Theorem, Schwarz theorem, and analytic manipulations (\Cref{lambda,lemma_compact_formula_jacobian}) we managed to obtain a simple formula for the Jacobian of $\G$ at $\btheta^\star$.
Using Kantorovich-Rubenstein duality, we manage to bound the Jacobian of the fixed-point operator at $\btheta^\star$ (\Cref{lemma_bounds_matrices}),
and thus to provide a condition for which $\| \gJ \gG_{\lambda}(\btheta^\star) \|_2 < 1$.
These steps are detailed formally in Lemma \ref{all_lemmas}.
\begin{lemma}\label{all_lemmas}
    \begin{lemmaenum}
            With
            $\mb{A} = \nabla_{\btheta', \btheta'}^2 \gH_1(\btheta^\star)$
            and $\mb{B} =
            \nabla_{\btheta, \btheta'}^2 \gH_2(\btheta^\star, \btheta^\star)$,
            under the assumptions of \Cref{theorem_stability}:
        \item
        \label{lambda}
           There exists an open set $\gU \subset \Theta$ containing $\btheta^\star$ such that
            $\forall \btheta\in \gU$,
        \begin{equation}
            \gJ \gG (\btheta)=
            - \lambda \left(
                \nabla^2_{\btheta', \btheta'} \gH_1(\btheta)
                + \lambda
                \nabla^2_{\btheta', \btheta'} \gH_2(\btheta, \gG(\btheta))
                \right)^{-1}
            \cdot
            \nabla^2_{\btheta, \btheta'} \gH_2(\btheta, \gG(\btheta))
            \enspace.
        \end{equation}
        \item
        \begin{align}
            \nabla_{\btheta' \btheta'} \gH_2(\btheta^\star, \btheta^\star)
           = - \nabla_{\btheta' \btheta} \gH_2(\btheta^\star, \btheta^\star)
           \enspace,
        \end{align}
        and thus Jacobian at $\btheta^\star$ can be written
         \label{lemma_compact_formula_jacobian}
            \begin{align}
                \gJ \gG(\btheta^\star)
                &=
                \left(\mb{I}_d + \lambda \mb{A}^{-1} \mb{B} \right)^{-1}\cdot \lambda \mb{A}^{-1} \mb{B}
                \enspace.
            \end{align}
        \item
        \label{lemma_bounds_matrices}
        The spectral norm of $\mb{A}^{-1} \mb{B}$ and $\mb{B} - \mb{A}$ can be bounded
        \begin{align}
            \| \mb{A}^{-1} \mb{B}\|_2
            &\leq 1 +  \frac{L \varepsilon}{\alpha}
            \enspace,
        \end{align}
        and using Kantorovich-Rubenstein duality theorem
        \begin{align}
            \|\mb{B} - \mb{A}\|_2 &
            \leq L d_W(p_{\btheta^\star}, \pdata)
            \enspace.
        \end{align}
    \end{lemmaenum}
\end{lemma}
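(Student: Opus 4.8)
The statement bundles three facts: (i) an implicit-function-theorem formula for $\gJ\gG_\lambda^\infty(\btheta)$ near $\btheta^\star$; (ii) a symmetry identity for the cross/own second derivatives of $\gH_2$ at the diagonal, which collapses the formula at $\btheta^\star$; and (iii) two spectral estimates. I would prove them in that order, reducing everything to the IFT, a score-function identity, and Kantorovich--Rubinstein duality.

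\textbf{Part (i).} By Proposition~\ref{prop:local_uniqueness}, on a neighborhood of $\btheta^\star$ the map $\gG := \gG_\lambda^\infty$ is the unique local maximizer, so it satisfies the first-order condition $\nabla_{\btheta'}\gH(\btheta,\gG(\btheta)) = 0$, and (as in the proof of that proposition) the IFT applies because $\nabla^2_{\btheta'\btheta'}\gH(\btheta^\star,\btheta^\star) \prec 0$ is invertible; by continuity of the Hessian it stays invertible on an open set $\gU \ni \btheta^\star$, on which $\gG$ is $C^1$. I would then differentiate $\nabla_{\btheta'}\gH(\btheta,\gG(\btheta)) = 0$ in $\btheta$, use $\gH = \gH_1 + \lambda\gH_2$ with $\gH_1$ independent of $\btheta$ (so the mixed block of $\nabla^2\gH$ is $\lambda\nabla^2_{\btheta\btheta'}\gH_2$ and the $\btheta'\btheta'$ block is $\nabla^2_{\btheta'\btheta'}\gH_1 + \lambda\nabla^2_{\btheta'\btheta'}\gH_2$, all evaluated at $(\btheta,\gG(\btheta))$), and solve the linear system for $\gJ\gG(\btheta)$, yielding the displayed formula.

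\textbf{Part (ii).} The key identity is that the score has zero mean under its own model: for every $\btheta$, $\nabla_{\btheta'}\gH_2(\btheta,\btheta')\big|_{\btheta'=\btheta} = \E_{\bx\sim p_\btheta}[\nabla\log p_\btheta(\bx)] = \nabla\!\int p_\btheta(\bx)\,d\bx = 0$ (equivalently, by the Gibbs inequality already invoked in the proof of Proposition~\ref{fixed_point}, $\btheta'=\btheta$ is an interior maximizer of $\btheta'\mapsto\gH_2(\btheta,\btheta')$, hence a critical point). Differentiating the identity $\btheta\mapsto\nabla_{\btheta'}\gH_2(\btheta,\btheta')|_{\btheta'=\btheta}\equiv 0$ in $\btheta$ and applying the chain rule gives $\nabla^2_{\btheta\btheta'}\gH_2(\btheta,\btheta) + \nabla^2_{\btheta'\btheta'}\gH_2(\btheta,\btheta) = 0$; at $\btheta^\star$ this is the claimed relation $\nabla^2_{\btheta'\btheta'}\gH_2(\btheta^\star,\btheta^\star) = -\mb B$. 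Substituting $\gG(\btheta^\star) = \btheta^\star$ (Proposition~\ref{fixed_point}) into part (i), the two $\gH_2$-Hessians become $\mp\mb B$; factoring out the invertible matrix $\mb A$ (taken with the sign convention $\mb A \succeq \alpha\mb I_d \succ 0$ consistent with Assumption~\ref{ass:invertible} and with the displayed compact formula) and rearranging yields $\gJ\gG(\btheta^\star) = (\mb I_d + \lambda\mb A^{-1}\mb B)^{-1}\lambda\mb A^{-1}\mb B$.

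\textbf{Part (iii).} For the first bound, write $\mb A^{-1}\mb B = \mb I_d + \mb A^{-1}(\mb B - \mb A)$, so $\|\mb A^{-1}\mb B\|_2 \le 1 + \|\mb A^{-1}\|_2\|\mb B - \mb A\|_2 \le 1 + \tfrac1\alpha\|\mb B-\mb A\|_2$ using $\|\mb A^{-1}\|_2 \le 1/\alpha$ from Assumption~\ref{ass:invertible}; combined with the second bound this gives $1 + L\varepsilon/\alpha$. For the second bound, part (ii) and $\mb A = -\E_{\bx\sim\pdata}[\nabla^2\log p_{\btheta^\star}(\bx)]$ give $\mb B - \mb A = \E_{\bx\sim\pdata}[\nabla^2\log p_{\btheta^\star}(\bx)] - \E_{\bx\sim p_{\btheta^\star}}[\nabla^2\log p_{\btheta^\star}(\bx)]$; since $\bx\mapsto\nabla^2\log p_{\btheta^\star}(\bx)$ is $L$-Lipschitz in operator norm (Assumption~\ref{ass:smooth}), I would pass to the variational formula $\|\mb M\|_2 = \sup_{\|u\|=\|v\|=1} u^\top\mb M v$, note each $\bx\mapsto u^\top\nabla^2\log p_{\btheta^\star}(\bx)v$ is scalar $L$-Lipschitz, and apply Kantorovich--Rubinstein duality to get $\|\mb B - \mb A\|_2 \le L\,d_W(p_{\btheta^\star},\pdata) = L\varepsilon$.

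\textbf{Main obstacle.} The mechanical parts are routine, but the genuinely substantive steps are the symmetry identity in part (ii), which hinges on recognizing and differentiating the zero-mean-score / Gibbs optimality condition rather than computing Hessians by hand, and the matrix-valued Kantorovich--Rubinstein estimate in part (iii), which requires the reduction to scalar $1$-Lipschitz test functions via the operator-norm variational principle. The accompanying nuisance is bookkeeping the sign conventions throughout ($\nabla^2_{\btheta'\btheta'}\gH_1 \prec 0$ versus the positive-definite $\mb A$ appearing in the compact Jacobian) and verifying that $\nabla^2_{\btheta'\btheta'}\gH$ remains invertible on a full neighborhood of $\btheta^\star$, not only at the point, so that the IFT formula of part (i) is valid on $\gU$.
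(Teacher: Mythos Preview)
Your proposal is correct and tracks the paper's proof closely: part (i) via implicit differentiation of the first-order condition, part (iii) via the decomposition $\mb A^{-1}\mb B = \mb I + \mb A^{-1}(\mb B-\mb A)$ together with Kantorovich--Rubinstein duality. The one place where your route differs slightly is part (ii): the paper establishes $\nabla^2_{\btheta'\btheta}\gH_2(\btheta^\star,\btheta^\star)=-\nabla^2_{\btheta'\btheta'}\gH_2(\btheta^\star,\btheta^\star)$ by writing out the mixed Hessian as an integral, applying the product rule to $\nabla_\btheta\log p_{\btheta^\star}\cdot\nabla_\btheta p_{\btheta^\star}$, and using $\nabla^2_\btheta\!\int p_{\btheta^\star}=0$; you instead differentiate the zero-mean-score identity $\nabla_{\btheta'}\gH_2(\btheta,\btheta')|_{\btheta'=\btheta}\equiv 0$ along the diagonal. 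These are two packagings of the same computation (both ultimately rest on $\int p_\btheta=1$), but your chain-rule-on-the-diagonal version is cleaner and avoids any integral manipulation. Your reduction of the matrix-valued Kantorovich--Rubinstein estimate to scalar $L$-Lipschitz test functions via $\|\mb M\|_2=\sup_{\|u\|=\|v\|=1}u^\top\mb M v$ is also a welcome addition: the paper simply asserts the bound ``by Kantorovich--Rubinstein duality'' without spelling out this step. Finally, you are right to flag the sign bookkeeping around $\mb A$; the paper is indeed loose there.
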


\begin{proof}{(\Cref{lambda})}
    The definition of $\G$ yields
    \begin{equation}\label{eq_gradient_zero}
        \nabla_{\btheta'} \gH(\btheta, \gG(\btheta)) = 0
        \enspace.
    \end{equation}
    Differentiating \Cref{eq_gradient_zero} using the chain rule yields
    \begin{equation}
        \nabla_{\btheta', \btheta}^2 \gH (\btheta, \gG(\btheta))
        =
        -
        \nabla_{\btheta', \btheta'}^2 \gH (\btheta, \gG(\btheta))
        \cdot
        \gJ \gG (\btheta)
        \enspace,
        \end{equation}
    which implies
    \begin{equation}
        \label{app_eq_ift_0}
        \gJ \gG (\btheta)
        =
        -
        \left(\nabla_{\btheta', \btheta'}^2 \gH (\btheta, \gG(\btheta))\right)^{-1}
        \nabla_{\btheta', \btheta}^2 \gH (\btheta, \gG(\btheta))
        \enspace.
    \end{equation}

    Since $\gH_1$ does not depend on $\btheta$
    \begin{equation}
        \label{app_eq_grad_h1}
        \nabla_{\btheta} \gH_1(\gG(\btheta))
        = 0
        \enspace.
    \end{equation}

    Combining \Cref{app_eq_ift_0,app_eq_grad_h1} and
    the fact that $\gG(\btheta^\star) = \btheta^\star$ yields
    \begin{equation}
        \gJ \gG(\btheta^\star)
        =
        -\lambda
        \left(
                \nabla_{\btheta', \btheta'} \gH_1(\btheta^\star)
                + \lambda
                \nabla_{\btheta', \btheta'} \gH_2(\btheta^\star, \btheta^\star)
        \right)^{-1}
        \nabla_{\btheta', \btheta} \gH_2(\btheta^\star, \btheta^\star)
        \enspace.
    \end{equation}
\end{proof}
\begin{proof}{(\Cref{lemma_compact_formula_jacobian})}
    First, let's compute
    $\nabla_{\btheta', \btheta}^2
    \gH_2(\btheta, \btheta')$:
    \begin{align}
        \gH_2(\btheta, \btheta')
        &=
        \int_X \log p_{\btheta'}(\bx) p_{\btheta}(\bx) \mathrm{d}\bx
        \enspace,
        \text{ which yields}
        \\
        \nabla_{\btheta'} \gH_2(\btheta, \btheta')
        &=
        \int_X
        \nabla_{\btheta'} \log p_{\btheta'}(\bx)
        p_{\btheta}(\bx) \mathrm{d}\bx
        \enspace,
        \text{ and thus}
        \\
        \nabla_{\btheta', \btheta}^2
        \gH_2(\btheta, \btheta')
        &=
        \int_X
        \nabla_{\btheta'} \log p_{\btheta'}(\bx)
        \nabla_{\btheta}
        p_{\btheta}(\bx)
        \mathrm{d}\bx
        \enspace,
        \text{ (using Schwarz theorem).}
    \end{align}
    This yields
    \begin{align}
        \nabla_{\btheta', \btheta}^2
        \gH_2(\btheta^\star, \btheta^\star)
        &=
        \int_X
        \nabla_{\btheta} \log p_{\btheta^\star}(\bx)
        \nabla_{\btheta}
        p_{\btheta^\star}(\bx)
        \mathrm{d}\bx
        \\
        &=
        \int_X
            \nabla_{\btheta}
            [
            p_{\btheta^\star}(\bx)
            \underbrace{\nabla_{\btheta}\log p_{\btheta^\star}(\bx)}_{= \frac{\nabla_{\btheta} p_{\btheta^\star}(\bx)}{p_{\btheta^\star}(\bx)} }
            ]
        \mathrm{d}\bx
        -
        \underbrace{
            \int_X
            \nabla_{\btheta, \btheta}^2 \log p_{\btheta^\star}(\bx)
            p_{\btheta^\star}(\bx)\mathrm{d}\bx
        }_{= \nabla_{\btheta', \btheta'}^2
        \gH_2(\btheta^\star, \btheta^\star)}
        \\
        & =
        \int_X \nabla_{\btheta, \btheta}^2 p_{\btheta^\star}(\bx)\mathrm{d}\bx
        -
        \nabla_{\btheta', \btheta'}^2
        \gH_2(\btheta^\star, \btheta^\star)
        \\
        &=
        \underbrace{
            \nabla_{\btheta}^2
        \underbrace{\int_X p_{\btheta^\star}(\bx)\mathrm{d}\bx}_{=1}
        }_{=0}
        -
        \nabla_{\btheta', \btheta'}^2
        \gH_2(\btheta^\star, \btheta^\star)
        \\
        &=
        - \nabla_{\btheta', \btheta'}^2
        \gH_2(\btheta^\star, \btheta^\star)
        \enspace.
    \end{align}
    Consequently, with
    $\mb{A} = \nabla_{\btheta', \btheta'}^2 \gH_1(\btheta^\star, \btheta^\star)$ and
    $\mb{B} = \nabla_{\btheta, \btheta'}^2 \gH_2(\btheta^\star, \btheta^\star)$
    we get
    \begin{align}
    \gJ \gG(\btheta^\star)
    &=
     \left(\mb{A} + \lambda \mb{B} \right)^{-1} \cdot \lambda \mb{B} \\
    &= \left(\bI_d + \lambda \mb{A}^{-1} \mb{B} \right)^{-1}\cdot \lambda \mb{A}^{-1} \mb{B}
    \enspace.
    \end{align}
\end{proof}
\begin{proof}
    Now, we have that
    \begin{align}
        \| \mb{A}^{-1} \mb{B} \|
        &=\| \mb{A}^{-1}(\mb{B} - \mb{A}) + \bI_d\| \\
        &\leq 1 + \| \mb{A}^{-1}\| \cdot \|\mb{B} - \mb{A}\| \\
        &\leq 1 +  \frac{L \varepsilon}{\alpha}
        \enspace,
    \end{align}
    where we used the triangle inequality,
 the submutliplicativity of the matrix norm and~\Cref{ass:invertible,ass:smooth} yield
    \begin{align}
        \|\mb{B} - \mb{A}\| & =
        \norm{
            \nabla_{\btheta, \btheta'}^2
            \gH_2(\btheta^\star, \btheta^\star)
            -
            \nabla_{\btheta', \btheta'}^2
            \gH_1(\btheta^\star)
        }
        \\
        &=
        \norm{
            \int_X \nabla_{\btheta, \btheta}^2 \log p_{\btheta^\star}(\bx)
            p_{\btheta^\star}(\bx)\mathrm{d}\bx -
            \int_X
            \nabla_{\btheta, \btheta}^2 \log p_{\btheta^\star}(\bx)
            \pdata(\bx)
            \mathrm{d}\bx
        }
        \\
        &=
        \norm{
            \mathbb{E}_{\bx\sim p_{\btheta^\star}}
            \left[
                \nabla_{\btheta, \btheta}^2 \log p_{\btheta^\star}(\bx)
            \right] -
            \mathbb{E}_{\bx \sim p_{d}}
            \left[
                \nabla_{\btheta, \btheta}^2 \log p_{\btheta^\star}(\bx)
                \right]
        }
            \\
        &\leq L d_W(p_{\btheta^\star}, \pdata)
        \enspace,
    \end{align}
    by Kantorovich-Rubenstein duality theorem,
    where $L$ is the Lipschitz norm of
    $\bx \mapsto \nabla_{\btheta, \btheta}^2 \log p_{\btheta}(\bx)$.
    \end{proof}
    \begin{proof}{(\Cref{theorem_stability})}
    Using \Cref{lemma_bounds_matrices},
     one has that if
     $\lambda (1+ \nicefrac{\varepsilon L}{\alpha}) < 1$,
     then $\|\lambda \mb{A}^{-1} \mb{B}\| < 1$ and thus
    \begin{equation}
        \label{eq_inifite_sum_matrices}
        \left(\mb{I}_d + \lambda \mb{A}^{-1} \mb{B}\right)^{-1}
        = \sum_{k=0}^\infty (-\lambda \mb{A}^{-1} \mb{B})^k
        \enspace.
    \end{equation}
    Combining \Cref{lemma_compact_formula_jacobian,eq_inifite_sum_matrices} yields
    \begin{equation}
        \gJ \gG(\btheta^\star)
        = -\sum_{k=1}^\infty (-\lambda \mb{A}^{-1} \mb{B})^k
        \enspace.
    \end{equation}
    Thus, by the triangle inequality and the submutliplicativity of the matrix norm,
     \begin{equation}
        \left\|\gJ \gG(\btheta^\star) \right\|
        \leq
        \sum_{k=1}^\infty \|\lambda \mb{A}^{-1} \mb{B}\|^k = \frac{\|\lambda \mb{A}^{-1} \mb{B}\|}{1-\|\lambda \mb{A}^{-1} \mb{B}\|}
        \enspace.
    \end{equation}
    To conclude, one simply needs to provide a sufficient condition for $\frac{\|\lambda \mb{A}^{-1} \mb{B} \|}{1 - \|\lambda \mb{A}^{-1} \mb{B}\|} <1$,
    which is
    $ \lambda \left (1 +  \frac{L \varepsilon}{\alpha} \right ) < 1/2$.
\end{proof}

\newpage
\section{Proof of \Cref{theorem_partial_general}}
\label{app:proof_neighborhood_stability}
\asslipschitzness*

\assinvertible*

\neighborhoodstability*

\assgenerror*

\begin{proof}{(\Cref{theorem_partial_general})}
    \begin{align}
        \norm{
            \btheta_{t+1}^n - \btheta^\star}
            &
            \leq
            \norm{\btheta_{t+1}^n - \btheta_{t+1}^\infty}
            +
            \norm{\btheta_{t+1}^\infty - \btheta^\star}
        \\
        &\leq ||\gG^n_{\lambda, \zeta}(\btheta_{t}^n) - \gG_\lambda(\btheta_t^n)|| + ||\gG(\btheta_{t}^n) - \gG(\btheta^\star)||
        \enspace.
    \end{align}
    Using \Cref{ass_gen_error}, with probability $1-\delta$ we have that
    \begin{align}
        \norm{\gG^n_{\lambda, \zeta}(\btheta_{t}^n) - \gG_\lambda(\btheta_t^n)}
        \leq
        \varepsilon_{\mathrm{OPT}} + \frac{a}{\sqrt{n}} \sqrt{\log \frac{b}{\delta}}
        \enspace,
    \end{align}
    Moreover~\Cref{theorem_stability} states that there exists a constant
    $\rho < 1$ such that
    \begin{align}
        \norm{\gG(\btheta_{t}^n) - \gG(\btheta^\star)|| \leq \rho ||\btheta_{t}^n - \btheta^\star}
        \enspace.
    \end{align}
    This yields that with probability $1-\delta$
    \begin{equation}
        \norm{\btheta_{t+1}^n - \btheta^\star}
        \leq
        \left (
            \varepsilon_{\mathrm{OPT}} + \frac{a}{\sqrt{n}} \sqrt{\log \frac{b}{\delta}}
        \right)
        +
        \rho
        \norm{\btheta_{t}^n - \btheta^\star}
        \enspace.
    \end{equation}
    A recurrence and the conditional independence of the successive samplings yield
    \begin{equation}
        \forall t > 0, \mathbb{P}
        \left(\norm{\btheta_t^n - \btheta^\star}
        \leq
        \left (
            \varepsilon_{\mathrm{OPT}}
            +
            \frac{a}{\sqrt{n}} \sqrt{\log \frac{b}{\delta}}
        \right)
        \sum \limits_{i=0}^t \rho^i
        +
        \rho^t \norm{\btheta_0 - \btheta^\star}\right) \geq (1-\delta)^t
        \enspace.
    \end{equation}
    Using the change of variable $\delta' = \delta \cdot t$ yields
    \begin{equation}
        \forall t > 0, \mathbb{P}
        \left(\norm{\btheta_t^n - \btheta^\star}
        \leq
        \left (
            \varepsilon_{\mathrm{OPT}}
            +
            \frac{a}{\sqrt{n}} \sqrt{\log \frac{bt}{\delta'}}
        \right)
        \sum \limits_{i=0}^t \rho^i
        +
        \rho^t \norm{\btheta_0 - \btheta^\star}\right)
        \geq
        \underbrace{(1-\delta'/t)^t}_{\geq 1 - \delta'}
        \enspace.
    \end{equation}

\end{proof}

\newpage
\section{Additional Experiments and Details}
\label{app_expe_details}
\subsection{Eight Gaussians -- DDPM}
\begin{figure}[H]
    \includegraphics[width=1\linewidth]{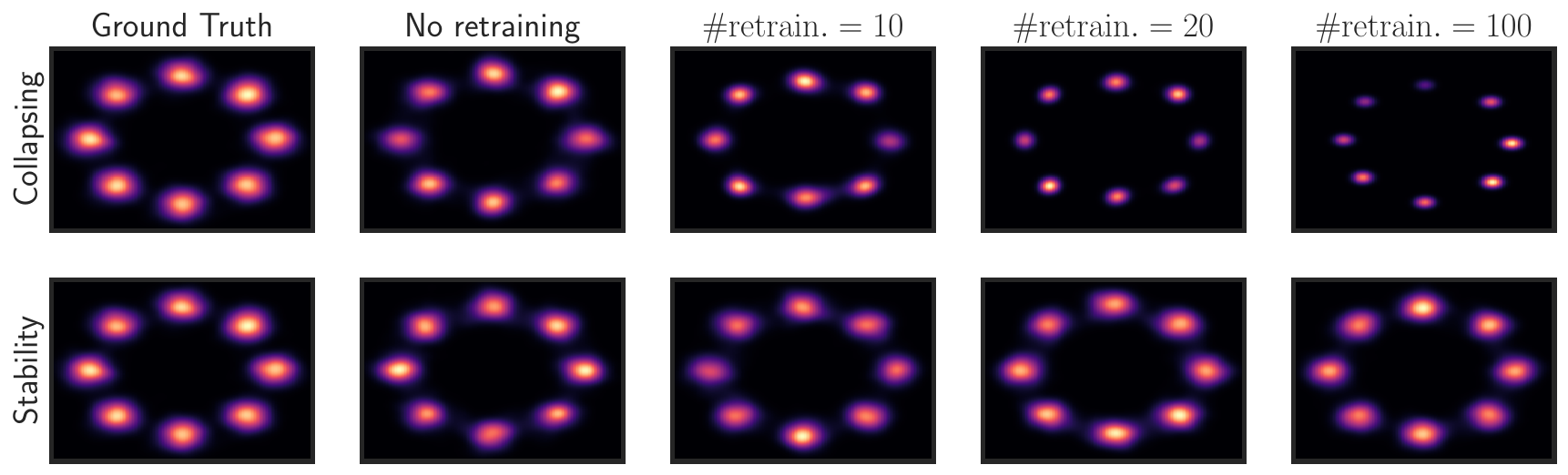}
    \caption{ \small
        \textbf{Stability vs. collapsing of iterative retraining of generative models on their own data.}
        Each model's density is displayed as a function of the number of retraining steps.
        The first two columns correspond to the true density and the density of a diffusion model trained on the true data.
        As observed in \cite{shumailov2023curse,alemohammad2023selfconsuming}, iteratively retraining the model exclusively on its own generated data (top row)
        yields a density that collapses: samples very near the mean of each mode are sampled almost exclusively after $100$ iterations of retraining.
        Contrastingly, retraining on a mixture of true and generated data (bottom row) does not yield a collapsing density.
        }\label{fig_8gaussians_diffusion}
\end{figure}
\subsection{Two moons -- Flow Matching}
\label{app_sub_two_moons}
\begin{figure}[H]
        \includegraphics[width=1\linewidth]{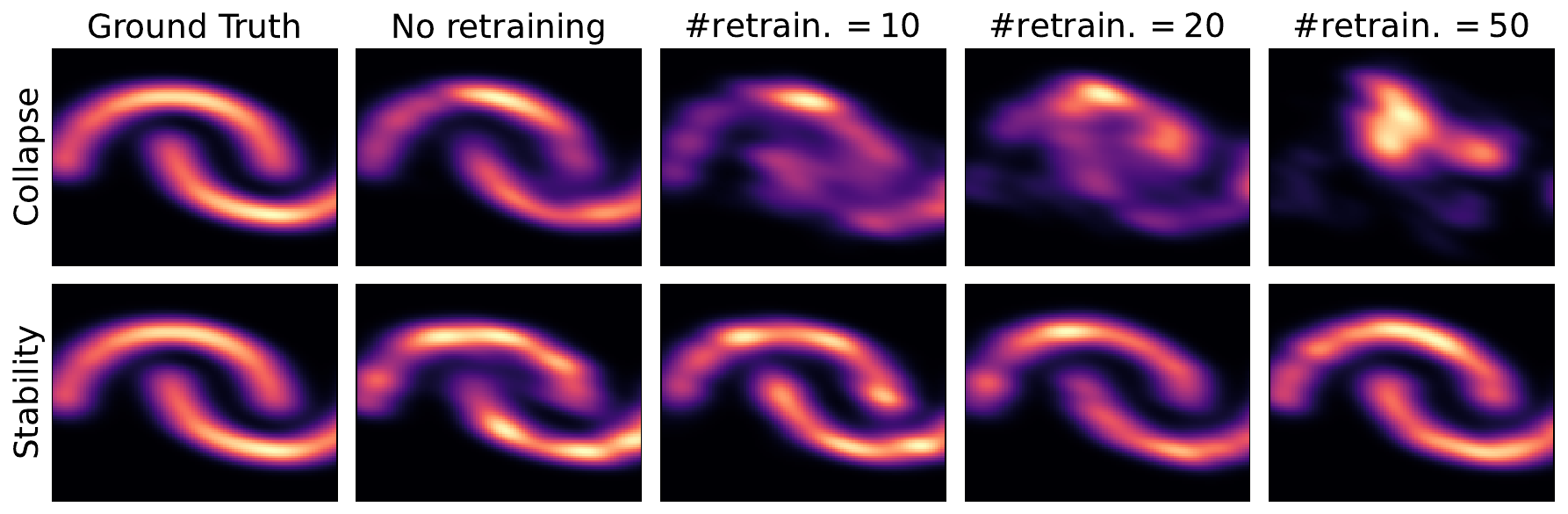}
        \caption{ \small
            \textbf{Stability vs. collapsing of iterative retraining of generative models on their own data.}
        Each model's density is displayed as a function of the number of retraining steps.
        The first two columns correspond to the true density and the density of a diffusion model trained on the true data respectively.}
        \label{fig_two_moons}
\end{figure}
\subsection{CIFAR-$10$ -- DDPM}
\label{app_sub_cifar_ddpm}
\begin{figure}[H]
    \begin{center}
        \includegraphics[width=1\linewidth]{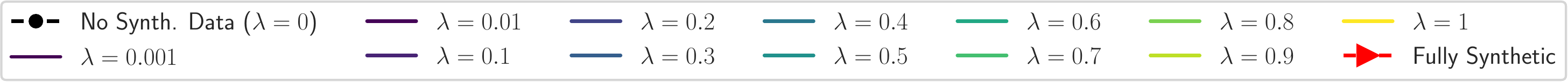}
        \includegraphics[width=1\linewidth]{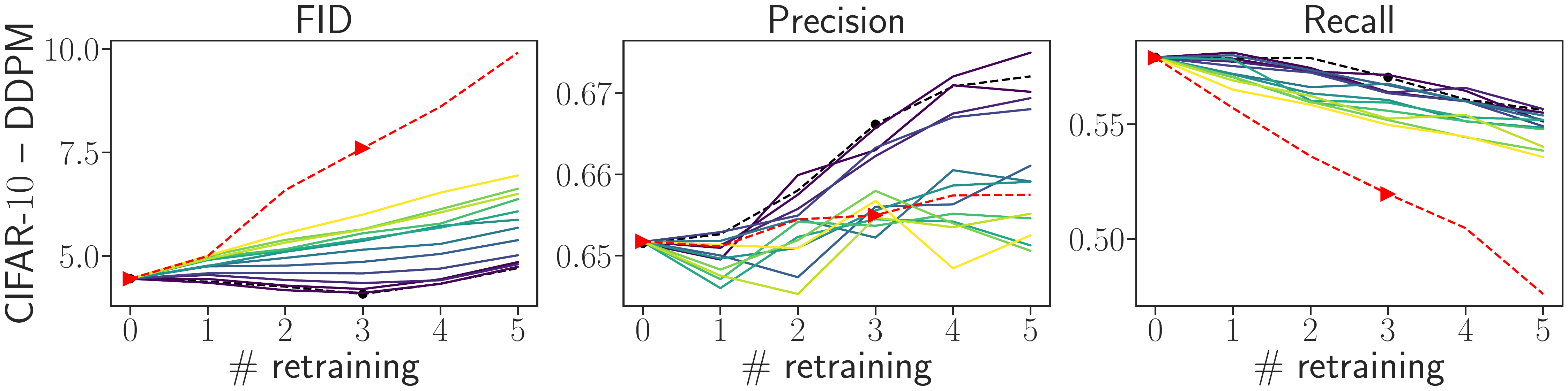}
    \end{center}
    \caption{\small \textbf{FID, precision, and recall of the generative models as a function of the number of retraining for multiple fractions $\lambda$ of generated data,
    $\mathcal{D} = \mathcal{D}_{\mathrm{real}} \cup \{ \tilde{\bx}_i \}_{i=1}^{\floor{\lambda \cdot n}}$, $\tilde{\bx}_i \sim p_{\btheta_t}$.}
    Only training on synthetic data (dashed red line with triangles) yields divergence.
    }
    \label{fig_cifar_ddpm}
\end{figure}

For DDPM, since the optimizer was not provided in the checkpoints, we first train the model from scratch for $1000$ epochs and use it as a pretrained model.

\subsection{Full Graphs for Fully Synthetic CIFAR10-OTCFM, CIFAR10-EDM, and FFHQ-EDM}
\label{app_sub_curves}
\begin{figure}[H]
    \begin{center}
        \includegraphics[width=1.\linewidth]{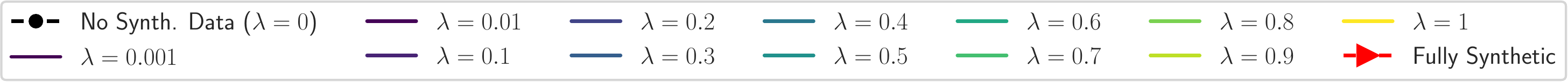}
        \includegraphics[width=1.\linewidth]{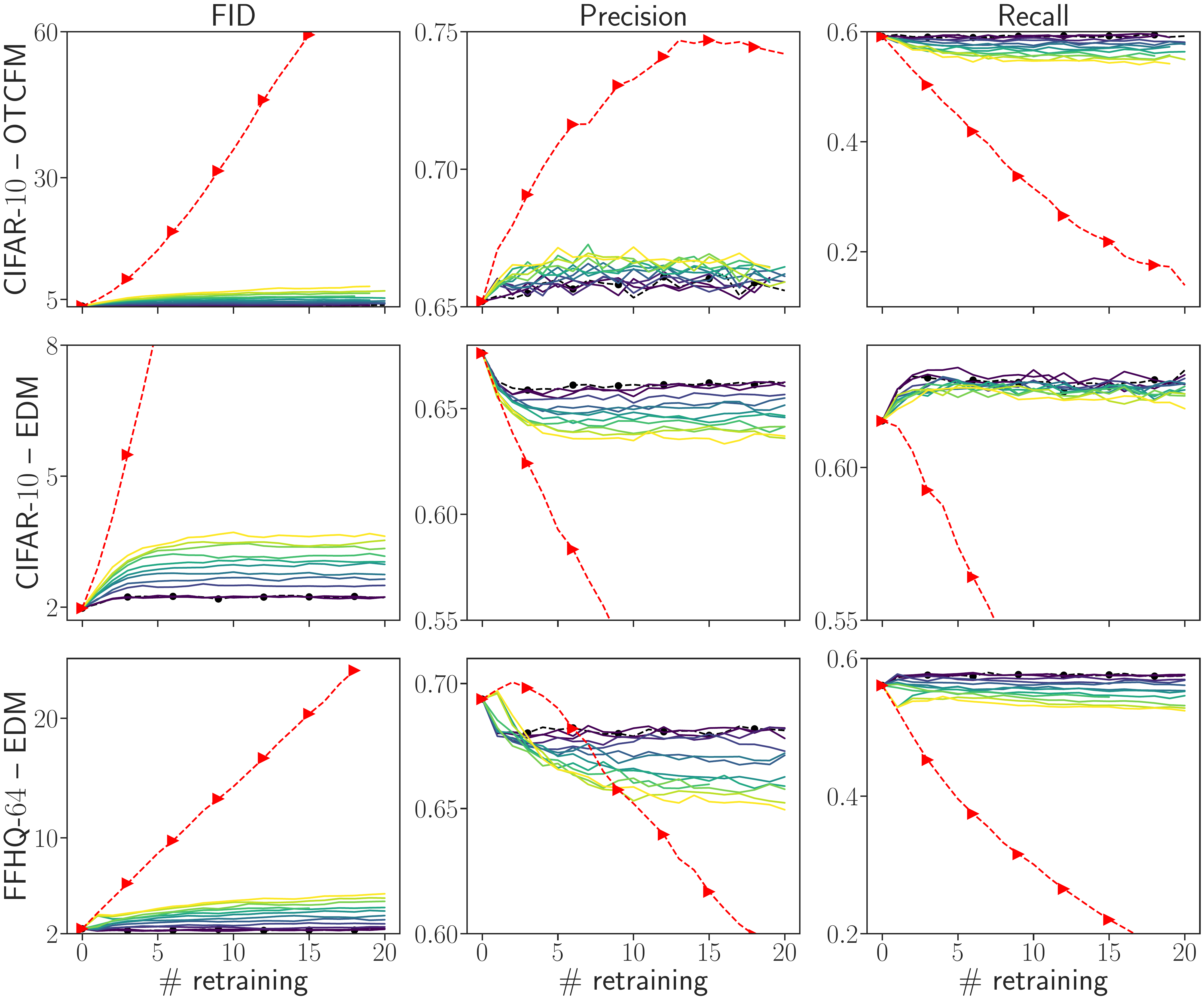}
    \end{center}
    \caption{\small \textbf{FID, precision, and recall of the generative models as a function of the number of retraining for multiple fractions $\lambda$ of generated data,
    $\mathcal{D} = \mathcal{D}_{\mathrm{real}} \cup \{ \tilde{\bx}_i \}_{i=1}^{\floor{\lambda \cdot n}}$, $\tilde{\bx}_i \sim p_{\btheta_t}$.}
    For all models and datasets, only training on synthetic data (dashed red line with triangles) yields divergence.
    For the EDM models on CIFAR-$10$ (middle row), the iterative retraining is stable for all the proportions of generated data from $\lambda=0$ to $\lambda=1$.
    For the EDM on FFHQ-$64$ (bottom row), the iterative retraining is stable if the proportion of used generated data is small enough ($\lambda < 0.5$).
    }
    \label{fig_cifar_ffhq_app}
\end{figure}

\subsection{Details on the Metrics: Precision and Recall}
\label{app_sub_precision_recall}
    \paragraph{High Level Idea}
    The notion of precision and recall used for generative models is different from the one used in standard supervised learning.
    It was first introduced in “Assessing Generative Models via Precision and Recall” \citep{Sajjadi2018} and refined in “Improved precision and recall metric for assessing generative models” \citep{kynkaanniemi2019improved}, which we use in practice.
    Intuitively, at the distribution level, precision and recall measure how the training and the generated distributions overlap. Precision can be seen as a measure of what proportion of the generated samples are contained in the “support” of the training distribution. On the other hand, recall measures what proportion of the training samples are contained in the “support” of the generated distribution.
    If the training distribution and the generated distribution are the same, then precision and recall should be perfect. If there is no overlap—i.e. disjoint between the training distribution and the generated distribution, then one should have zero precision and zero recall.

    \paragraph{Implementation Details}
    In order to adapt these metrics for empirical distributions that have finite samples customary of training deep generative models, multiple numerical tricks are required.  Specifically, for an empirical set of samples, the support of the associated distribution is approximated by taking the union of balls centered at each point with radius determined by the distance to the k-th nearest neighbor. This is generally done in some meaningful feature space (here we use Inception V3 network, \citealt{Heusel2017}).
    For FID, we follow the standard of comparing $50$k generated samples to the $50$k samples in the training set. For precision and recall, with the same sets of samples, we follow the methodology of \citep{kynkaanniemi2019improved} with $k=4$.
        We used the FID, precision and recall implementations of \citet{jiralerspong2023feature} (\url{https://github.com/marcojira/fls}).

\subsection{Sample Visualization}

\def \figsize {0.185}
\def \numberfig {91} 

\begin{figure}[H]
    \rotatebox[origin=l]{90}{\makebox[6em]{Fully Synth.}}
    \begin{subfigure}{\figsize \textwidth}
        \includegraphics[width= \linewidth]{figures/samples_ffhq_edm_fully_synthetic/0/0000\numberfig.png}%
    \end{subfigure}
    \begin{subfigure}{\figsize \textwidth}
        \includegraphics[width= \linewidth]{figures/samples_ffhq_edm_fully_synthetic/5/0000\numberfig.png}%
    \end{subfigure}
    \begin{subfigure}{\figsize \textwidth}
        \includegraphics[width= \linewidth]{figures/samples_ffhq_edm_fully_synthetic/10/0000\numberfig.png}%
    \end{subfigure}
    \begin{subfigure}{\figsize \textwidth}
        \includegraphics[width= \linewidth]{figures/samples_ffhq_edm_fully_synthetic/15/0000\numberfig.png}%
    \end{subfigure}
    \begin{subfigure}{\figsize \textwidth}
        \includegraphics[width= \linewidth]{figures/samples_ffhq_edm_fully_synthetic/20/0000\numberfig.png}%
    \end{subfigure}

    \rotatebox[origin=l]{90}{\makebox[6em]{$\lambda=1$}}
    \begin{subfigure}{\figsize \textwidth}
        \includegraphics[width= \linewidth]{figures/samples_ffhq_edm_fully_synthetic/0/0000\numberfig.png}%
    \end{subfigure}
    \begin{subfigure}{\figsize \textwidth}
        \includegraphics[width= \linewidth]{figures/samples_ffhq_edm_1/5/0000\numberfig.png}%
    \end{subfigure}
    \begin{subfigure}{\figsize \textwidth}
        \includegraphics[width= \linewidth]{figures/samples_ffhq_edm_1/10/0000\numberfig.png}%
    \end{subfigure}
    \begin{subfigure}{\figsize \textwidth}
        \includegraphics[width= \linewidth]{figures/samples_ffhq_edm_1/15/0000\numberfig.png}%
    \end{subfigure}
    \begin{subfigure}{\figsize \textwidth}
        \includegraphics[width= \linewidth]{figures/samples_ffhq_edm_1/20/0000\numberfig.png}%
    \end{subfigure}

    \rotatebox[origin=l]{90}{\makebox[6em]{$\lambda=0.5$}}
    \begin{subfigure}{\figsize \textwidth}
        \includegraphics[width= \linewidth]{figures/samples_ffhq_edm_fully_synthetic/0/0000\numberfig.png}%
    \end{subfigure}
    \begin{subfigure}{\figsize \textwidth}
        \includegraphics[width= \linewidth]{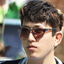}%
    \end{subfigure}
    \begin{subfigure}{\figsize \textwidth}
        \includegraphics[width= \linewidth]{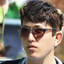}%
    \end{subfigure}
    \begin{subfigure}{\figsize \textwidth}
        \includegraphics[width= \linewidth]{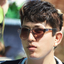}%
    \end{subfigure}
    \begin{subfigure}{\figsize \textwidth}
        \includegraphics[width= \linewidth]{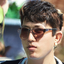}%
    \end{subfigure}

    \rotatebox[origin=l]{90}{\makebox[6em]{$\lambda=0.1$}}
    \begin{subfigure}{\figsize \textwidth}
        \includegraphics[width= \linewidth]{figures/samples_ffhq_edm_fully_synthetic/0/0000\numberfig.png}%
    \end{subfigure}
    \begin{subfigure}{\figsize \textwidth}
        \includegraphics[width= \linewidth]{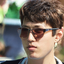}%
    \end{subfigure}
    \begin{subfigure}{\figsize \textwidth}
        \includegraphics[width= \linewidth]{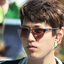}%
    \end{subfigure}
    \begin{subfigure}{\figsize \textwidth}
        \includegraphics[width= \linewidth]{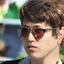}%
    \end{subfigure}
    \begin{subfigure}{\figsize \textwidth}
        \includegraphics[width= \linewidth]{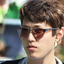}%
    \end{subfigure}

    \rotatebox[origin=l]{90}{\makebox[6em]{No synth. data}}
    \begin{subfigure}{\figsize \textwidth}
        \includegraphics[width= \linewidth]{figures/samples_ffhq_edm_fully_synthetic/0/0000\numberfig.png}%
        \caption{$0$ retrain.}
    \end{subfigure}
    \begin{subfigure}{\figsize \textwidth}
        \includegraphics[width= \linewidth]{figures/samples_ffhq_edm_0/5/0000\numberfig.png}%
        \caption{$5$ retrain.}
    \end{subfigure}
    \begin{subfigure}{\figsize \textwidth}
        \includegraphics[width= \linewidth]{figures/samples_ffhq_edm_0/10/0000\numberfig.png}%
        \caption{$10$ retrain.}
    \end{subfigure}
    \begin{subfigure}{\figsize \textwidth}
        \includegraphics[width= \linewidth]{figures/samples_ffhq_edm_0/15/0000\numberfig.png}%
        \caption{$15$ retrain.}
    \end{subfigure}
    \begin{subfigure}{\figsize \textwidth}
        \includegraphics[width= \linewidth]{figures/samples_ffhq_edm_0/20/0000\numberfig.png}%
        \caption{$20$ retrain.}
    \end{subfigure}

    \caption{\textbf{EDM samples, with generative models iteratively retrained on their own data, only on the synthetic data (top), and on a mix of synthetic and real data.}
        }
    \label{fig_edm_samples}
\end{figure}
\subsection{Quantitative Synthetic Experiments}
\begin{figure}[H]
        \includegraphics[width=1\linewidth]{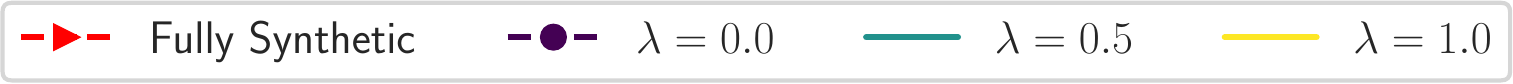}
        \includegraphics[width=1\linewidth]{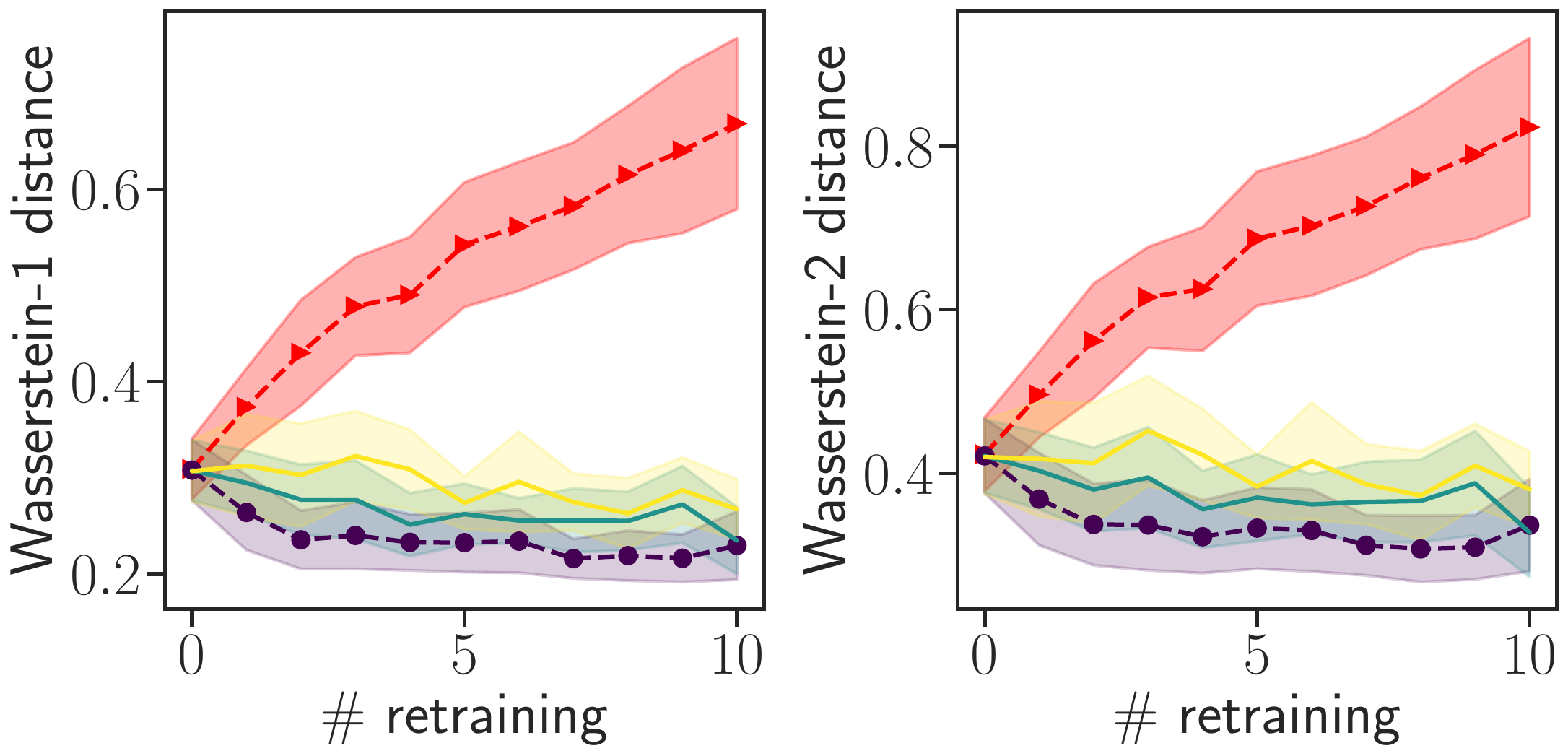}
        \caption{ \small
                \textbf{Stability vs. collapsing of iterative retraining of generative models on their own data.}
        The Wasserstein-$1$ and $2$ distances between the true data distribution and the generated one is displayed as a function of the number of retraining.
        The distances are averaged over $50$ runs, the line corresponds to the mean, and the shaded area to the standard deviation.
        When the model is fully retrained only on its own data (Fully Synthetic, dashed red line with triangles), the distance to the true data distribution diverges.
        When the model is retrained on a mixture of real and synthetic data ($\lambda=0$, $\lambda=0.5$, $\lambda=1$), then the distance between the generated samples and the true data distribution stabilizes.
        }
        \label{fig_two_moons_quantitative}
\end{figure}

        The setting of \Cref{fig_two_moons_quantitative} is the same as for \Cref{fig_two_moons}: we used the two-moons datasets, and learn the samples using OT-CFM \citep{tong2023improving}, a state-of-the art flow model, which corresponds to a continuous normalizing flow, which is an exact likelihood.
        The dataset has $1000$ samples, and all the hyperparameters of the OT-CFM are the default ones from the implementation of \citet{tong2023improving}, \url{https://github.com/atong01/conditional-flow-matching/blob/main/examples/notebooks/SF2M_2D_example.ipynb}.

\subsection{Parameter Convergence}
\begin{figure}[H]
        \includegraphics[width=1\linewidth]{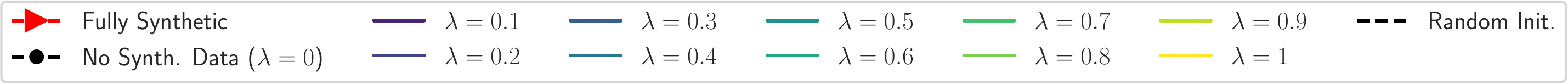}
        \includegraphics[width=1\linewidth]{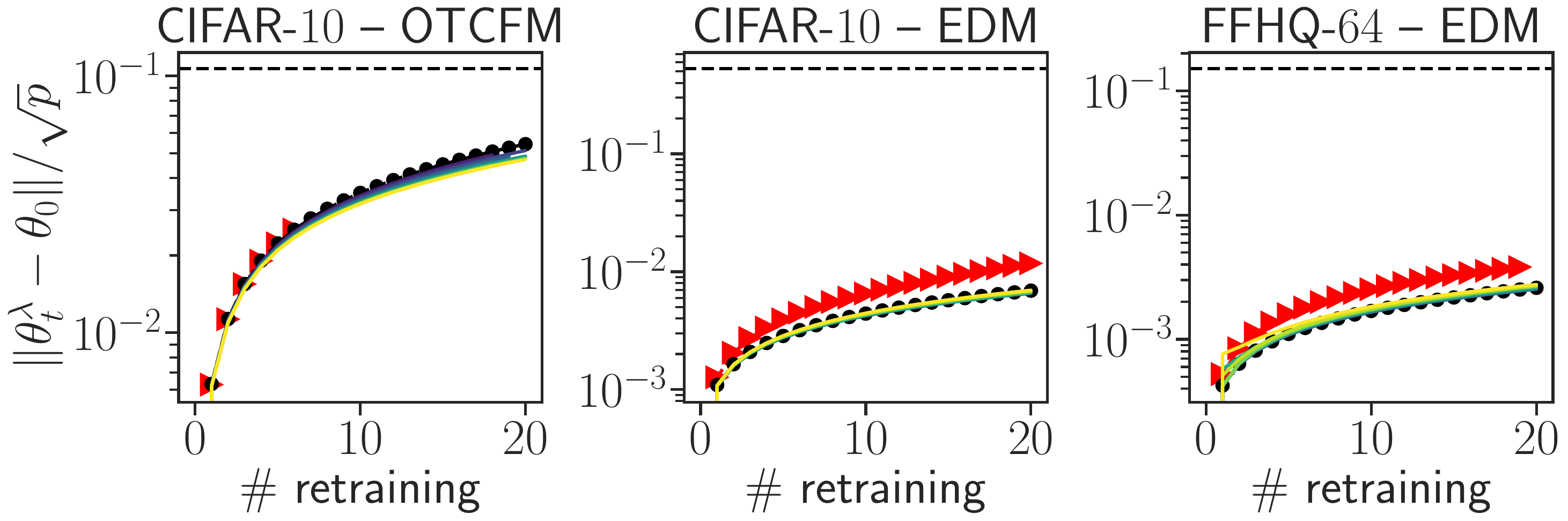}
        \caption{ \small
            \textbf{Convergence in the Parameter Space.}
            The figure displays the Euclidean norm of the difference between the current parameters at retraining $t$ $\btheta_t^{\lambda}$, and the parameters  $\btheta_{t_0}$ of the initial network used to finetune.
            In order to have comparable scales between the models and the dataset, the norm is rescaled by the number of parameters $p$ of each network.
            The dashed black line shows the norm of the difference between the parameter $\btheta_{t_0}$ of the initial network used to finetune, and a random initialization.
            Each column displays a different combinaison of dataset and algorithm.
            One can that fully retraining on synthetic data yields to a larger distance to the initial parameters, than retraining on a mix of synthetic and real data.
        }
        \label{fig_param_convergence}
\end{figure}

    The setting of \Cref{fig_param_convergence} is the same as for \Cref{fig_cifar_ffhq}, we finetune the EDM \citep{Karras2022} and OTCFM models \citep{lipman_flow_2022,tong2023improving} for $20$ epochs and $20$ retraining on CIFAR-$10$ and FFHQ.

\end{document}